\newcommandx{\unsure}[2][1=]{\todo[linecolor=red,backgroundcolor=red!25,bordercolor=red,#1]{#2}}
\newcommandx{\change}[2][1=]{\todo[linecolor=blue,backgroundcolor=blue!25,bordercolor=blue,#1]{#2}}
\newcommandx{\info}[2][1=]{\todo[linecolor=OliveGreen,backgroundcolor=OliveGreen!25,bordercolor=OliveGreen,#1]{#2}}
\newcommandx{\improvement}[2][1=]{\todo[linecolor=Plum,backgroundcolor=Plum!25,bordercolor=Plum,#1]{#2}}
\newcommandx{\thiswillnotshow}[2][1=]{\todo[disable,#1]{#2}}
\newcommandx{\inlinetodo}[2][1=]{\todo[inline, linecolor=Plum,backgroundcolor=Plum!25,bordercolor=Plum,#1]{#2}}
\pgfplotsset{compat=1.15}
\definecolor{uuuuuu}{rgb}{0.26666666666666666,0.26666666666666666,0.26666666666666666}
\definecolor{ffwwzz}{rgb}{1,0.4,0.6}
\definecolor{zzttff}{rgb}{0.6,0.2,1}
\newtheorem{theorem}{Theorem}                % Theorems
\newtheorem{proposition}{Proposition}        % Propositions
\newtheorem{lemma}{Lemma}                     % Lemmas
\theoremstyle{definition}                               % definitions
\newtheorem{definition}{Definition}[section]
\newenvironment{claim}[1]{\par\noindent\underline{Claim:}\space#1}{}
\newenvironment{claimproof}[1]{\par\noindent\underline{Proof:}\space#1}{\hfill $\blacksquare$}
\newcommand{\prob}{\mathcal{P}}                 
\newcommand{\mixingSpace}{\prob^{2}_K}
\newcommand{\mixset}{\mathfrak{L}}
\newcommand{\trueMsr}{\Gamma}
\newcommand{\trueMixMsr}{\Lambda}
\newcommand{\trueCmpMsr}{\gamma}
\newcommand{\trueCmpwgt}{\lambda}
\newcommand{\compKDEf}{\widetilde{f}}
\newcommand{\compKDEGamma}{\psi}
\newcommand{\estMsr}{\widehat{\Gamma}}
\newcommand{\estMixMsr}{\widehat{\Lambda}}
\newcommand{\estCmpMsr}{\widehat{\gamma}}
\newcommand{\estCmpwgt}{\widehat{\lambda}}
\newcommand{\estCmpdensity}{\hat{f}}
\newcommand{\domain}{\mathbb{R}^d}
\newcommand{\rkhs}{\mathcal{H}}
\newcommand{\norm}[1]{\left\lVert#1\right\rVert}
\newcommand{\RKHSnorm}[1]{\lVert#1\rVert_{\mathcal{H}_{G_{\zeta}}}}
\newcommand{\inner}[1]{\langle #1 \rangle}
\newcommand{\nSample}{X = \left \{ x_1, x_2, \cdots x_n \right \}}
\newcommand{\sample}{X}
\newcommand{\mySum}{\sum \limits}
\newcommand{\myBracs}[1]{\left ( #1 \right )}
\newcommand{\myCurls}[1]{\left \{ #1 \right \}}
\newcommand{\kde}{\widehat{f}}
\newcommand{\BayesPart}{\sigma_{Bayes}}
\newcommand{\truePart}{\sigma^*}
\newcommand{\R}{\mathbb{R}}
\newcommand{\N}{\mathbb{N}}
\DeclareMathOperator*{\argmax}{arg\,max}
\DeclareMathOperator*{\argsup}{arg\,sup}
\DeclareMathOperator*{\argmin}{arg\,min}
\newcommand{\actr}{${\textrm{CTR}}$}
\newcommand{\affk}{${\textrm{FFK}}$}
\newcommand{\alnk}{${\textrm{LNK}}$}
\newcommand{\akmn}{${\textrm{KMN}}$}
\newcommand{\akde}{$\mathcal{A}_{\textrm{KDE}}$}
\newcommand{\ectr}{$\mathcal{E}_{\textrm{CTR}}$}
\newcommand{\effk}{$\mathcal{E}_{\textrm{FFK}}$}
\newcommand{\elnk}{$\mathcal{E}_{\textrm{LNK}}$}
\newcommand{\abs}[1]{\vert #1 \vert}
\newcommand{\mynorm}[1]{\vert \vert #1 \vert \vert}
\let\emptyset\varnothing
\begin{document}

% If your paper is accepted and the title of your paper is very long,
% the style will print as headings an error message. Use the following
% command to supply a shorter title of your paper so that it can be
% used as headings.
%
%\runningtitle{I use this title instead because the last one was very long}

% If your paper is accepted and the number of authors is large, the
% style will print as headings an error message. Use the following
% command to supply a shorter version of the authors names so that
% they can be used as headings (for example, use only the surnames)
%
\runningauthor{Vankadara, Bordt, von Luxburg, Ghoshdastidar}

\twocolumn[

\aistatstitle{Recovery Guarantees for Kernel-based Clustering under Non-parametric Mixture Models}
% \aistatstitle{Consistency of kernel-based clustering under non-parametric mixture models}

\aistatsauthor{ 
Leena C. Vankadara{\normalfont\textsuperscript{1}}~~~Sebastian Bordt{\normalfont\textsuperscript{1,2}}~~~Ulrike von Luxburg{\normalfont\textsuperscript{1,2}}
~~~Debarghya Ghoshdastidar{\normalfont\textsuperscript{3}}}

\aistatsaddress{
University of T{\"u}bingen{\normalfont\textsuperscript{1}}
% \And 
% \hspace{-2 cm}IMPRS-IS{\normalfont\textsuperscript{2}}
\And 
Max Planck Institute{}
\\for Intelligent Systems, T{\"u}bingen\normalfont\textsuperscript{2}
\And 
Technical University of \\ Munich{\normalfont\textsuperscript{3}}} ]

\begin{abstract}
    Despite the ubiquity of kernel-based clustering, surprisingly few statistical guarantees exist beyond settings that consider strong structural assumptions on the data generation process. In this work, we take a step towards bridging this gap by studying the statistical performance of kernel-based clustering algorithms under non-parametric mixture models. We provide necessary and sufficient separability conditions under which these algorithms can consistently recover the underlying \textit{true clustering}. Our analysis provides guarantees for kernel clustering approaches without structural assumptions on the form of the component distributions. Additionally, we establish a key equivalence between kernel-based data-clustering and kernel density-based clustering. This enables us to provide consistency guarantees for kernel-based estimators of non-parametric mixture models. Along with theoretical implications, this connection could have practical implications, including in the systematic choice of the bandwidth of the Gaussian kernel in the context of clustering.
    % recovery guarantees for kernel-based clustering algorithms under the framework of non-parametric mixture models. This is the first analysis that provides guarantees for kernel clustering without structural assumptions on the component distributions. Furthermore, we establish a close connection between kernel-based data clustering and kernel density-based clustering. This relationship allows us to 
\end{abstract}

\section{INTRODUCTION}
Clustering refers to the unsupervised task of partitioning a given data sample or the input space into \textit{meaningful} regions. Kernel clustering approaches such as kernel k-means \citep{dhillon2004kernel} and kernel spectral clustering \citep{ng2002spectral} are widely adopted by practitioners, particularly for partitioning non-spherical complex cluster structures. Beyond their good practical behavior, kernel methods are appealing due to their amenability to theoretical analysis. However, as an anomaly, kernel clustering has been elusive to theoretical analysis, in particular, under general non-parametric assumptions on the data generation process. One of the principle sources for this gap between theory and practice had been the lack of a universally accepted characterization of the quality of a clustering. One popular notion of the goodness of clustering is defined as the one that consistently partitions the data space. Consistency is, however, only a necessary condition for clustering algorithms. It simply checks if an algorithm asymptotically converges to a limiting partition. The optimality of this limiting partition is not studied under consistency. As an example, spectral clustering has been shown to be consistent \citep{vonluxburg20008consistency} for any similarity function $k$. However, if one uses a similarity function based on an uninformative kernel such as the identity kernel, then the obtained limiting partition is clearly not guaranteed to be a desirable one. Density based clustering \citep{hartigan1975clustering, hartigan1981consistency, rinaldo2010generalized} is another popular line of work with theoretical backing, where clusters are defined as connected components of high-density regions, referred to as density level sets. The imprecise notion of a high-density region is overcome using the so called cluster-tree approach \citep{chaudhuri2014consistent, sriperumbudur2012consistency}, where a continuum of all level sets is simultaneously considered.

Another systematic approach to overcome the ambiguity concerning the quality of clustering lies in the so called \textit{model-based clustering}, which assumes that the data is generated from a mixture distribution and the goal is to partition the data in congruity with the components that generate the data. However, theoretical analysis of  kernel clustering methods have been confined to settings with parametric distributions \citep{yan2016robustness, couillet2016kernel,vankadara2019optimality}. Parametric assumptions such as the Gaussian mixture setting, where the components are assumed to be normally distributed, are extremely restrictive since the data generated under such assumptions are far from a typical dataset for which kernel clustering algorithms are applicable. In contrast, non-parametric assumptions on the data-generation process can be considerably less restrictive, but kernel clustering algorithms have been elusive to theoretical analysis under such assumptions. A primary hurdle in the analysis of clustering approaches under non-parametric assumptions is due to the issue of identifiability of non-parametric mixture models, that is, non-parametric models may be ambiguously defined. There is limited previous work that presents an analysis of kernel-based clustering algorithms under non-parametric mixture models. \citet{schiebinger2015geometry} provide recovery guarantees for spectral clustering of non-parametric mixtures by analyzing the spectral properties of the Laplacian operator under the assumption that the overlap between the components is small relative to a notion of ``indivisibility'' of the components. The analysis provided in \citet{schiebinger2015geometry} is restricted to that of spectral clustering and considerably different from the analysis in this paper.

\subsection{Contributions} 

\textbf{Non-parameteric kernel clustering.}
 We provide non-parametric conditions for consistency of certain kernel-based clustering algorithms. To the best of our knowledge, these are among the first theoretical guarantees to kernel-based clustering methods without assumptions on the form of the component distributions.
\begin{enumerate}[topsep=0pt]

\item 
We provide an {\bf impossibility result for kernel k-means}: there exists a mixture distribution with arbitrarily large separation between the components such that for finite samples from this distribution kernel k-means fails to recover the underlying clustering.
% given any arbitrarily large separability criterion, there exists a mixture distribution satisfying it, and for finite samples drawn from this mixture, kernel k-means produces an undesirable clustering.
    
\item We establish {\bf sufficient separability conditions} under which kernel-based algorithms such as k-center, farthest-first k-means (FFk-means++), or kernel linkage algorithms can consistently recover the true partition, given finite samples from a mixture distribution.
    
\item We establish {\bf necessary conditions for consistency} of the kernel FFk-means++ and kernel linkage algorithms and show that these separability conditions are optimal, that is, the sufficient conditions match the necessary conditions.
\end{enumerate}

\textbf{Kernel-based data clustering as distribution clustering.} We establish a key equivalence between kernel-based data clustering and kernel-based density clustering. In particular:
\begin{enumerate}[topsep=0pt]
\setcounter{enumi}{3}
   
\item We show that Gaussian kernel-based data clustering is equivalent to density clustering, where, each data point is first represented by a Gaussian probability density function and the densities are then clustered using the maximum mean discrepancy metric (with respect to a Gaussian kernel).

\item In addition to theoretical implications, this connection could also have practical implications in matters such as choosing the bandwidth of the Gaussian kernel for clustering which has not been systematically studied in literature so far. Our analysis reveals that the bandwidth of the kernel used for clustering needs to decrease with $n$ but, perhaps surprisingly, asymptotically remain non-zero.
\end{enumerate}

\textbf{Non-parametric estimation of mixture models.}
Due to this relationship between kernel data clustering and distribution clustering, any standard Gaussian kernel clustering algorithm can be used to define an estimation procedure of the mixture model. 
Therefore, in addition to our primary contributions to kernel clustering, we also make contributions related to non-parametric \textit{estimation} of mixture models.
\begin{enumerate}[topsep=0pt]
\setcounter{enumi}{5}
    \item We provide conditions under which the \textit{estimation procedures} corresponding to the kernel-based clustering algorithms can consistently estimate the true mixture model. 
%   \item We introduce the term ``statistical identifiability'' which is a natural and intuitive notion of identifiability in the context of estimating mixing measures. It allows us to precisely define the inductive bias of an estimation procedure. 
   
%   Essentially, if an algorithm $\mathcal{A}$ is biased toward ,that is, recovers the true mixing measure $\trueMixMsr$, then we say that $\trueMixMsr$ is statistically identifiable with respect to $\mathcal{A}.$
\end{enumerate}
\section{FORMAL SETTING AND BACKGROUND}
\label{sec:prelims}
% \todo{Explain what generating from a mixing measure means.}
% \textbf{True/Planted partition -} Generating a random sample $\nSample$ according to a mixing measure $\trueMixMsr = \sum_{k \in [K]} \trueCmpwgt_k \delta_{\trueCmpMsr_k}$ is equivalent to the following procedure. For each $i \in [n]$, 1) sample the index of the component distribution $\trueCmpMsr_k$ according to $\trueCmpwgt_k$ and 2) generate a sample $x_i$ from $\trueCmpMsr_k$. For any sample sequence generated according to $\trueMixMsr$, we refer to the partition function induced by this process as the true (or planted) partition and use $\truePart_{\sample}$ to denote it. When it is clear from context, we omit the subscript to avoid cumbersome notation.

Consider the Euclidean space $\R^d$ of dimension $d$ as the input domain. Let $\prob$ denote the space of all Borel probability measures on $\R^d$ that are absolutely continuous with respect to the Lebesgue measure. In our analysis, we use the framework of mixing measures to define mixture distributions. This is fairly standard in the analysis of non-parametric mixture models \citep{aragam2018identifiability, holzmann2006identifiability, kimeldorf1970correspondence, nguyen2013convergence, teicher1963identifiability} primarily due to the following reasons:
\begin{itemize}[topsep=0pt]
    \item Arbitrary mixture distributions are not identifiable. Mixing measures allow for the specification of \textit{true components}. Section \ref{def:identifiability} provides a thorough discussion on identifiability of mixture models.
    \item In non-parametric clustering, one typically does not make any assumptions on the form of the component distributions. An elegant way to accomplish this is to allow arbitrary component distributions from $\prob$ and impose restrictions on the set of admissible mixing measures. 
\end{itemize}
Following the notation of \citet{aragam2018identifiability}, we denote the space of all probability distributions (mixing measures) over $\prob$ supported on a finite ($K$) number of elements in $\prob$ by $\mixingSpace.$ Formally, 
$$\mixingSpace = \Bigg \{ {\sum \limits_{k = 1}^K \trueCmpwgt_k \delta_{\trueCmpMsr_k} \textbf{:} \; \trueCmpwgt_k \in \R^{+}, \; \trueCmpMsr_k \in \prob, \; \sum \limits_{k =1 }^K \trueCmpwgt_k = 1 } \Bigg \},$$
where $\delta_{\gamma}$ denotes the point mass concentrated at $\gamma \in \prob$ and $[K] $ denotes the set $\myCurls{1, 2, \cdots K}$ for any $K \in \N.$ Furthermore, assume that the coefficients ($\trueCmpwgt_k$) of the component measures ($\gamma_k$) are bounded away from $0$. Define $m:\mixingSpace \rightarrow \prob$ to be the mapping that uniquely associates a \textbf{mixing measure} to a \textbf{mixture distribution}, that is,
\begin{equation*}
    \forall \; \trueMixMsr \in \prob_K^2 : \trueMixMsr  = \sum \limits_{k = 1}^{K} \trueCmpwgt_k \delta_{\trueCmpMsr_k} \longrightarrow m(\trueMixMsr) = \sum \limits_{k = 1}^{K} \trueCmpwgt_k \trueCmpMsr_k.
\end{equation*}
% This manner of defining a mixture distribution by means of mixing measures is fairly standard in the literature on non-parametric mixture models \citep{aragam2018identifiability, holzmann2006identifiability, kimeldorf1970correspondence, nguyen2013convergence, teicher1963identifiability} primarily due to the following reasons:
The support of a mixing measure $\trueMixMsr$ specifies the true components of the corresponding mixture distribution, $\trueMsr = m(\trueMixMsr).$

We now describe the \textbf{problem setup}. Let $\trueMixMsr = \sum_{k \in [K]} \trueCmpwgt_k \delta_{\trueCmpMsr_k}$ be a mixing measure in $ \mixingSpace$. Consider a finite sample $\nSample$ drawn independently and identically (i.i.d) according to some $\trueMsr = m(\trueMixMsr) = \sum \limits_{k=1}^{K} \trueCmpwgt_k \trueCmpMsr_k$. We denote this by $X \sim \trueMsr^n$. The component measures $\trueCmpMsr_k$ are absolutely continuous with respect to the Lebesgue measure, and therefore admit density functions. We use $f_k$ to denote the density function corresponding to the component measure $\trueCmpMsr_k$ and $f = \sum \limits_{k = 1}^K \trueCmpwgt_k f_k$ to denote the density function corresponding to $\trueMsr.$ Given any density function $h$, we use the term ``probability distribution corresponding to $h$'' to denote the measure $\psi$ which is defined as $\psi_i(A) = \int_{A} h(x) dx$, for any Borel set $A \subseteq \R^d.$

 For any sample $\nSample$, we use a map $\sigma:[n] \rightarrow [K]$ to represent a $K-$partition of $X$ and $c_k(\sigma) = \myCurls{x_i \in \sample: \sigma(i) = k}$ to denote the $k^{\textrm{th}}$ cluster according to $\sigma$ for all $k \in [K]$. When it is clear from context, we drop the dependence on $\sigma$ and simply use $c_k$ to denote $c_k(\sigma)$. Given any $X \sim \trueMsr^n$, the ``planted partition'' and the ``Bayes partition'' are of particular interest.
 
 \textbf{Planted partition.} Observe that, drawing a sample $\nSample$ according to a mixing measure $\trueMixMsr = \sum_{k \in [K]} \trueCmpwgt_k \delta_{\trueCmpMsr_k}$ is equivalent to the following procedure. For each $i \in [n]$, 
 \begin{enumerate}[topsep=0pt]
     \item sample index $k\in [K]$ using the weights $\trueCmpwgt_1,\ldots,\trueCmpwgt_k$, 
     \item generate a sample $x_i$ from $\trueCmpMsr_k$. 
 \end{enumerate}
  We refer to the partition induced by this process as the planted partition and use $\truePart_{\sample}$ or $\truePart_{n}$ to denote it. 
 
\textbf{Bayes partition.} We refer to the mapping $ b^* : \sample \rightarrow [K]$ as the Bayes partition function, given by 
\begin{equation*}
    \BayesPart(x) = \argmax_{k} \trueCmpwgt_k f_k(x).
\end{equation*}
We use $\BayesPart^{X}$ to denote the Bayes partition with respect to a sample $\sample \sim m(\trueMixMsr)^n$ which is defined as the Bayes partition function restricted to $\sample.$

\textbf{Remark.}
In this work, any reference to a sample should be understood as drawn i.i.d according to a mixture distribution $\trueMsr.$

We now describe the main objective of this work: \textbf{clustering of non-parametric mixture models.} 
\begin{center}
    \textit{\textbf{Non-parametric clustering.} Given a finite sample $\nSample$ drawn i.i.d according to $\trueMsr^n$, the central objective of non-parametric, model-based clustering is to recover the planted partition up to a permutation over the labels, $[K]$.}
\end{center}
  Alternatively, one could also be interested in the consistent estimation of the Bayes partition \citep{aragam2018identifiability}. We present our results with respect to the former notion and they can easily be extended to the latter by means of a simple modification of the algorithms. We discuss this in more detail in Section \ref{sec:estimation_and_Bayes_partition}. The primary objective of this paper is to understand the performance of kernel clustering algorithms under the framework of non-parametric clustering. A brief background on kernels is thus warranted for further discourse on our analysis.

\textbf{Background on kernels.} Every symmetric positive definite (p.d) kernel function $g: \domain \times \domain \rightarrow \R$ is associated with a feature map $\phi: \domain \rightarrow \rkhs_g$, where $\rkhs_g$ is a Hilbert space with the inner product $\langle \cdot, \cdot \rangle_{\rkhs_g}$ such that $\inner{\phi(x), \phi(y)}_{\rkhs_g} = g(x,y), \textrm{ } \forall x,y \in \domain$. $\rkhs_g$ is a \textbf{reproducing kernel Hilbert space} (RKHS) if the mapping $f\mapsto f(x)$ is continuous for every $x\in\R^d$, where $f \in \rkhs_g$.  
%
% Consider any learning algorithm, which only demands evaluations of inner products between points from a given sample. The kernel trick uses the property of any p.d kernel function $\inner{\phi(x), \phi(y)} = G(x, y)$ to implicit embed the sample points into the RKHS by replacing the inner products $\inner{\phi(x), \phi(y)}$ demanded by the algorithm with the corresponding, scalar kernel function value $G(x, y)$. \todo{There are several advantages to using kernel-based learning algorithms, including but not limited to. The discussion on utility of kernel-based methods should come in the introduction.}
%
The Hilbert space $\rkhs_g$ corresponding to a kernel $g$ is of independent interest while dealing with probability measures since it admits feature representations referred to as the \textbf{kernel mean embeddings}. For any probability measure $P \in \prob$, the kernel mean embedding with respect to kernel $g$ is defined as $\mu_P(\cdot) = \int_{x \in \domain} g(x, \cdot) dP$, which is an element of $\rkhs_g$. The RKHS norm $\RKHSnorm{\cdot}$ associated with $\rkhs_g$ can be used to define a (semi-)metric between the probability measures. Formally, the maximum mean discrepancy (MMD) between two probability measures $P, Q \in \prob$ with respect to the kernel $g$ is given by $\rho(P, Q) = \RKHSnorm{\mu_P - \mu_Q}$. If $g$ is a characteristic kernel, such as the Gaussian kernel, then $\rho$ is a metric on the space of probability measures $\prob$ \citep{fukumizu2008kernel, sriperumbudur2010hilbert}. In our analysis, we consider the space $\prob$ metrized by the MMD corresponding to a Gaussian kernel function, $g_{\zeta}: \domain \times \domain \rightarrow \R$, where $g_{\zeta}(x, y) = \exp\myBracs{{-\frac{\norm{x - y}^2}{\zeta}}} \; \forall x, y \in \domain$ with bandwidth $ \zeta > 0.$ The MMD metric enjoys several valuable properties, from both a theoretical and practical point of view \citep{gretton2012kernel, muandet2016kernel}. 
% For any given kernel $g$, and samples $X\sim P^m, Y \sim Q^n$ for some $P, Q \in \prob$, the squared MMD between probability measures $P, Q$ can be empirically estimated by 
% \[\widehat{\rho}^2(P, Q)=\frac{1}{m^2} \sum_{x, y \in X} g(x, y) + \frac{1}{n^2} \sum_{x, y \in Y} g(x, y) - \frac{2}{m n} \sum_{x \in X, y \in Y} g(x, y).\]
\textbf{Kernel density estimation} is a popular non-parametric approach for density estimation. Given any $\nSample \sim \trueMsr^n$, the kernel density estimate (KDE) of the density function $f$, with respect to Gaussian kernel $g_\beta$ with bandwidth $\beta > 0$, is given by
\begin{equation}
\label{eqn:kde}
    \kde(x) = \frac{1}{n} \mySum_{i = 1}^{n} \widetilde{f}_{i} \myBracs{x}; \quad \widetilde{f}_{i}(x) = \frac{\exp\myBracs{{-\frac{\norm{x - x_i}^2}{2 \beta^2}}}}{(2 \pi \beta^2)^{d/2}} .
\end{equation}
Let $\estMsr,\compKDEGamma_i\in\prob$ be the probability distributions corresponding to $f, \widetilde{f}_i$ respectively.
Under the following conditions on the \textbf{bandwidth parameter $\beta$},
\begin{equation}
    \label{eq:bandwidth}
    \beta \rightarrow 0, \quad \frac{n \beta^d}{\log n} \rightarrow \infty \; \textrm{as} \; n \rightarrow \infty,
\end{equation}
the kernel density estimate $\hat{f}_n$ converges to the true density $f$ in the $l_{\infty}$ norm \citep{gine2002rates, einmahl2005uniform}. 
%
% \textbf{Remark.} Throughout the paper, we use the notation $\trueMixMsr$ and $\estMixMsr$ to refer to an arbitrary mixing measure in $\mixingSpace$ and its estimator respectively. Similarly, $\trueMsr$ and $\estMsr$ to refer to an arbitrary mixture distribution in $\prob$ and its estimator.
%
\section{RECOVERY GUARANTEES FOR KERNEL-BASED DATA CLUSTERING}
\textbf{Identifiability.} A key theoretical question concerning both estimation and clustering under non-parametric mixture models is that of identifiability, that is, any mixture distribution can be decomposed in infinitely many ways into component distributions \citep{ teicher1963identifiability, holzmann2006identifiability, vandermeulen2015identifiability, miao2016identifiability, aragam2018identifiability}. Therefore, non-parametric clustering and estimation of mixture models are ill-defined, even if the number of components $K$ is assumed to be known. The framework of mixing measures as discussed earlier allow for the specification of the ``true components'' and the ``true planted/Bayes partitions''. For any set of mixing measures $\mixset \subseteq \mixingSpace$, let $m(\mixset)$ denote the set of mixture distributions corresponding to $\mixset$. Clearly, the mapping $\mixset \mapsto m(\mixset)$ is not injective on the whole space $\mixset = \prob^2$ due to general non-identifiability. This motivates the following definition. 
\begin{definition}[\textbf{Identifiablility}]
\label{def:identifiability}
A subset $\mathfrak{L} \subseteq \mixingSpace$ is called identifiable if the map $\mathfrak{L} \mapsto m(\mathfrak{L})$ is injective.
\end{definition}

The most common approach to deal with identifiability is to make restrictive parametric assumptions on the form of the component distributions, for example, Gaussianity, which renders the mixture model identifiable \citep{bruni1985identifiability, teicher1963identifiability}. Recent work by \citet{aragam2018identifiability} uses regularity and separability criteria to achieve identifiability.  Our analysis, inspired by \citet{aragam2018identifiability}, also uses separability criterion to deal with identifiability. However, our analysis differs from theirs on several fronts since we do not impose any regularity conditions on the mixing measures and also consider a statistical approach to identifiability. Moreover, the focus of their paper (identifiability of non-parametric mixture models) is very different from ours, which is providing recovery guarantees for kernel-based clustering approaches.

Any non-parametric analysis of model-based clustering (or estimation) is typically preceded by an identifiability analysis for the mixture models. We do not explicitly study identifiability, that is, identifying a set $\mixset \in \mixingSpace$ for which only one mixing measure can generate a mixture distribution. Instead, given finite samples from the mixture distribution, we provide conditions under which a particular algorithm (is biased toward and hence) recovers the true mixing measure/partition. In our analysis of kernel-based clustering algorithms, we show that under appropriate separability conditions, certain algorithms can consistently recover the planted partition. Specifically, we present and analyze the asymptotic behavior of four different kernel-based clustering algorithms. 
% : k-means, FFk-means++, k-center clustering and agglomerative linkage algorithms. 

\textbf{Algorithms.}
We present a brief description of the algorithms here for completeness and include detailed descriptions in the supplementary. Consider a finite sample $\nSample \sim \trueMsr^n.$
\begin{itemize}[topsep=0pt]
    \item \textbf{k-means (\akmn).} The objective is to find a partition $\widehat{\sigma}:[n] \rightarrow [K]$ such that the sum of squared within cluster distances on $X$ is minimized. We consider the optimal solution to the NP-Hard, k-means problem in our analysis.
    
    \item \textbf {FFk-means$++$ (\affk).} This algorithm is a variant of k-means++ where the initial centers are chosen in a deterministic, farthest-first order. 
    
    \item \textbf{k-center (\actr).} The objective seeks to obtain a k-partition of $X$ such that the maximal radius of the clusters is minimized. The optimal solution to the NP-Hard k-center problem is analyzed.
    
   \item  \textbf{Agglomerative linkage (\alnk).} Given a similarity function (single, average or complete linkage), these algorithms generate a dendrogram establishing a hierarchy of clusters of the data in a bottom up approach, starting out with each point as its own cluster and progressively combining them into larger clusters until there is a single cluster that contains the entire data.
\end{itemize}
Given a positive definite kernel $g:\R^d \times \R^d \rightarrow \R$, the kernelized versions of these algorithms are defined by replacing the Euclidean inner product by the inner product $\inner{\cdot, \cdot}_{g}$ induced by $g$ on the input space $\R^d$, which is given by
\begin{equation*}
    \inner{x_i, x_j}_g =  g(x_i, x_j).
\end{equation*}
% Given a kernel $g$, the kernelized versions of the algorithms implicitly represent each point using feature representations in $\rkhs_g$ and use the corresponding norm $\RKHSnorm{\cdot}$ as the metric.
In this paper, we provide necessary and sufficient separability conditions for the kernel-based clustering algorithms \akmn, \affk, \actr, and \alnk.
\begin{figure}
    \centering
    \includegraphics[width=0.4\textwidth]{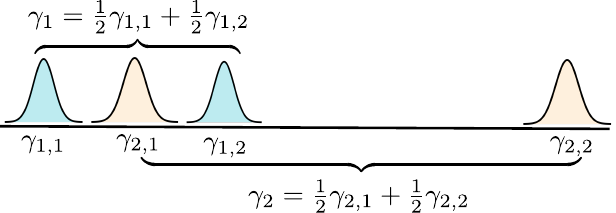}
    \caption{Example to show that simple separation conditions do not suffice to overcome identifiability. As the distribution $\gamma_{2,2}$ moves arbitrarily far from the remaining distributions, the distance between $\gamma_1$ and $\gamma_2$ also increases arbitrarily. However, without additional assumptions, no clustering algorithm can recover the \textbf{desirable clusters} as defined by the \textbf{true components} $\gamma_1$ and $\gamma_2.$}
    \label{fig:identifiability_example}
\end{figure}
\begin{figure*}
    \centering
    \includegraphics[width=\textwidth]{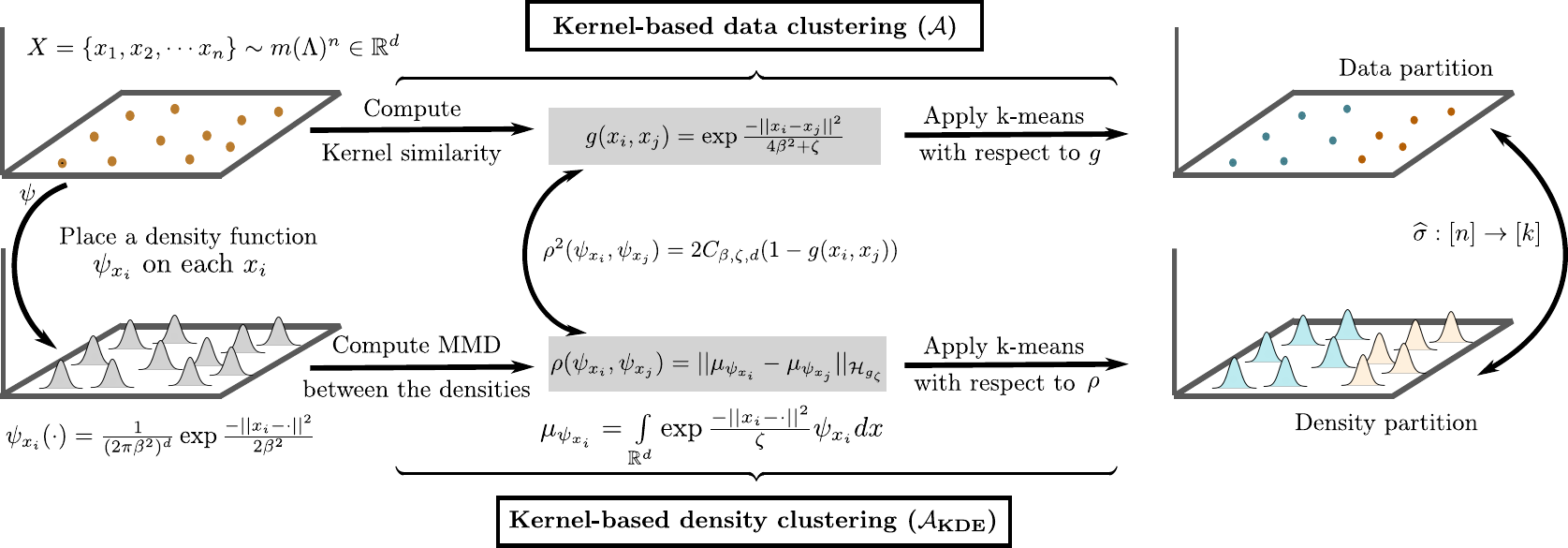}
    \caption{Illustration of the equivalence between kernel-based data clustering and distribution clustering. For Gaussian kernel clustering algorithm $\mathcal{A}$ using a bandwidth parameter $\eta > 0$, decompose $\eta$ to obtain \textit{any} $\beta >0$ and $\zeta >0$ satisfying $ 4 \beta^2 +\zeta = \eta.$ Then $\mathcal{A}$ can equivalently be reformulated as a kernel-based density clustering procedure as shown in the figure.}
    \label{fig:kde_equivalence}
\end{figure*}

\textbf{Main results.}
 For a finite sample $\nSample \sim \trueMsr^n$, recall that $\psi_{x_i}$ refers to the probability distribution corresponding to $\compKDEf_i$ as defined in (\ref{eqn:kde}) with bandwidth parameter $\beta > 0.$ Given a partition $\sigma:[n] \rightarrow [K]$ of probability distributions $\myCurls{\compKDEGamma_i}_{i=1}^n$, we use $\estCmpMsr_{k, \sigma}$ to denote the mean of the $k^{\textrm{th}}$ cluster according to $\sigma$, that is, $$ \estCmpMsr_{k, \sigma} = \frac{1}{\abs{c_k(\sigma)}} \sum \limits_{x_i \in c_k(\sigma)} \psi_{x_i}.$$  Let $\rho$ denote the MMD corresponding to the Gaussian kernel $g_{\zeta}$ with respect to a bandwidth parameter $\zeta > 0$ and let $g$ denote the Gaussian kernel function with the bandwidth parameter $(4 \beta^2 + \zeta).$ For readability, when it is clear from context, we ignore the dependence on the partition function, $\sigma$ in the notation. We now present one of our key results which establishes the impossibility of cluster recovery for kernel k-means. The result states that there is always a mixing measure with arbitrarily large MMD separation between the component distributions for which, given finite samples from this mixture, kernel k-means fails to recover the planted clustering.
 \begin{theorem}[\textbf{Impossibility of clustering recovery by \akmn}]
\label{thm:impossibility}
Fix $\zeta > 0.$ Let $\beta$ be any sequence of bandwidth parameters and let $g$ be the Gaussian kernel with bandwidth parameter $4 \beta^2 +\zeta$. For all $C > 0$, there exists a mixing measure $\trueMixMsr \in \mathcal{P}^2_2$ such that
\begin{equation}
\label{eq:suff_conds_ctr_supp}
    \rho(\trueCmpMsr_1, \trueCmpMsr_{2}) > C\sup \limits_{\substack{x \in \sample_n}} \rho(\compKDEGamma_x, \estCmpMsr_{\sigma^*(x), \sigma^*})
\end{equation}
holds within all finite samples and yet \akmn \textrm{ }with kernel $g$ w.h.p. fails to recover the planted partition $\sigma^*$.
\end{theorem}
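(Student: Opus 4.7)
The plan is to construct, for each prescribed $C > 0$, a two-component mixing measure whose components are well-separated in MMD relative to the within-planted-cluster KDE spread, yet whose planted partition is not recovered by kernel k-means. The construction exploits the canonical failure mode of k-means: a planted cluster consisting of two widely separated sub-pieces is sub-optimal for the sum-of-squared-distances objective, which prefers to merge the closer sub-piece with a foreign cluster. Concretely, I take $K = 2$ with $\trueCmpwgt_1 = \trueCmpwgt_2 = 1/2$, and place three points $p_1, p_2, p_3 \in \R^d$ such that $p_2$ lies close to $p_1$ while $p_3$ lies at a large distance $M$ from both. Then $\trueCmpMsr_1$ is tightly concentrated around $p_1$, and $\trueCmpMsr_2 = \alpha\, \nu_2 + (1-\alpha)\, \nu_3$ where $\nu_2, \nu_3$ are tightly concentrated around $p_2, p_3$ respectively; the parameters $\alpha \in (0,1)$, $M$, the input dimension $d$, and the concentration scale are tuned in terms of $C$, $\zeta$, and the given sequence $\beta$.

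To argue that kernel k-means fails, I invoke the equivalence of Section 3 (Figure 2): kernel k-means with kernel $g$ is equivalent to mean-clustering of the KDE bumps $\myCurls{\compKDEGamma_{x_i}}$ in $\rkhs_{g_\zeta}$ under the MMD, via the identity $\inner{\compKDEGamma_x, \compKDEGamma_y}_{\rkhs_{g_\zeta}} = C_\beta\, g(x,y)$ with $C_\beta = (\zeta/(4\beta^2+\zeta))^{d/2}$. For $M$ large, the sample bumps concentrate around three essentially orthogonal prototypes $\compKDEGamma_{p_1}, \compKDEGamma_{p_2}, \compKDEGamma_{p_3}$. A direct cost comparison shows that the partition merging $\myCurls{p_1, p_2}$ against $\myCurls{p_3}$ strictly beats the planted one $\myCurls{p_1}$ vs $\myCurls{p_2, p_3}$ once $M$ is large; a Pollard-type stability argument then yields that the optimal kernel k-means solution converges, with probability tending to one, to the former partition, misclassifying a constant fraction of samples.

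The separation inequality is verified by direct MMD calculations: as $M \to \infty$ and the components concentrate, $\rho(\trueCmpMsr_1, \trueCmpMsr_2)$ tends to a positive constant of order $(1-\alpha)$, while, using $\estCmpMsr_2 \approx \alpha\, \compKDEGamma_{p_2} + (1-\alpha)\, \compKDEGamma_{p_3}$ and $\|\compKDEGamma_x\|_{\rkhs_{g_\zeta}}^2 = C_\beta$, the supremum $\sup_{x \in \sample_n} \rho(\compKDEGamma_x, \estCmpMsr_{\truePart(x), \truePart})$ concentrates around $\max(\alpha, 1-\alpha)\sqrt{C_\beta}$. The resulting ratio is of order $(1-\alpha)/(\max(\alpha, 1-\alpha)\sqrt{C_\beta})$; choosing $d$ large enough to make $C_\beta$ sufficiently small forces this ratio above $C$, and a Hoeffding-in-Hilbert-space concentration upgrades the inequality to a high-probability statement uniform over finite samples.

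The main obstacle is the joint control of (i) the failure of k-means, which requires enough mass at $p_3$ to distort the sum-of-squared-distances objective, and (ii) the separation ratio, which is most favorable when that mass is small. Calibrating $\alpha$ against the dimension $d$ resolves this tension: for fixed $\alpha \in (0,1)$ the cost-comparison argument is robust, while $C_\beta$ can be driven arbitrarily small via the dimensional dependence.
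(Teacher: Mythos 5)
There is a genuine gap in the verification of the separation condition \eqref{eq:suff_conds_ctr_supp}, and it cannot be patched within your construction. Because you place the far-away mass \emph{inside} the component $\trueCmpMsr_2$, the planted cluster $2$ has geometric diameter $M$, so its KDE bumps sit at \emph{saturated} MMD distance from their cluster mean: for a sample point near $p_3$ (resp.\ $p_2$) one gets $\rho(\compKDEGamma_x,\estCmpMsr_{\sigma^*(x),\sigma^*})\approx \alpha\sqrt{2C_\beta}$ (resp.\ $(1-\alpha)\sqrt{2C_\beta}$) with $C_\beta=(\zeta/(4\beta^2+\zeta))^{d/2}$, while the numerator is at most $(1-\alpha)\sqrt{2}+o(1)$ since MMD with a bounded kernel saturates as $M\to\infty$. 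Hence the ratio in \eqref{eq:suff_conds_ctr_supp} is bounded above by roughly $\frac{1-\alpha}{\max(\alpha,1-\alpha)}\,C_\beta^{-1/2}\le C_\beta^{-1/2}$. Your proposed fix, taking $d$ large so that $C_\beta$ is small, fails on two counts. First, $d$ is fixed by the problem setup (the mixing measure must live on the given $\R^d$); it is not a parameter you may tune. Second, and more fundamentally, the theorem quantifies over \emph{any} bandwidth sequence $\beta$ and requires the inequality within \emph{all} finite samples: if $\beta_n\to 0$ (the regime used everywhere else in the paper and certainly admissible here), then $C_{\beta_n}\to 1$ for every fixed $d$, so for all large $n$ your ratio is at most about $1$, and \eqref{eq:suff_conds_ctr_supp} fails for any $C>1$. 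The k-means-failure half of your argument is fine (and the Pollard-type step is more than needed---exhibiting one strictly cheaper partition w.h.p.\ suffices), but the separation half collapses.

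The paper's construction avoids exactly this tension by reversing the roles: the two sub-bumps, at a \emph{small} distance $r\ll\sqrt{\zeta}$, are placed inside the \emph{heavy} component $\gamma_1$, while the far-away piece is its own component $\gamma_2$ with tiny weight $\lambda_2$. Then every planted cluster has small geometric diameter, so the denominator is bounded by $O\bigl((r+2\epsilon)/\sqrt{\zeta}\bigr)$ \emph{uniformly in} $\beta$ (using $C_\beta\le 1$ and $4\beta^2+\zeta\ge\zeta$), the numerator stays bounded below by a constant, and taking $r$ small yields any prescribed $C$ deterministically over all samples. Kernel k-means still fails because the bounded kernel saturates the far distance, so attaching the few ($\propto\lambda_2$) far points to one sub-bump costs less than keeping the two sub-bumps of $\gamma_1$ together. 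To salvage your approach you would need these two features: the cluster that k-means is forced to split must be narrow at the scale of $\sqrt{\zeta}$, and the far mass must be downweighted as a separate component rather than carried inside a planted cluster.
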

Even though kernel k-means fails to provably recover the planted partition for arbitrarily large separation between the components, there is a sufficient separation between the components beyond which kernel-based k-center, FFk-means++, and hierarchical linkage algorithms can provably and consistently recover the planted partition. 

\begin{theorem}[\textbf{Sufficient conditions for consistency of \actr, \affk, and \alnk}]
\label{thm:consistency_sufficient}
Fix $\zeta > 0.$ Let $\beta$ be any sequence of bandwidth parameters satisfying (\ref{eq:bandwidth}) and let $g$ be the Gaussian kernel with bandwidth parameter $4 \beta^2 +\zeta$. For any $\trueMixMsr \in \prob^2_{K}$, if there exists $\epsilon > 0$ such that
\begin{multline}
\label{eq:suff_conds}
     \mathbb{P}_{X_n}\left(\inf \limits_{k \neq k'} \rho(\trueCmpMsr_k, \trueCmpMsr_{k'}) > 4\sup \limits_{\substack{x \in \sample_n}} \rho(\compKDEGamma_x, \estCmpMsr_{\sigma^*(x), \sigma^*})  + \epsilon \right)\\\stackrel{n \rightarrow \infty}{\longrightarrow} 1,
\end{multline}
 then the algorithms $\mathcal{A}_{\textrm{CTR}}$, $\mathcal{A}_{\textrm{FFK}}$, and $\mathcal{A}_{\textrm{LNK}}$ with kernel $g$ can w.h.p. recover the planted partition $\sigma^\star$.
\end{theorem}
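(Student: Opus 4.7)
The plan is to first reduce, via the Gaussian-kernel/MMD equivalence illustrated in Figure~\ref{fig:kde_equivalence}, to clustering the KDE components $\{\psi_{x_i}\}_{i=1}^n$ under the MMD $\rho$, and then show by a uniform triangle-inequality separation argument that each of the three algorithms outputs the planted partition $\sigma^*$. Throughout, set $R_n := \sup_{x\in X_n}\rho(\psi_x,\hat{\gamma}_{\sigma^*(x),\sigma^*})$ for the ``planted radius'' appearing in~(\ref{eq:suff_conds}).

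Two estimates do essentially all the work. First, for $x,y\in c_k(\sigma^*)$, a triangle inequality through $\hat{\gamma}_{k,\sigma^*}$ gives $\rho(\psi_x,\psi_y)\le 2R_n$. Second, I claim $\rho(\hat{\gamma}_{k,\sigma^*},\gamma_k)=o(1)$ almost surely: since $\hat{\gamma}_{k,\sigma^*}$ is precisely the KDE built from the i.i.d.\ sub-sample drawn from $\gamma_k$, its density converges to $f_k$ uniformly under~(\ref{eq:bandwidth}) \citep{gine2002rates,einmahl2005uniform}, and MMD with a bounded Gaussian kernel is controlled by the $L_1$ distance of the corresponding densities (plus a standard truncation of the Gaussian tails). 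Chaining triangle inequalities through $\gamma_k$ and $\gamma_{k'}$ then yields, for $x\in c_k(\sigma^*),y\in c_{k'}(\sigma^*), k\ne k'$,
\[
\rho(\psi_x,\psi_y)\;\ge\;\rho(\gamma_k,\gamma_{k'})-2R_n-o(1),
\]
which, under~(\ref{eq:suff_conds}), exceeds $2R_n+\epsilon-o(1)$ with probability tending to $1$. Hence, w.h.p., every intra-planted pair of $\psi$'s lies within MMD distance $2R_n$ while every inter-planted pair lies strictly beyond $2R_n$.

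Each algorithm now follows directly from this strict gap. For $\mathcal{A}_{\textrm{CTR}}$, the planted partition equipped with the cluster means $\hat{\gamma}_{k,\sigma^*}$ as centers has max radius $\le R_n$, whereas any non-planted partition contains a cluster merging two true components; for \emph{any} choice of center for that cluster, the max radius is at least half the inter-cluster diameter, i.e.\ $>R_n+\epsilon/2-o(1)$, so the optimum is planted. For $\mathcal{A}_{\textrm{LNK}}$ (single, average, or complete linkage), the strict gap forces every within-cluster merge to precede every between-cluster merge, and the $K$-cluster cut of the dendrogram equals $\sigma^*$. For $\mathcal{A}_{\textrm{FFK}}$, I argue inductively that farthest-first seeding picks one seed per true cluster: once $j<K$ clusters are represented, any already-represented cluster contributes a maximin distance $\le 2R_n$ while any unrepresented cluster contributes $>2R_n$, so the next seed lies in a new cluster. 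The subsequent nearest-seed assignment---and any Lloyd updates thereafter, by the same gap applied to the updated centroids---preserves $\sigma^*$.

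The principal technical obstacle is the auxiliary convergence $\rho(\hat{\gamma}_{k,\sigma^*},\gamma_k)\to 0$: one must verify that the planted-cluster-restricted KDE, built from a random i.i.d.\ sub-sample of size $\sim\lambda_k n$, inherits the uniform consistency granted by~(\ref{eq:bandwidth}), and that $L_\infty$-control of densities transfers to MMD through an $L_1$ bound combined with truncation of the Gaussian tails. Once this estimate is in hand, everything else is bookkeeping through the two triangle inequalities above, and the factor of $4$ in~(\ref{eq:suff_conds}) is exactly what is needed to close the chain ``intra $\le 2R_n <$ inter'' in the linkage step.
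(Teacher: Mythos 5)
Your proposal is correct and shares the paper's overall architecture: pass to the KDE components $\psi_{x_i}$ under the MMD (Lemma \ref{lemma:mmd_kernel_eqv}, Theorem \ref{thm:kde_equivalence}), show that w.h.p.\ every intra-planted pair satisfies $\rho(\psi_x,\psi_y)\le 2R_n$ while every inter-planted pair exceeds $2R_n$ --- the factor $4$ plus the $\epsilon$ slack absorbing the term $2\sup_k\rho(\widehat{\gamma}_{k,\sigma^*},\gamma_k)$ --- and then run algorithm-specific recovery arguments; your FFK seeding induction and the linkage merge-order argument are essentially the paper's Propositions \ref{prop:suff_conds_kmeans++} and \ref{prop:single_linkage_recovery}. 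You deviate in two local steps, both legitimately. For the auxiliary convergence $\rho(\widehat{\gamma}_{k,\sigma^*},\gamma_k)\to 0$, the paper goes from uniform convergence of $\widehat{f}_{k,\sigma^*}$ to $f_k$ via Scheff\'e's theorem to weak convergence, and then invokes the fact that Gaussian MMD metrizes weak convergence (Simon-Gabriel et al., Sriperumbudur et al.); you instead bound MMD directly by the $L_1$ distance of the densities (valid since the Gaussian kernel is bounded by $1$, so unit-ball RKHS functions are uniformly bounded) plus a tail truncation --- more elementary and citation-free, at the price of the tail control that Scheff\'e gives for free. For $\textrm{CTR}$, the paper argues through disjoint $2r$-balls around the planted centers, whereas you compare objective values directly: the planted partition attains $\le R_n$ while any partition merging two components has max radius at least half an inter-pair distance, hence $>R_n$; this is shorter and sound, but it (and your FFK step) tacitly needs all $K$ planted clusters to be non-empty in the sample so that ``no merge'' forces coincidence with $\sigma^*$ --- the paper secures this explicitly with a Hoeffding/binomial argument using that the $\lambda_k$ are bounded away from zero, and you should add that line. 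Finally, the random size of the per-component sub-sample in the KDE consistency step is glossed at about the same level in the paper, so flagging it as the main remaining technicality is fair.
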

The result states that, for recovery, the distance between any two component distributions in MMD ($\rho$) needs to be larger than about twice the maximal within cluster distance in the feature space: the RKHS ($\rkhs_g$) corresponding to the kernel $g$, for clustering defined by the planted partition. The conditions provided here might appear to be weak, but perhaps more consequentially, in Theorem \ref{thm:necessary} we show that under no additional assumptions the constant $1/4$ is in fact necessary and hence cannot be improved for both \affk $\textrm{ }$ and \alnk. 
\begin{theorem}[\textbf{Necessary conditions for \affk \text{ }and \alnk \text{ }to consistently recovery the planted partition}]
\label{thm:necessary}
Fix $\zeta > 0.$ Let $\beta$ be any sequence of bandwidth parameters and let $g$ be the Gaussian kernel with bandwidth parameter $4 \beta^2 +\zeta$. For any $\epsilon > 0$, there exists $\trueMixMsr \in \prob^2_{2}$ such that
\begin{equation}
\label{eq:necc_conds}
    \mathbb{P}_{X_n}\left(\rho(\trueCmpMsr_1, \trueCmpMsr_{2}) > 4\sup \limits_{\substack{x \in \sample_n}} \rho(\compKDEGamma_x, \estCmpMsr_{\sigma^*(x), \sigma^*})  - \epsilon\right)\stackrel{n \rightarrow \infty}{\longrightarrow} 1
\end{equation}and the algorithms $\mathcal{A}_{\textrm{FFK}}$ and $\mathcal{A}_{\textrm{LNK}}$ with kernel $g$ fail to recover the planted partition $\truePart$ with probability approaching $\frac{1}{2}$ and 1, respectively, as $n\to\infty$.
\end{theorem}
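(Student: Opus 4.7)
For each $\epsilon > 0$ the plan is to construct a concrete two-component mixing measure $\trueMixMsr \in \prob^2_2$ that realizes the tight boundary $\rho(\trueCmpMsr_1, \trueCmpMsr_2) \approx 4r$, and then verify failure of $\mathcal{A}_{\textrm{FFK}}$ and $\mathcal{A}_{\textrm{LNK}}$ directly on this instance. The construction makes each component a balanced mixture of two narrow Gaussians whose four sub-cluster centers are placed nearly collinearly on $\R$ as a controlled perturbation of the arithmetic progression $(-r, r, 3r, 5r)$. Concretely, for a scale $r > 0$ chosen small enough that $r^2 \ll \zeta$ (placing the Gaussian MMD in its approximately linear regime, regardless of the sequence $\beta$) and a perturbation $\delta = \delta(\epsilon) > 0$, take centers at $p_1 = -r$, $p_2 = r + \delta$, $p_3 = 3r - \delta$, $p_4 = 5r$, and set $\trueCmpMsr_k = \tfrac{1}{2}\mathcal{N}(p_{2k-1}, \sigma^2) + \tfrac{1}{2}\mathcal{N}(p_{2k}, \sigma^2)$ with equal cluster weights and sub-cluster width $\sigma \to 0$ as $n \to \infty$. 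The key Euclidean distances are $|p_1-p_2| = |p_3-p_4| = 2r + \delta$ (within-cluster), $|p_2-p_3| = 2r - 2\delta$ (smallest cross-cluster), and mean-to-mean gap $4r - \delta$; in the linear regime all of these pass to $\rho$ through a common positive proportionality constant $\sqrt{2/(4\beta^2 + \zeta)}$ up to a controllable error from the Gaussian nonlinearity, preserving all the crucial orderings.

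The first step is to verify (\ref{eq:necc_conds}) asymptotically. A direct linear-regime computation gives $\rho(\trueCmpMsr_1, \trueCmpMsr_2) \approx (4r - \delta)\sqrt{2/(4\beta^2 + \zeta)}$, while as samples concentrate around their sub-cluster centers the empirical quantity $\sup_{x \in X_n} \rho(\compKDEGamma_x, \estCmpMsr_{\truePart(x), \truePart})$ concentrates around $(r + \delta/2)\sqrt{2/(4\beta^2 + \zeta)}$. The discrepancy $4\sup - \rho$ is of order $\delta$, so choosing $\delta$ sufficiently small in terms of $\epsilon$ yields (\ref{eq:necc_conds}) with probability tending to one.

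For $\mathcal{A}_{\textrm{FFK}}$, the first center is the first sample $x_1$, which lies in each of the four sub-clusters with probability $1/4$. If $x_1$ lies in an outer sub-cluster (near $p_1$ or $p_4$), the farthest-first rule selects a point in the opposite planted cluster as the second center (the farthest point from $p_1$ is $p_4$ at distance $6r$, and symmetrically), and Lloyd's refinement converges to the planted partition in one iteration. If $x_1$ lies in an inner sub-cluster, say near $p_2$, the farthest point is near $p_4$ at distance $4r - \delta$; in Lloyd's first reassignment $p_3$ is closer to $p_2$ than to $p_4$ because $2r - 2\delta < 2r + \delta$, so $p_3$ is absorbed into $\{p_1, p_2\}$ and Lloyd's iteration converges at the non-planted partition $(\{p_1, p_2, p_3\}, \{p_4\})$. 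The symmetric case handles $x_1$ near $p_3$, giving an overall failure probability tending to $1/2$. For $\mathcal{A}_{\textrm{LNK}}$, single, average, and complete linkage all first merge the $O(\sigma)$-scale within-sub-cluster pairs, after which the smallest between-sub-cluster distance (for each of the three variants) is attained by the cross-cluster pair $(p_2, p_3)$ at distance $\approx 2r - 2\delta$. This wrong first between-sub-cluster merge propagates, so that cutting the dendrogram at $K = 2$ cannot recover the planted partition; by concentration of the empirical pairwise distances this failure occurs with probability tending to one.

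The main obstacle is bridging the idealized 1D point-mass picture with the actual MMD-based algorithmic behavior: the scale $r$ must be small enough that the Gaussian kernel preserves the crucial Euclidean distance orderings (so the qualitative Lloyd and linkage analyses transfer to MMD), and one must show that the KDE smoothing and the sub-cluster width $\sigma \to 0$ do not perturb the four-point MMD geometry, so that the empirical algorithms behave as predicted by the four-point limit.
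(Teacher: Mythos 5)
Your overall strategy coincides with the paper's own proof: both arguments place four narrow, nearly evenly spaced sub-clusters on the real line, two per component, so that the inner cross-cluster gap is slightly smaller than the within-cluster gaps, the MMD separation sits just below four times the maximal point-to-cluster-mean distance, $\mathcal{A}_{\textrm{FFK}}$ fails precisely when its first center lands in an inner sub-cluster (probability tending to $1/2$), and $\mathcal{A}_{\textrm{LNK}}$ performs the wrong first inter-sub-cluster merge (probability tending to $1$). The paper uses sub-components uniform on intervals of fixed small half-width around $0$, $r$, $2r-K$, $3r-K$; yours sit at $-r$, $r+\delta$, $3r-\delta$, $5r$. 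The case analysis for FFK and the linkage argument are the same in substance.

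The genuine gap is in your choice of sub-components: you take Gaussian bumps of width $\sigma$ with $\sigma \to 0$ as $n \to \infty$, so your witness $\Lambda$ changes with $n$. The theorem demands a single fixed $\Lambda \in \mathcal{P}^2_2$ for which (\ref{eq:necc_conds}) holds with probability tending to one and the algorithms fail asymptotically; a sequence $\Lambda_n$ does not witness this. Nor can you simply freeze $\sigma > 0$: with unbounded Gaussian tails the most extreme sample lies at Euclidean distance of order $\sigma\sqrt{2\log n}$ from its sub-cluster, so $\sup_{x \in X_n} \rho(\psi_x, \widehat{\gamma}_{\sigma^*(x),\sigma^*})$ does not concentrate at $(r+\delta/2)\sqrt{2/(4\beta^2+\zeta)}$ but drifts toward a constant-order saturation value of the bounded MMD; then $4\sup - \epsilon$ eventually exceeds $\rho(\gamma_1,\gamma_2)$, which is of order $r/\sqrt{\zeta}$, so (\ref{eq:necc_conds}) fails, and moreover the farthest-first step would select tail outliers rather than points near $p_4$, invalidating the FFK case analysis. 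The repair is exactly the paper's choice: compactly supported sub-components (uniform on intervals) of fixed width much smaller than $\delta$; with that change your concentration claims and the four-point Lloyd/linkage analysis go through as you describe. A smaller bookkeeping point: $\rho(\gamma_1,\gamma_2)$ is the MMD with kernel $g_\zeta$ applied to the unsmoothed components, so its linear-regime constant is $\sqrt{2/\zeta}$, while distances between the $\psi_x$ carry the constant $\sqrt{2/(4\beta^2+\zeta)}$ times $\bigl(\zeta/(4\beta^2+\zeta)\bigr)^{d/4}$; the mismatch is in the favorable direction for (\ref{eq:necc_conds}), but since $\beta$ is an arbitrary sequence here, your claim of a single common proportionality constant should be corrected accordingly.
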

The proofs for the results appear in the supplementary. For the kernel k-center problem, we can indeed show that the constant in the sufficient conditions (\ref{eq:suff_conds}) can further be improved to $1/3$ when $K=2.$ However, we believe that for any arbitrary $K$, the conditions provided in (\ref{eq:suff_conds}) cannot be further improved. This can be shown for a linear kernel and we leave the more general case of the Gaussian kernel as a conjecture. 
 Our results not only show that certain kernel-based clustering algorithms can exploit separability to recover the planted clustering but also clearly show that under no additional assumptions very strong separability conditions are necessary to obtain recovery guarantees for kernel-based clustering. Furthermore, due to reasons of identifiability, simple separation conditions between the component distributions do not suffice to derive consistent recovery guarantees. For instance consider a simple example of a mixture distribution shown in Figure \ref{fig:identifiability_example}. As $\trueCmpMsr_{2,2}$ moves arbitrarily far from the remaining distributions, the distance between the two component distributions, $\trueCmpMsr_1, \trueCmpMsr_2$ becomes arbitrarily far. However, without additional assumptions, it is not possible for a clustering algorithm to recover the desirable clustering even if we see infinite amount of data. Therefore, the separability conditions on the component distributions are necessarily dependent on the geometric properties of the distribution and not merely on the sample size or the dimension of the input space as it often is in the parametric setting. Our results, providing necessary and sufficient recovery conditions for kernel-based data clustering algorithms (Theorems \ref{thm:impossibility}, \ref{thm:consistency_sufficient} , and \ref{thm:necessary}), are obtained by analyzing an equivalent density/distribution clustering procedure which is considerably easier to analyze. Specifically, this equivalence allows us to exploit the metric geometry of the space of probability measures on the Euclidean space. We now describe this relationship between kernel-based data clustering and kernel-based density clustering.
\begin{figure*}
    \centering
    \includegraphics[width=\textwidth]{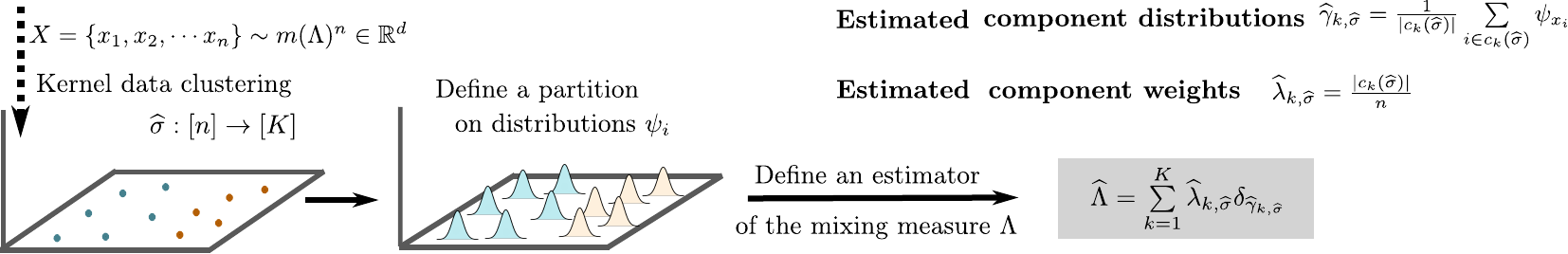}
    \caption{Illustration of the estimation procedure defined with respect to a kernel clustering algorithm. Any Gaussian kernel clustering algorithm can be used to define a partition on component density functions $\myCurls{\psi_i}_{i=1}^n$ which can in turn be used to define an estimator of the mixing measure $\trueMixMsr.$}
    \label{fig:estimation}
\end{figure*}
\section{EQUIVALENCE BETWEEN KERNEL-BASED DATA CLUSTERING AND DISTRIBUTION CLUSTERING}
\label{sec:kde_clustering_equivalence}
In this section, we present a density clustering procedure and describe its close relationship to kernel-based data clustering. Given a finite sample $\sample$, the density clustering procedure clusters the component probability distributions ($\compKDEGamma_i$) of the kernel density estimate with respect to $X$ using MMD as the metric between the distributions. This procedure is illustrated in Figure \ref{fig:kde_equivalence}. As shown in Figure \ref{fig:kde_equivalence}, the partition obtained by this density clustering procedure can be used to define a partition on the sample $X$. This partition can alternatively be obtained by using a simple kernel-based data clustering procedure. We now describe this density clustering procedure, which we denote by \akde.

\textbf{Kernel-based density clustering \akde.}
Consider Gaussian kernel $g_\zeta$ for some $\zeta>0$. Given sample $X \sim \trueMsr^n$:
\begin{itemize}[topsep=0pt]
    \item  Estimate the density of $\trueMsr$ by $\widehat{f} = \frac{1}{n}\sum \limits_{i=1}^n \widetilde{f}_i$ as in (\ref{eqn:kde}) with a bandwidth parameter $\beta>0.$
    
    \item Consider MMD corresponding to the Gaussian kernel $g_{\zeta}$ as the metric between the distributions. Cluster the probability distributions $\myCurls{\compKDEGamma_i}_{i=1}^n$ corresponding to $\myCurls{\widetilde{f}_i}_{i=1}^n$ by means of a distance based clustering algorithm (for example, k-means) to obtain a partition function $\widehat{\sigma}$.
\end{itemize}
This procedure is also illustrated in Figure \ref{fig:kde_equivalence}. We show that for appropriately chosen bandwidth parameters, any kernel-based data clustering algorithm can be equivalently formulated as a density clustering procedure (\akde). Recall that $\beta$ and $\zeta$ are the bandwidth parameters of the Gaussian kernels used in \akde~for kernel density estimation and for defining the MMD respectively. Then, let $g: \domain \times \domain \rightarrow \R$ be the Gaussian kernel with bandwidth parameter $4 \beta^2 + \zeta$. The following lemma shows that the maximum mean discrepancy between the component distributions ($\psi_{i}$) is closely related to kernel evaluations on the input data.
\begin{lemma}[\textbf{MMD between components is closely related to kernel evaluations between input data.}]
\label{lemma:mmd_kernel_eqv}
Given any sample $X \in \R^d$, let the component KDE distributions $(\psi_i)$ be defined in the usual way. For all $x_i, x_j \in X$, 
\begin{equation*}
    \rho^2(\psi_i, \psi_j) = C_{\beta, \zeta, d}(1 - g(x_i, x_j))
\end{equation*}
where $C_{\beta, \zeta, d}$ is a constant dependent on the bandwidths $\beta, \zeta$ and the input dimension $d$.
\end{lemma}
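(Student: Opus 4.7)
The plan is to expand $\rho^2(\psi_i, \psi_j)$ using the standard identity
$\rho^2(P,Q) = \mathbb{E}_{X,X'\sim P}[g_\zeta(X,X')] + \mathbb{E}_{Y,Y'\sim Q}[g_\zeta(Y,Y')] - 2\,\mathbb{E}_{X\sim P, Y\sim Q}[g_\zeta(X,Y)]$,
and then to evaluate each of the three expectations in closed form by exploiting the fact that $\psi_i$ is a Gaussian density in $\R^d$ with mean $x_i$ and covariance $\beta^2 I$ (directly from \eqref{eqn:kde}). The central computation is thus the integral of a Gaussian kernel against a product of two Gaussian densities, which reduces, by the familiar Gaussian convolution identity, to another Gaussian evaluation.

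Concretely, I would first rewrite the Gaussian kernel $g_\zeta(x,y) = \exp(-\|x-y\|^2/\zeta)$ as a constant multiple of a Gaussian density in $x-y$ with covariance $(\zeta/2)I$, so that $g_\zeta(x,y) = (\pi\zeta)^{d/2}\, N(x-y;0,(\zeta/2)I)$. Next, for $X\sim N(x_i,\beta^2 I)$ and $Y\sim N(x_j,\beta^2 I)$ independent, the difference $X-Y$ is Gaussian with mean $x_i-x_j$ and covariance $2\beta^2 I$. Applying the formula $\int N(z;a,AI)\,N(z;b,BI)\,dz = (2\pi(A+B))^{-d/2}\exp\!\bigl(-\|a-b\|^2/(2(A+B))\bigr)$ to the integrand then yields
\begin{equation*}
\mathbb{E}_{X\sim\psi_i, Y\sim\psi_j}[g_\zeta(X,Y)] = \left(\tfrac{\zeta}{\zeta+4\beta^2}\right)^{d/2}\exp\!\left(-\tfrac{\|x_i-x_j\|^2}{\zeta+4\beta^2}\right) = \left(\tfrac{\zeta}{\zeta+4\beta^2}\right)^{d/2}\! g(x_i,x_j),
\end{equation*}
with $g$ the Gaussian kernel of bandwidth $4\beta^2+\zeta$ as defined in the statement.

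Specializing the same calculation to $i=j$ gives $\mathbb{E}_{X,X'\sim\psi_i}[g_\zeta(X,X')] = (\zeta/(\zeta+4\beta^2))^{d/2}$, and similarly for $\psi_j$. Substituting into the three-term MMD expansion and factoring out the common constant yields
\begin{equation*}
\rho^2(\psi_i,\psi_j) = 2\left(\tfrac{\zeta}{\zeta+4\beta^2}\right)^{d/2}\bigl(1 - g(x_i,x_j)\bigr),
\end{equation*}
so the claim holds with $C_{\beta,\zeta,d} = 2(\zeta/(\zeta+4\beta^2))^{d/2}$.

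There is no real obstacle here beyond careful bookkeeping of normalizing constants: the entire argument reduces to two applications of the Gaussian product-integral identity. The one point to double-check is the variance of $\psi_i$, since the KDE formula in \eqref{eqn:kde} uses $2\beta^2$ inside the exponent and $\beta^2$ in the normalizer (so the covariance is indeed $\beta^2 I$); the factor of $4\beta^2$ (rather than $2\beta^2$) in the effective bandwidth of $g$ then arises correctly as $\mathrm{Var}(X) + \mathrm{Var}(Y) + \mathrm{(kernel~variance)}\cdot 2 = 2\beta^2 + 2\beta^2$ added to $\zeta$.
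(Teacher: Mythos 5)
Your proposal is correct and yields exactly the paper's constant, $C_{\beta,\zeta,d}=2\bigl(\zeta/(4\beta^2+\zeta)\bigr)^{d/2}$, in agreement with \eqref{eq:mmd_for_gaussians}. The route is genuinely different from the paper's, though, in how the inner products are evaluated. The paper first computes the kernel mean embeddings $\mu_{\psi_j}$ in closed form and then evaluates $\langle\mu_{\psi_i},\mu_{\psi_j}\rangle_{\mathcal{H}_{g_\zeta}}$ through the Fourier-analytic spectral characterization of the Gaussian RKHS (Theorem \ref{thm:RKHS_characterization}); you instead use the identity $\rho^2(P,Q)=\mathbb{E}\,g_\zeta(X,X')+\mathbb{E}\,g_\zeta(Y,Y')-2\,\mathbb{E}\,g_\zeta(X,Y)$ (equivalently, $\langle\mu_P,\mu_Q\rangle_{\mathcal{H}_{g_\zeta}}=\mathbb{E}_{X\sim P,Y\sim Q}\,g_\zeta(X,Y)$) and evaluate the three expectations directly, exploiting that $\psi_i=\mathcal{N}(x_i,\beta^2 I)$ and hence $X-Y\sim\mathcal{N}(x_i-x_j,2\beta^2 I)$, so everything reduces to the Gaussian product-integral identity. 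Your version is more elementary --- no Fourier transforms or spectral description of $\mathcal{H}_{g_\zeta}$ are needed --- and it extends immediately to any kernel whose Gaussian expectations are available in closed form; what the paper's route additionally delivers is the explicit closed form of the embeddings $\mu_{\psi_j}$ themselves, which are reused elsewhere (e.g.\ in the reassignment step), whereas your computation never produces them. One cosmetic remark: your closing sentence on how the variances combine to give the bandwidth $4\beta^2+\zeta$ is loosely worded (the clean bookkeeping is bandwidth $=2\times$ variance, so $2\,(2\beta^2+\zeta/2)=4\beta^2+\zeta$), but the computation preceding it is correct.
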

We obtain this result by explicitly computing the MMD between the component distributions. Theorem \ref{thm:kde_equivalence} is then an immediate consequence of Lemma \ref{lemma:mmd_kernel_eqv}, which states that every kernel based data-clustering algorithm can equivalently be formulated as a kernel-based density clustering procedure (see Figure \ref{fig:kde_equivalence}).
\begin{theorem}[\textbf{Equivalence between kernel data-clustering and \akde}]
\label{thm:kde_equivalence}
Any Gaussian kernel-based (data) clustering algorithm can equivalently be formulated as a clustering of the component KDE distributions with respect to the MMD metric corresponding to a Gaussian kernel for appropriately chosen bandwidth parameters. 
\end{theorem}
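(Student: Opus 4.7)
The plan is to derive this theorem essentially as an immediate consequence of Lemma~\ref{lemma:mmd_kernel_eqv}, combined with the fact that the Gaussian kernel $g$ with bandwidth $4\beta^2+\zeta$ satisfies $g(x,x)=1$, so the squared RKHS distance for its canonical feature map $\phi$ obeys $\|\phi(x_i)-\phi(x_j)\|_{\mathcal{H}_g}^2 = 2(1-g(x_i,x_j))$. Combining this with the lemma immediately produces the central proportionality
\[
\rho^2(\psi_i,\psi_j) \;=\; \tfrac{C_{\beta,\zeta,d}}{2}\,\|\phi(x_i)-\phi(x_j)\|_{\mathcal{H}_g}^2,
\]
which is the engine driving the equivalence: the squared pairwise dissimilarities used by $\mathcal{A}_{\textrm{KDE}}$ are a fixed positive rescaling of those used by any Gaussian kernel clustering algorithm operating directly on the data with kernel $g$.

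To handle algorithms that reference cluster means (rather than only pairwise distances), I would first upgrade this to an inner-product identity. Since every $\psi_i$ is a translate of the same Gaussian density, translation invariance of $g_{\zeta}$ forces all self-inner-products $\|\mu_{\psi_i}\|_{\mathcal{H}_{g_\zeta}}^2$ to take a common value $A=A(\beta,\zeta,d)$. Polarising Lemma~\ref{lemma:mmd_kernel_eqv} then yields $\langle \mu_{\psi_i},\mu_{\psi_j}\rangle_{\mathcal{H}_{g_\zeta}} = A\, g(x_i,x_j)$ with $A = C_{\beta,\zeta,d}/2$, exhibiting $x_i \mapsto A^{-1/2}\mu_{\psi_i}$ as an isometric copy of the canonical RKHS embedding $x_i\mapsto\phi(x_i)\in\mathcal{H}_g$.

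With this isometry in hand, I would then verify the equivalence algorithm by algorithm. For $\mathcal{A}_{\textrm{CTR}}$ and the linkage variants in $\mathcal{A}_{\textrm{LNK}}$ (single, complete, average), the output depends only on pairwise dissimilarities, and since these are proportional across the two representations the procedures produce identical partitions and dendrograms. For $\mathcal{A}_{\textrm{KMN}}$ and $\mathcal{A}_{\textrm{FFK}}$, I would expand $\rho^2(\psi_j,\widehat{\gamma}_{k,\sigma})$ via bilinearity of $\langle\cdot,\cdot\rangle_{\mathcal{H}_{g_\zeta}}$ and substitute the inner-product identity to obtain
\[
\rho^2(\psi_j,\widehat{\gamma}_{k,\sigma}) \;=\; \tfrac{C_{\beta,\zeta,d}}{2}\,\bigl\|\phi(x_j)-\bar{\phi}_{k,\sigma}\bigr\|_{\mathcal{H}_g}^2,
\]
so the k-means cost is a positive scalar multiple of its data-space counterpart and has the same minimiser; the identical scaling also preserves the greedy farthest-first selection rule used in FFk-means++.

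The main obstacle is the step that promotes the squared-distance identity of Lemma~\ref{lemma:mmd_kernel_eqv} to a bilinear inner-product identity, since the lemma by itself gives only squared MMD, not the individual self- and cross-inner-products. The key is to exploit that Gaussian densities of matched bandwidth have identical self-convolutions with $g_\zeta$, which pins down the common norm $A$; once this is fixed, polarisation determines the inner product uniquely, and the remaining algorithm-level checks reduce to routine algebraic rearrangements.
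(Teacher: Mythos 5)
Your argument is correct and takes essentially the same route as the paper: Theorem \ref{thm:kde_equivalence} is derived directly from Lemma \ref{lemma:mmd_kernel_eqv} by decomposing the clustering bandwidth as $\eta = 4\beta^2 + \zeta$ and observing that each algorithm's decision rule (pairwise dissimilarities, and distances to cluster means for the k-means-type methods) transfers under the resulting proportionality, exactly as in the paper's supplementary reformulations of \akmn, \affk, \actr, and \alnk. The only cosmetic difference is that you recover the inner-product identity $\langle \mu_{\psi_i},\mu_{\psi_j}\rangle_{\mathcal{H}_{g_{\zeta}}} = \tfrac{C_{\beta,\zeta,d}}{2}\, g(x_i,x_j)$ via translation invariance and polarisation, whereas the paper already has it in closed form in (\ref{eqn:inner_kme}) within the proof of Lemma \ref{lemma:mmd_kernel_eqv}, so that step needs no separate argument.
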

This simple result is consequential for practical considerations such as in the choice of bandwidth parameter for kernel data clustering (see Section \ref{sec:discussion}) as well as for theoretical considerations. As it turns out, the density clustering procedure ($\mathcal{A}_{\textrm{KDE}}$) of the component KDE distributions can be used to define an estimator of the true mixing measure, that is, true component distributions and the corresponding weights. The equivalence between the two procedures, therefore, allows us to derive consistency guarantees for the estimators by analyzing the corresponding kernel-based clustering algorithms.

\section{CONSISTENCY OF ESTIMATING MIXTURE MODELS}
\label{sec:estimation_and_Bayes_partition}
\textbf{Estimation procedure.} By an estimation procedure, we refer to any algorithm that takes a sample $\sample$ drawn according to some mixing measure $\trueMixMsr$, that is, $\sample \sim m(\trueMixMsr)^n$ and provides an estimate $\estMixMsr$ of $\trueMixMsr$.

\textbf{Identifiability.} Identifiability is also a key issue for estimation. Similar to our analysis of non-parametric clustering, we circumvent an \textit{explicit} analysis of identifiability. Moreover, in the preceding discussion, identifiability is defined as a deterministic property of a set of mixing measures. We introduce a \textbf{\textit{statistical}} notion of identifiability which can be defined as a property of either a mixing measure or a set of mixing measures. Additionally, in contrast to identifiability, statistical identifiability is defined with respect to an algorithm and therefore, it is a more intuitive and natural definition in the analysis of estimation procedures. Intuitively, the set of all mixing measures which are identifiable with respect to an estimation procedure $\mathcal{E}$ encodes the \textit{\textbf{inductive bias}} of $\mathcal{E}.$ 

\begin{definition}[\textbf{Statistical identifiability}]
 Let $\varrho$ be some metric defined on the space of all mixing measures $\mixingSpace$. A mixing measure $\trueMixMsr$ is statistically identifiable with respect to an estimation procedure $\mathcal{E}$ if the sequence of mixing measures $\left \{ \estMixMsr_n = \mathcal{E}(X_n) \right \}$ converges in probability to $\trueMixMsr$, where given $X_n \sim m(\trueMixMsr)^n$. 
 
 Furthermore, a set of mixing measures $\mixset \subset \mixingSpace$ is said to be statistically identifiable with respect to estimation procedure $\mathcal{E}$ if every mixing measure $\trueMixMsr \in \mixset$ is statistically identifiable with respect to $\mathcal{E}$. 
\end{definition}
\textbf{Remark.} The convergence of the mixing measures can be defined with respect to any metric on $\mixingSpace$. In our results, we show convergence with respect to the Wasserstien distance between mixing measures (see the supplementary for a definition).

\textbf{Estimation procedure based on kernel-based data clustering.}
We describe the procedure to define an estimator of the true mixing measure $\trueMixMsr.$ This procedure is illustrated in Figure \ref{fig:estimation}. As usual, for some $\beta, \zeta > 0$, denote the Gaussian kernel with bandwidth parameter $4 \beta^2 + \zeta >0$ by $g$. The component probability distributions of the KDE $\psi_i$ are also defined in the usual way with respect to the bandwidth parameter $\beta > 0$. Given a sample $\sample_n \sim m(\trueMixMsr)^n$, 
\begin{enumerate}[label=(\alph*),topsep=0pt]
    \item  By means of a kernel-based data clustering procedure, with respect to $g$, obtain a partition $\widehat{\sigma}:[n] \rightarrow [K]$ of $X_n$.
    \item Use $\widehat{\sigma}$ to define a partition of component KDE distributions $\myCurls{\psi_i}_{i=1}^n$. 
    \item The estimator is defined as $\estMixMsr_n = \sum \limits_{i=1}^{K} \estCmpwgt_{k, \widehat{\sigma}} \delta_{\estCmpMsr_{k, \widehat{\sigma}}}$, where $\estCmpMsr_{k, \widehat{\sigma}} = \frac{1}{\abs{c_k}} \sum_{x_i \in c_k} \compKDEGamma_i$ and $\estCmpwgt_{k, \widehat{\sigma}} = \frac{\abs{c_k}}{n}$.

%     \begin{align*}
%        \label{eqn:est_mixmsr}
%        \estMixMsr_n = \sum \limits_{i=1}^{K} \estCmpwgt_{k, \widehat{\sigma}} \delta_{\estCmpMsr_{k, \widehat{\sigma}}} \; \textrm{where, } \\ \; \estCmpMsr_{k, \widehat{\sigma}} = \frac{1}{\abs{c_k}} \sum_{x_i \in c_k} \compKDEGamma_i \; \textrm{ and } \; \estCmpwgt_{k, \widehat{\sigma}} = \frac{\abs{c_k}}{n}.
%    \end{align*}
\end{enumerate}
Let \ectr, \effk, and \elnk \textrm{ }denote the estimation procedures corresponding to the kernel data clustering algorithms, \actr, \affk, and \alnk\textrm{ }respectively: the estimation procedure that uses the respective kernel clustering algorithm to obtain a partition $\widehat{\sigma}$ in (a). Theorem \ref{thm:identifiability_lnk} then immediately follows from the recovery guarantees for the corresponding kernel-based clustering algorithms (Theorem \ref{thm:consistency_sufficient}) and the equivalence between kernel data clustering and density clustering established in Theorem \ref{thm:kde_equivalence}. We show that any mixing measure satisfying the conditions provided in (\ref{eq:suff_conds}) is statistically identifiable with respect to the estimation procedures corresponding to \actr, \affk, and \alnk.
\begin{theorem}[\textbf{Statistical identifiability with respect to \ectr, \effk, and \elnk}]
\label{thm:identifiability_lnk}
Let $\zeta$ and $\beta$ be bandwidth parameters satisfying the conditions provided in Theorem \ref{thm:consistency_sufficient}. Then any $\trueMixMsr \in \prob^2_{K}$ satisfying the conditions provided in (\ref{eq:suff_conds}) is statistically identifiable with respect to \ectr, \effk, and \elnk.
\end{theorem}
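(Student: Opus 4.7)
The plan is to chain together three ingredients: (i) the clustering recovery result of Theorem \ref{thm:consistency_sufficient}, (ii) the equivalence established in Theorem \ref{thm:kde_equivalence}, and (iii) standard consistency of kernel density estimation on each of the (now-known) component sub-samples. First I would fix a mixing measure $\trueMixMsr \in \prob_K^2$ satisfying (\ref{eq:suff_conds}) and recall that, by Theorem \ref{thm:consistency_sufficient}, each of the kernel algorithms \actr, \affk, \alnk with kernel $g$ of bandwidth $4\beta^2 + \zeta$ returns (with probability $\to 1$ as $n\to\infty$) a partition $\hat\sigma$ of $X_n$ which, up to a permutation of labels, coincides with the planted partition $\sigma^\star$. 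On this high-probability event, the cluster $c_k(\hat\sigma)$ consists exactly of the i.i.d.\ sub-sample of $X_n$ drawn from $\trueCmpMsr_k$, of size $|c_k(\hat\sigma)|$ distributed as a Binomial$(n, \trueCmpwgt_k)$.

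Next, I would analyze the two components of the estimator $\estMixMsr_n = \sum_{k=1}^K \estCmpwgt_{k,\hat\sigma}\,\delta_{\estCmpMsr_{k,\hat\sigma}}$ separately. For the weights, the law of large numbers gives $\estCmpwgt_{k,\hat\sigma} = |c_k(\hat\sigma)|/n \to \trueCmpwgt_k$ in probability. For the component estimates, observe that on the recovery event the density associated with $\estCmpMsr_{k,\hat\sigma} = |c_k|^{-1}\sum_{x_i \in c_k} \compKDEGamma_i$ is exactly the Gaussian KDE (with bandwidth $\beta$) built from the sub-sample drawn i.i.d.\ from $\trueCmpMsr_k$. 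Under the bandwidth regime (\ref{eq:bandwidth}), the results of \citet{gine2002rates, einmahl2005uniform} cited in the paper give uniform convergence of this KDE density to $f_k$, which implies $L_1$ convergence of densities and hence total variation convergence of the measures. Since MMD with respect to the bounded Gaussian kernel $g_\zeta$ is dominated by total variation, we obtain $\rho(\estCmpMsr_{k,\hat\sigma}, \trueCmpMsr_k) \to 0$ in probability for every $k \in [K]$.

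Finally, I would assemble these two convergences into Wasserstein convergence of mixing measures (with MMD as the ground metric on $\prob$). Because both $\trueMixMsr$ and $\estMixMsr_n$ are supported on at most $K$ atoms with a common labeling (thanks to the recovered planted partition), the explicit coupling matching $\delta_{\estCmpMsr_{k,\hat\sigma}}$ with $\delta_{\trueCmpMsr_k}$ gives a Wasserstein bound of the form
\begin{equation*}
    W(\estMixMsr_n, \trueMixMsr) \;\le\; \sum_{k=1}^K \min(\trueCmpwgt_k, \estCmpwgt_{k,\hat\sigma})\,\rho(\estCmpMsr_{k,\hat\sigma}, \trueCmpMsr_k) + D\cdot\sum_{k=1}^K |\estCmpwgt_{k,\hat\sigma} - \trueCmpwgt_k|,
\end{equation*}
where $D$ bounds the diameter of the support of the atoms in MMD (the Gaussian kernel is bounded, so MMD is universally bounded by $\sqrt{2}$). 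Both terms on the right tend to $0$ in probability by the above, which gives $\estMixMsr_n \to \trueMixMsr$ in Wasserstein distance in probability, establishing statistical identifiability.

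The step I expect to be the main obstacle is the passage from $L_\infty$-consistency of the per-cluster KDE to MMD-consistency of the associated measure at a uniform enough rate that the error coming from slight imbalances between $|c_k(\hat\sigma)|$ and $n\trueCmpwgt_k$ (themselves random but concentrating) does not spoil the argument; this is handled by noting that uniform density convergence implies total-variation convergence on $\R^d$ and that MMD with a bounded characteristic kernel is upper bounded by (a constant times) total variation, after which everything reduces to combining two independent convergences in probability on the event of correct partition recovery guaranteed by Theorem \ref{thm:consistency_sufficient}.
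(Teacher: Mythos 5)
Your proposal is correct and follows the same overall route as the paper: invoke Theorem \ref{thm:consistency_sufficient} to get w.h.p.\ recovery of the planted partition, establish convergence of the estimated weights $\estCmpwgt_{k,\widehat\sigma}$ and of the component measures $\estCmpMsr_{k,\widehat\sigma}$ to $\trueCmpwgt_k$ and $\trueCmpMsr_k$ (the latter in MMD), and conclude convergence of $\estMixMsr_n$ to $\trueMixMsr$ in the Wasserstein distance over mixing measures. The two places where you deviate are both self-contained replacements of steps the paper outsources. For MMD convergence of the atoms, the paper (inside the proof of Theorem \ref{thm:consistency_sufficient}, which Theorem \ref{thm:identifiability_lnk} reuses) goes from uniform convergence of the per-cluster KDEs to weak convergence via Scheff\'e's theorem and then appeals to the metrization of weak convergence by the Gaussian MMD; you instead go through total variation and the bound $\rho(P,Q)\le \sup_x\sqrt{g_\zeta(x,x)}\,\|P-Q\|_{\mathrm{TV}}$, which is more elementary and works. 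One small caution there: uniform convergence alone does not yield $L_1$ convergence on $\R^d$; what justifies your $L_1$ step is precisely that all functions involved are probability densities, i.e.\ Scheff\'e's theorem, the same device the paper invokes, so make that explicit. For the last step the paper simply cites Lemma A.3 of \citet{aragam2018identifiability}, whereas you prove the implication directly via the explicit coupling bound, using boundedness of the Gaussian MMD (diameter at most $\sqrt{2}$) for the weight-mismatch term and the label-invariance of the Wasserstein distance to absorb the permutation ambiguity in $\widehat\sigma$; this is a correct and arguably cleaner, self-contained finish.
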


% Similarly, the necessary conditions provided in Theorems \ref{thm:necessary} and \ref{thm:impossibility} also apply to the corresponding estimation procedures, ,that is, there exist mixing measures satisfying the specified conditions which are not statistically identifiable with respect to estimation procedures \ectr, \effk, and \elnk.

\textbf{Estimating the Bayes partition.}
For theoretical considerations, it might be of interest to analyze conditions under which kernel-clustering algorithms can consistently estimate the Bayes partition. Given a finite sample $\nSample$, let $\widehat{\sigma}$ denote the partition generated by a kernel clustering algorithm $\mathcal{A}.$ We can define an estimator of the Bayes partition function $\widehat{\sigma}_b:\R^d \rightarrow [K]$ in the natural way:

\begin{equation}
\label{eqn:reassignment_step}
    \widehat{\sigma}_b(x) = \argsup \limits_{k \in [K]} \sum \limits_{j: \widehat{\sigma}(j) = k} G_{\beta}(x, x_j) \stackrel{(*)}{=} \argsup \limits_{k \in [K]} \estCmpwgt_{k, \widehat{\sigma}} \widehat{f}_{k, \widehat{\sigma}}(x)
\end{equation}
where $(*)$ follows from Lemma \ref{lemma:mmd_kernel_eqv}. Due to the equivalence between kernel clustering and density-based clustering, we can show that if a kernel-based algorithm $\mathcal{A}$ can consistently recover the planted partition, then by means of a single reassignment step given by (\ref{eqn:reassignment_step}), the algorithm  consistently recovers the Bayes partition. 

\textbf{Exceptional set.} Given $\trueMixMsr = \sum_{k \in [K]} \trueCmpwgt_k \delta_{\trueCmpMsr_k}$, for any $t > 0$, we define the exceptional set
\begin{equation*}
\label{eq:exceptional_set}
    E(t) = \bigcup \limits_{k \neq k'} \myCurls{x \in \R^d : \abs{\trueCmpwgt_k f_k(x) - \trueCmpwgt_{k'} f_{k'}(x)} \leq t}.
\end{equation*}
\begin{theorem}[\textbf{Estimating the Bayes partition}]
\label{thm:Bayes_consistency}
Let $\zeta$, and $\beta$ be bandwidth parameters satisfying the conditions provided in Theorem \ref{thm:consistency_sufficient}. Let $\trueMixMsr \in \prob^2_{K}$ satisfying the conditions provided in (\ref{eq:suff_conds}). For $\nSample \sim m(\trueMixMsr)^{n}$ and let $\widehat{\sigma}_{b, n}$ be the partition function obtained by \actr, \affk \textrm{ or } \alnk \textrm{ } followed by the reassignment step in (\ref{eqn:reassignment_step}). Then, w.h.p over the samples, there exists a sequence $\myCurls{t_n} \stackrel{n \rightarrow \infty}{\longrightarrow} 0$ such that $\widehat{\sigma}_n(x) = \BayesPart(x)$ for all $x \in \R^d - E_0(t_n).$
\end{theorem}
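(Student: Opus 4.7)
The plan is to combine the cluster-recovery guarantee of Theorem~\ref{thm:consistency_sufficient} with uniform consistency of per-cluster kernel density estimates, and then convert a margin condition on the complement of the exceptional set into a stable-argmax statement.

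First, I would invoke Theorem~\ref{thm:consistency_sufficient}: under the stated hypotheses, with high probability (w.h.p.) the base algorithm (\actr, \affk, or \alnk) returns a partition $\widehat{\sigma}$ equal to the planted partition $\truePart$ up to a relabeling of $[K]$. After relabeling we may assume $c_k(\widehat{\sigma}) = \{x_i : \truePart(i) = k\}$, so $\estCmpwgt_{k,\widehat{\sigma}} = \abs{c_k}/n$ is exactly the empirical fraction of samples drawn from $\trueCmpMsr_k$, and $\widehat{f}_{k,\widehat{\sigma}}(x) = \frac{1}{\abs{c_k}}\sum_{x_i \in c_k}\widetilde{f}_i(x)$ is exactly the KDE built from the $\abs{c_k}$ i.i.d.\ samples drawn from $\trueCmpMsr_k$ alone.

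Second, I would establish uniform consistency of the plug-in weighted component density. Since each $\trueCmpwgt_k$ is bounded away from zero and $\abs{c_k}/n \to \trueCmpwgt_k$ w.h.p.\ by Hoeffding's inequality, we have $\abs{c_k} = \Theta(n)$ w.h.p., so the bandwidth condition (\ref{eq:bandwidth}) continues to hold with $n$ replaced by $\abs{c_k}$. The $l_\infty$ KDE consistency results of \citet{gine2002rates, einmahl2005uniform} applied cluster-wise then give $\|\widehat{f}_{k,\widehat{\sigma}} - f_k\|_\infty \to 0$ w.h.p. Combined with $|\estCmpwgt_{k,\widehat{\sigma}} - \trueCmpwgt_k| \to 0$ and the fact that the densities $f_k$ are bounded on $\R^d$, this yields a deterministic sequence $\epsilon_n \to 0$ such that, w.h.p.,
\begin{equation*}
\sup_{x \in \R^d}\;\bigl|\,\estCmpwgt_{k,\widehat{\sigma}}\,\widehat{f}_{k,\widehat{\sigma}}(x) - \trueCmpwgt_k f_k(x)\,\bigr|\;\le\;\epsilon_n \qquad \text{for every } k \in [K].
\end{equation*}

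Finally, set $t_n := 3\epsilon_n$, which still satisfies $t_n \to 0$. For any $x \notin E(t_n)$ and any $k \neq k^\star := \BayesPart(x)$, the definition of the exceptional set gives $\trueCmpwgt_{k^\star} f_{k^\star}(x) - \trueCmpwgt_k f_k(x) > t_n = 3\epsilon_n$. Two applications of the uniform bound then yield $\estCmpwgt_{k^\star,\widehat{\sigma}}\widehat{f}_{k^\star,\widehat{\sigma}}(x) > \estCmpwgt_{k,\widehat{\sigma}}\widehat{f}_{k,\widehat{\sigma}}(x) + \epsilon_n$, so the argmax defining $\widehat{\sigma}_{b,n}(x)$ in (\ref{eqn:reassignment_step}) is exactly $k^\star$, as required.

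The main obstacle is producing a uniform rate $\epsilon_n$ that is simultaneously valid over all $x \in \R^d$ and all $k \in [K]$; the key technical input here is the $l_\infty$ uniform consistency of Gaussian KDE under the bandwidth regime (\ref{eq:bandwidth}), which I invoke as a black box rather than re-derive. A minor subtlety is that the clustering output is defined only up to a permutation of labels, but this ambiguity is invisible to the argmax-based reassignment in (\ref{eqn:reassignment_step}), since the statement $\widehat{\sigma}_{b,n}(x) = \BayesPart(x)$ is really a statement about the induced partition of $\R^d$ rather than about particular label indices.
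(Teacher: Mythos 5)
Your proposal is correct and follows essentially the same route as the paper's proof: invoke Theorem~\ref{thm:consistency_sufficient} to get $\widehat{\sigma}=\truePart$ w.h.p., use the cluster-wise uniform ($l_\infty$) KDE consistency together with concentration of the empirical weights to control $\sup_x\abs{\estCmpwgt_{k,\widehat{\sigma}}\widehat{f}_{k,\widehat{\sigma}}(x)-\trueCmpwgt_k f_k(x)}$, and then convert the margin on the complement of $E(t_n)$ into stability of the argmax in (\ref{eqn:reassignment_step}). The only cosmetic difference is that the paper takes $t_n$ to be twice the (random) sup-norm estimation error, whereas you take a deterministic $t_n=3\epsilon_n$, and you spell out the relabeling and $\abs{c_k}=\Theta(n)$ points that the paper leaves implicit.
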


\section{DISCUSSION AND FUTURE WORK}
\label{sec:discussion}
We show in this work that certain kernel-based clustering algorithms can exploit separability conditions to overcome identifiability. Our results also show that strong separability conditions are indeed necessary for provable recovery guarantees for clustering methods under non-parametric conditions. To further elaborate, we highlight a conceptually interesting insight from our results, which is surprising on the first glance. Even though kernel-based FFk-means++, which is a relaxation of the NP-Hard kernel k-means can provably recover the true clusters under the sufficient separability conditions (Theorem \ref{thm:consistency_sufficient}), our impossibility result (Theorem \ref{thm:necessary}) shows that the NP-Hard kernel k-means algorithm fails to (provably) do so. This clearly shows that for better recovery guarantees for a clustering algorithm $\mathcal{A}$, in the non-parametric setting, it is essential to thoroughly characterize the inductive bias of the $\mathcal{A}$, that is, the set of mixing measures for which $\mathcal{A}$ can recover the true clustering.

We also established a key connection between kernel data clustering and distribution clustering when using Gaussian kernels and MMD as a metric between the distributions. As a consequence, we can interpret any standard Gaussian kernel clustering algorithm as a distribution clustering procedure. This is particularly useful in theoretical analysis since, for instance, we can analyze kernel clustering algorithms by analyzing the corresponding distribution clustering procedure and vice versa. This connection could also have practical implications on matters such as bandwidth selection for kernel clustering.

\textbf{Extending our results beyond the Gaussian kernel.} We believe that the relationship between kernel data clustering and density clustering can indeed be established for a larger class of kernel functions. For instance, choosing kernel functions from conjugate families is one way in which the analysis could possibly be extended to other kernels, that is, choosing the MMD kernel function as the conjugate prior of the kernel function used for density estimation. It would also be of significant interest to characterize the class of kernels for which the equivalence can be established. However, a detailed study in this direction is reserved for future work. 

\textbf{Bandwidth.} There is little to no literature that provides a systematic approach to bandwidth selection for kernel-based clustering. In contrast to kernel clustering, bandwidth selection is a well studied problem in the context of kernel density estimation \citep{gine2002rates, einmahl2005uniform, goldenshluger2011bandwidth, chacon2013data}. By appropriating bandwidth selection strategies from this work, we provide the following guidance in \textbf{bandwidth selection for kernel-based data clustering}. As it would be expected, our analysis suggests that the bandwidth parameter used for kernel-clustering ($4 \beta^2 + \zeta$) needs to decrease with $n$ since our sufficient conditions for recovery require that $\beta \stackrel{n \rightarrow \infty}{\longrightarrow} 0$. Interestingly, however, it suggests that the bandwidth parameter can asymptotically remain non-zero since $\zeta $ is chosen to be a fixed parameter greater than $0$. We note that these conditions are asymptotic and a more thorough analysis of the convergence rates of the estimators is necessary to provide the rate at which the bandwidth
needs to reduce with sample size. Moreover, the range of the bandwidth parameter, which depends on the constant terms, could be be data-dependent. We conducted few small-sample experiments, and observed that the dependence of clustering performance on bandwidth is complex and requires more thorough investigation. We leave this analysis for future work.

\subsubsection*{Acknowledgements}

This work has been supported by the Baden-W{\"u}rttemberg Stiftung Eliteprogram for Postdocs through the project ``Clustering large evolving networks'', the International Max Planck Research School for Intelligent Systems (IMPRS-IS), the German Research Foundation through the Cluster of Excellence
“Machine Learning – New Perspectives for Science" (EXC 2064/1 number 390727645), and the Tübingen AI Center (FKZ: 01IS18039A).

\printbibliography

@article{aragam2018identifiability,
  title={Identifiability of nonparametric mixture models and bayes optimal clustering},
  author={Aragam, Bryon and Dan, Chen and Xing, Eric P and Ravikumar, Pradeep and others},
  journal={The Annals of Statistics},
  volume={48},
  number={4},
  pages={2277--2302},
  year={2020},
  publisher={Institute of Mathematical Statistics}
}

@article{teicher1963identifiability,
  title={Identifiability of finite mixtures},
  author={Teicher, Henry},
  journal={The Annals of Mathematical statistics},
  pages={1265--1269},
  year={1963},
  publisher={JSTOR}
}

@article{scheffe1947useful,
  title={A useful convergence theorem for probability distributions},
  author={Scheff{\'e}, Henry},
  journal={The Annals of Mathematical Statistics},
  volume={18},
  number={3},
  pages={434--438},
  year={1947},
  publisher={JSTOR}
}

@article{kimeldorf1970correspondence,
  title = {A correspondence between Bayesian estimation on stochastic processes and smoothing by splines},
  author = {Kimeldorf, George S and Wahba, Grace},
  journal = {The Annals of Mathematical Statistics},
  volume = {41},
  number = {2},
  pages = {495--502},
  year = {1970},
  link = {http://www.jstor.org/stable/pdf/2239347.pdf},
  notes = {This seminal paper points out that splines can be interpreted as the posterior mean functions of certain Gaussian processes. The paper does not explicitly make the connection to the use of splines in numerics -- which means that many classic numerical methods in various areas can be interpreted as average-case Gaussian inference rules -- but later authors, most notably Diaconis, make it clear that they got the idea.}
}

@book{wendland2004scattered,
  title={Scattered data approximation},
  author={Wendland, Holger},
  volume={17},
  year={2004},
  publisher={Cambridge university press}
}

@inproceedings{gine2002rates,
  title={Rates of strong uniform consistency for multivariate kernel density estimators},
  author={Gin{\'e}, Evarist and Guillou, Armelle},
  booktitle={Annales de l'Institut Henri Poincare (B) Probability and Statistics},
  volume={38},
  number={6},
  pages={907--921},
  year={2002},
  organization={Elsevier}
}

@article{holzmann2006identifiability,
  title={Identifiability of finite mixtures of elliptical distributions},
  author={Holzmann, Hajo and Munk, Axel and Gneiting, Tilmann},
  journal={Scandinavian Journal of Statistics},
  volume={33},
  number={4},
  pages={753--763},
  year={2006},
  publisher={Wiley Online Library}
}

@article{nguyen2013convergence,
  title={Convergence of latent mixing measures in finite and infinite mixture models},
  author={Nguyen and XuanLong},
  journal={The Annals of Statistics},
  volume={41},
  number={1},
  pages={370--400},
  year={2013},
  publisher={Institute of Mathematical Statistics}
}

@article{vandermeulen2015identifiability,
  title={On the identifiability of mixture models from grouped samples},
  author={Vandermeulen, Robert A and Scott, Clayton D},
  journal={arXiv preprint arXiv:1502.06644},
  year={2015}
}

@article{miao2016identifiability,
  title={Identifiability of normal and normal mixture models with non-ignorable missing data},
  author={Miao, Wang and Ding, Peng and Geng, Zhi},
  journal={Journal of the American Statistical Association},
  volume={111},
  number={516},
  pages={1673--1683},
  year={2016},
  publisher={Taylor \& Francis}
}

@inproceedings{vankadara2019optimality,
  title={On the optimality of kernels for high-dimensional clustering},
  author={Vankadara, Leena C and Ghoshdastidar, Debarghya},
  booktitle={International Conference on Artificial Intelligence and Statistics},
  pages={2185--2195},
  year={2020},
  organization={PMLR}
}

@inproceedings{yan2016robustness,
  title={On robustness of kernel clustering},
  author={Yan, Bowei and Sarkar, Purnamrita},
  booktitle={Advances in Neural Information Processing Systems},
  pages={3098--3106},
  year={2016}
}

@inproceedings{ng2002spectral,
  title={On spectral clustering: Analysis and an algorithm},
  author={Ng, Andrew Y and Jordan, Michael I and Weiss, Yair},
  booktitle={Advances in Neural Information Processing Systems},
  pages={849--856},
  year={2002}
}

@inproceedings{dhillon2004kernel,
  title={Kernel k-means: spectral clustering and normalized cuts},
  author={Dhillon, Inderjit S and Guan, Yuqiang and Kulis, Brian},
  booktitle={Proceedings of the Tenth ACM SIGKDD International Conference on Knowledge Discovery and Data Mining},
  pages={551--556},
  year={2004}
}

@book{hartigan1975clustering,
  title={Clustering algorithms},
  author={Hartigan, John A},
  year={1975},
  publisher={John Wiley \& Sons, Inc.}
}

@article{hartigan1981consistency,
  title={Consistency of single linkage for high-density clusters},
  author={Hartigan, John A},
  journal={Journal of the American Statistical Association},
  volume={76},
  number={374},
  pages={388--394},
  year={1981},
  publisher={Taylor \& Francis Group}
}

@article{rinaldo2010generalized,
  title={Generalized density clustering},
  author={Rinaldo, Alessandro and Wasserman, Larry and others},
  journal={The Annals of Statistics},
  volume={38},
  number={5},
  pages={2678--2722},
  year={2010},
  publisher={Institute of Mathematical Statistics}
}

@inproceedings{sriperumbudur2012consistency,
  title={Consistency and rates for clustering with dbscan},
  author={Sriperumbudur, Bharath and Steinwart, Ingo},
  booktitle={Artificial Intelligence and Statistics},
  pages={1090--1098},
  year={2012}
}

@article{sriperumbudur2010hilbert,
  title={Hilbert space embeddings and metrics on probability measures},
  author={Sriperumbudur, Bharath and Gretton, Arthur and Fukumizu, Kenji and Sch{\"o}lkopf, Bernhard and Lanckriet, Gert},
  journal={Journal of Machine Learning Research},
  volume={11},
  number={Apr},
  pages={1517--1561},
  year={2010}
}

@article{simon2016kernel,
  title={Kernel distribution embeddings: Universal kernels, characteristic kernels and kernel metrics on distributions},
  author={Simon-Gabriel, Carl-Johann and Sch{\"o}lkopf, Bernhard},
  journal={arXiv preprint arXiv:1604.05251},
  year={2016}
}

@inproceedings{fukumizu2008kernel,
  title={Kernel measures of conditional dependence},
  author={Fukumizu, Kenji and Gretton, Arthur and Sun, Xiaohai and Sch{\"o}lkopf, Bernhard},
  booktitle={Advances in Neural Information Processing Systems},
  pages={489--496},
  year={2008}
}

@article{gretton2012kernel,
  title={A kernel two-sample test},
  author={Gretton, Arthur and Borgwardt, Karsten M and Rasch, Malte J and Sch{\"o}lkopf, Bernhard and Smola, Alexander},
  journal={Journal of Machine Learning Research},
  volume={13},
  number={Mar},
  pages={723--773},
  year={2012}
}

@article{muandet2016kernel,
  title={Kernel mean embedding of distributions: A review and beyond},
  author={Muandet, Krikamol and Fukumizu, Kenji and Sriperumbudur, Bharath and Sch{\"o}lkopf, Bernhard},
  journal={arXiv preprint arXiv:1605.09522},
  year={2016}
}

@article{chacon2013data,
  title={Data-driven density derivative estimation, with applications to nonparametric clustering and bump hunting},
  author={Chac{\'o}n, Jos{\'e} E and Duong, Tarn},
  journal={Electronic Journal of Statistics},
  volume={7},
  pages={499--532},
  year={2013},
  publisher={The Institute of Mathematical Statistics and the Bernoulli Society}
}

@article{goldenshluger2011bandwidth,
  title={Bandwidth selection in kernel density estimation: oracle inequalities and adaptive minimax optimality},
  author={Goldenshluger, Alexander and Lepski, Oleg},
  journal={The Annals of Statistics},
  volume={39},
  number={3},
  pages={1608--1632},
  year={2011},
  publisher={Institute of Mathematical Statistics}
}

@article{chaudhuri2014consistent,
  title={Consistent procedures for cluster tree estimation and pruning},
  author={Chaudhuri, Kamalika and Dasgupta, Sanjoy and Kpotufe, Samory and Von Luxburg, Ulrike},
  journal={IEEE Transactions on Information Theory},
  volume={60},
  number={12},
  pages={7900--7912},
  year={2014},
  publisher={IEEE}
}

@article{einmahl2005uniform,
  title={Uniform in bandwidth consistency of kernel-type function estimators},
  author={Einmahl, Uwe and Mason, David},
  journal={The Annals of Statistics},
  volume={33},
  number={3},
  pages={1380--1403},
  year={2005},
  publisher={Institute of Mathematical Statistics}
}

@article{simon2020metrizing,
  title={Metrizing Weak Convergence with Maximum Mean Discrepancies},
  author={Simon-Gabriel, Carl-Johann and Barp, Alessandro and Mackey, Lester},
  journal={arXiv preprint arXiv:2006.09268},
  year={2020}
}

@article{vonluxburg20008consistency,
  title={Consistency of spectral clustering},
  author={von Luxburg, Ulrike and Belkin, Mikhail and Bousquet, Olivier},
  journal={The Annals of Statistics},
  pages={555--586},
  year={2008},
  publisher={JSTOR}
}

@article{couillet2016kernel,
  title={Kernel spectral clustering of large dimensional data},
  author={Couillet, Romain and Benaych-Georges, Florent},
  journal={Electronic Journal of Statistics},
  volume={10},
  number={1},
  pages={1393--1454},
  year={2016},
  publisher={The Institute of Mathematical Statistics and the Bernoulli Society}
}

@article{bruni1985identifiability,
  title={Identifiability of continuous mixtures of unknown Gaussian distributions},
  author={Bruni, Carlo and Koch, Giorgio},
  journal={The Annals of Probability},
  pages={1341--1357},
  year={1985},
  publisher={JSTOR}
}

@article{schiebinger2015geometry,
  title={The geometry of kernelized spectral clustering},
  author={Schiebinger, Geoffrey and Wainwright, Martin J and Yu, Bin and others},
  journal={Annals of Statistics},
  volume={43},
  number={2},
  pages={819--846},
  year={2015},
  publisher={Institute of Mathematical Statistics}
}

\clearpage
\onecolumn

\appendix
\section{Equivalence between Kernel-based data clustering and Kernel-based density clustering.}

\subsection{Proof of Lemma 1}

\begin{lemma}[\textbf{MMD between components is closely related to kernel evaluations between input data.}]
\label{lemma:mmd_kernel_eqv_supp}
Given any sample $X \in \R^d$, let the component kde distributions $(\psi_i)$ be defined in the usual way. For all $x_i, x_j \in X$, 
\begin{equation*}
    \rho^2(\psi_i, \psi_j) = C_{\beta, \zeta, d}(1 - g(x_i, x_j))
\end{equation*}
where $C_{\beta, \zeta, d}$ is a constant dependent on the bandwidths $\beta, \zeta$ and the input dimension $d$.
\end{lemma}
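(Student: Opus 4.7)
The plan is a direct computation. Each component KDE distribution $\psi_i$ is just the Gaussian $\mathcal{N}(x_i, \beta^2 I)$ on $\R^d$, since the density $\widetilde{f}_i$ in (\ref{eqn:kde}) has that form. I would start from the standard expansion of the squared MMD for the kernel $g_\zeta$,
\begin{equation*}
    \rho^2(\psi_i,\psi_j) = \mathbb{E}\bigl[g_\zeta(X,X')\bigr] + \mathbb{E}\bigl[g_\zeta(Y,Y')\bigr] - 2\,\mathbb{E}\bigl[g_\zeta(X,Y)\bigr],
\end{equation*}
where $X,X'\sim\psi_i$ and $Y,Y'\sim\psi_j$ are mutually independent, and reduce the problem to evaluating three Gaussian integrals.

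The key calculation is the following: if $Z\sim\mathcal{N}(\mu,\sigma^2 I)$ in $\R^d$, then by completing the square in the exponent of $\exp(-\|z\|^2/\zeta)\cdot(2\pi\sigma^2)^{-d/2}\exp(-\|z-\mu\|^2/(2\sigma^2))$ and doing a standard Gaussian integral,
\begin{equation*}
    \mathbb{E}\bigl[\exp(-\|Z\|^2/\zeta)\bigr] \;=\; \Bigl(\tfrac{\zeta}{\zeta+2\sigma^2}\Bigr)^{d/2}\exp\!\Bigl(-\tfrac{\|\mu\|^2}{\zeta+2\sigma^2}\Bigr).
\end{equation*}
I would apply this three times: $X-X'\sim\mathcal{N}(0,2\beta^2 I)$, $Y-Y'\sim\mathcal{N}(0,2\beta^2 I)$, and $X-Y\sim\mathcal{N}(x_i-x_j,\,2\beta^2 I)$, i.e.\ always with $\sigma^2=2\beta^2$, so the prefactor $(\zeta/(\zeta+4\beta^2))^{d/2}$ is the same in all three terms.

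Plugging in and collecting terms gives
\begin{equation*}
    \rho^2(\psi_i,\psi_j) \;=\; 2\Bigl(\tfrac{\zeta}{\zeta+4\beta^2}\Bigr)^{d/2}\!\Bigl(1 - \exp\!\Bigl(-\tfrac{\|x_i-x_j\|^2}{4\beta^2+\zeta}\Bigr)\Bigr),
\end{equation*}
and the inner exponential is precisely $g(x_i,x_j)$ by the definition of $g$ as the Gaussian kernel with bandwidth $4\beta^2+\zeta$. Setting $C_{\beta,\zeta,d}=2(\zeta/(\zeta+4\beta^2))^{d/2}$ yields the claim.

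There is no real obstacle; the only thing to be careful about is the bookkeeping of the two bandwidth conventions (the KDE uses variance $\beta^2$ and the Gaussian kernels are parameterized by $\zeta$ and $4\beta^2+\zeta$ in the exponent denominators, not by standard deviations), so I would keep the parameterization explicit throughout the convolution/completing-the-square step to make sure the factor of $4\beta^2+\zeta$ in the final exponential comes out correctly.
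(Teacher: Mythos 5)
Your proof is correct and arrives at exactly the paper's closed form, $\rho^2(\psi_i,\psi_j)=2\bigl(\tfrac{\zeta}{4\beta^2+\zeta}\bigr)^{d/2}\bigl(1-g(x_i,x_j)\bigr)$, i.e.\ $C_{\beta,\zeta,d}=2\bigl(\tfrac{\zeta}{4\beta^2+\zeta}\bigr)^{d/2}$, which matches equation (\ref{eq:mmd_for_gaussians}). However, your route is genuinely different from the paper's. The paper writes $\rho^2(\psi_i,\psi_j)=\norm{\mu_{\psi_i}}^2_{\mathcal{H}_{g_\zeta}}+\norm{\mu_{\psi_j}}^2_{\mathcal{H}_{g_\zeta}}-2\inner{\mu_{\psi_i},\mu_{\psi_j}}_{\mathcal{H}_{g_\zeta}}$, computes the kernel mean embeddings $\mu_{\psi_j}$ in closed form as Gaussian convolutions, and then evaluates the RKHS inner products via the Fourier-analytic spectral characterization of translation-invariant RKHSs (Theorem \ref{thm:RKHS_characterization}), i.e.\ it passes through frequency space. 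You instead use the expectation form $\rho^2(\psi_i,\psi_j)=\mathbb{E}[g_\zeta(X,X')]+\mathbb{E}[g_\zeta(Y,Y')]-2\mathbb{E}[g_\zeta(X,Y)]$ and reduce everything to a single physical-space Gaussian integral, exploiting that $X-X'$, $Y-Y'$, and $X-Y$ are all $\mathcal{N}(\cdot,2\beta^2 I)$ so the prefactor is common to all three terms; your completing-the-square identity $\mathbb{E}[\exp(-\norm{Z}^2/\zeta)]=\bigl(\tfrac{\zeta}{\zeta+2\sigma^2}\bigr)^{d/2}\exp\bigl(-\tfrac{\norm{\mu}^2}{\zeta+2\sigma^2}\bigr)$ is correct. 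Your argument is more elementary and self-contained, needing no RKHS spectral theorem; the paper's detour buys the explicit closed-form mean embeddings $\mu_{\psi_j}$ themselves, which are of independent interest elsewhere in the development. Your caution about the bandwidth conventions (KDE variance $\beta^2$ versus exponent denominators $\zeta$ and $4\beta^2+\zeta$) is exactly the right bookkeeping point and is handled consistently in your computation.
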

\begin{proof}
Squared MMD $\rho^2(\compKDEGamma_i, \compKDEGamma_j)$ with respect to the Gaussian kernel $g_{\zeta}$ can be decomposed as follows:
\begin{equation}
\label{eqn:MMD2_decomp}
    \rho^2(\compKDEGamma_i, \compKDEGamma_j) = \mynorm{\mu_{\compKDEGamma_i}}_{\mathcal{H}_{g_{\zeta}}}^2 + \mynorm{\mu_{\compKDEGamma_j}}_{\mathcal{H}_{g_{\zeta}}}^2 - 2 \inner{\mu_{\compKDEGamma_i}, \mu_{\compKDEGamma_j}}_{\mathcal{H}_{g_{\zeta}}},
\end{equation}
where $\mu_{\compKDEGamma_j}$ denotes the kernel mean embedding of $\compKDEGamma_i$ with respect to the Gaussian kernel function $g_{\zeta}$ which can be computed in closed form as shown in (\ref{eqn:kme_computed}). 
\begin{equation}
\begin{aligned}
\label{eqn:kme_computed}
    \mu_{\compKDEGamma_j}(\cdot) &=  \int \limits_{\R^d} \frac{1}{(2 \pi \beta^2)^{d/2}} \exp{\myBracs{\frac{-\norm{x - \cdot}^2}{\zeta}}} \exp{\myBracs{\frac{-\norm{x_j - \cdot}^2}{2 \beta^2}}} dx \\
    &=(\frac{\zeta}{\zeta + 2 \beta^2})^{d/2} \exp{\myBracs{- \frac{\norm{ x_j - \cdot }^2}{2 \beta^2 + \zeta}}}
\end{aligned}
\end{equation}
By means of theorem \ref{thm:RKHS_characterization} which provides a spectral characterization of the Gaussian RKHS and the inner-product within, we compute $\inner{\mu_{\compKDEGamma_i}, \mu_{\compKDEGamma_j}}_{\mathcal{H}_{g_{\zeta}}}$, $\forall i, j \in [n]$. The computation uses the closed form expressions of Fourier transforms of the kernel function and the kernel mean embeddings of the component kde distributions given in (\ref{eq:fourier_transforms}). The closed form expression for the inner product between the kernel mean embeddings of any two component kde distributions is given in Equation (\ref{eqn:inner_kme}).

\begin{equation}
\begin{aligned}
\label{eq:fourier_transforms}
     \mathcal{F}[g_{\zeta}](\omega)  &= (\frac{\zeta}{2})^{d/2} \exp{\myBracs{\frac{-\norm{\omega}^2 \zeta}{4}}}. \\
     \mathcal{F}[\mu_{\compKDEGamma_i}](\omega)  &= (\frac{\zeta}{2})^{d/2} \exp{\myBracs{\frac{-\norm{\omega}^2 (2 \beta^2 + \zeta)}{4}}} \exp{ \myBracs{ \textbf{i} \sum \limits_{l \in [d]} x_i^l \omega^l}}, \\
\end{aligned}
\end{equation}
where $\textbf{i}$ denotes the imaginary unit and satisfies $\textbf{i}^2 = -1.$
\begin{equation}
\begin{aligned}
    \label{eqn:inner_kme}
    \inner{\mu_{\compKDEGamma_i}, \mu_{\compKDEGamma_j}}_{\mathcal{H}_{g_{\zeta}}} &= \frac{1}{(2 \pi )^{d/2}}\int \frac{\mathcal{F}[\mu_{\compKDEGamma_i}](\omega)\overline{\mathcal{F}[\mu_{\compKDEGamma_i}](\omega)}}{\mathcal{F}[g_{\zeta}](\omega)} d\omega   \\
    &= \left (\frac{\zeta}{4 \beta^2 + \zeta}\right )^{d/2} \exp{ \myBracs{\frac{-\norm{x_i - x_j}^2}{4 \beta^2 + \zeta}}}
    \end{aligned}
\end{equation}
Substituting the values of $\inner{\mu_{\compKDEGamma_i}, \mu_{\compKDEGamma_j}}_{\mathcal{H}_{g_{\zeta}}}$ for any $i, j \in [n]$ we obtain
\begin{equation}\label{eq:mmd_for_gaussians}
    \rho^2(\compKDEGamma_i, \compKDEGamma_j) = 2 \left (\frac{\zeta}{4 \beta^2 + \zeta}\right )^{d/2} (1 - g(x_i, x_j))
\end{equation}
\end{proof}

% \begin{lemma}[\textbf{Mean embeddings of $\estCmpMsr_l$}]
% \label{lemma:kme_computed}
% The kernel mean embedding $\mu_{\hat{\gamma_l}}$of an estimated component measure $\hat{\gamma_l}$ with respect to the Gaussian kernel, $G_{\zeta}$ is given by:
% \begin{equation}
% \label{eqn:kme_computed}
%     \mu_{\hat{\gamma_l}} = (\frac{\zeta}{\zeta + 2 \beta^2})^{d/2} \exp{- \frac{\norm{y - x_l}^2}{2 \beta^2 + \zeta}}
% \end{equation}

% \end{lemma}
% \improvement{Remove lemmas and simply put subheadings in the proof.}
% \begin{lemma}[\textbf{Inner product $\inner{\mu_{\hat{\gamma_l}}, \mu_{\hat{\gamma_{l'}}}}_{\rkhs}$ }]
% The inner product between the kernel mean embeddings is given by
% \end{lemma}
The following result given by \citet{kimeldorf1970correspondence, wendland2004scattered} provides a spectral characterization of the RKHS corresponding to any translation-invariant kernel.

\begin{theorem}[\textbf{Spectral characterization of RKHS. \citep{kimeldorf1970correspondence, wendland2004scattered}}]
\label{thm:RKHS_characterization}
Let $k$ be a translation-invariant kernel on $\R^d$ such that $k(x,y) := \psi(x-y)$ where $\Phi \in C(\R^d) \cap L_1(R^d)$. Then the corresponding RKHS $\rkhs$ is given by
\begin{equation} \label{eq:RKHS-shift-invariant}
\rkhs = \left\{ f \in L_2( \R^d ) \cap C(\R^d):\ \RKHSnorm{f}^2 = \frac{1}{ (2\pi)^{d/2} }  \int \frac{ |\mathcal{F}[f](\omega)|^2 }{\mathcal{F}[\psi](\omega)} d\omega < \infty \right\}, 
\end{equation}
where $| \cdot |$ denotes the magnitude of the enclosed quantity and $\mathcal{F}[f](\omega)$ denotes the Fourier transform of the function $f$. The inner product on $\rkhs$ is defined as $\inner{f, g}_{\rkhs} =  \frac{1}{ (2\pi)^{d/2} } \int \frac{\mathcal{F} [f] (\omega) \overline{\mathcal{F}[g](\omega)}}{\mathcal{F} [\psi](\omega) }  d\omega,\quad f,g \in \rkhs, $
where $\overline{\mathcal{F}[g](\omega)}$ denotes the complex conjugate of $\mathcal{F}[g](\omega)$.
\end{theorem}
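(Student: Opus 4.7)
The plan is to verify the three defining properties of a reproducing kernel Hilbert space for the candidate space $H$ defined on the right-hand side of (\ref{eq:RKHS-shift-invariant}), and then invoke the uniqueness of the RKHS associated with a given positive definite kernel to conclude that $H = \rkhs$. Concretely, I would need to establish: (i) $H$ equipped with the stated inner product is a Hilbert space; (ii) for each $x \in \R^d$, the function $k_x(\cdot) := k(\cdot, x) = \psi(\cdot - x)$ lies in $H$; and (iii) the reproducing property $\inner{f, k_x}_{\rkhs} = f(x)$ holds for every $f \in H$ and every $x \in \R^d$.

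First I would set up the Fourier-analytic framework. Since $\psi \in C(\R^d) \cap L^1(\R^d)$ is positive definite (because $k$ is a p.d.\ kernel), Bochner's theorem together with integrability implies that $\mathcal{F}[\psi]$ is a nonnegative continuous function, which I would assume to be strictly positive on $\R^d$ (this holds, e.g., for the Gaussian kernel that is used elsewhere in the paper, and is the standard hypothesis for this theorem). This makes the weighted $L^2$ expression defining $\RKHSnorm{f}^2$ meaningful. The space $H$ can then be viewed as the inverse Fourier image of the weighted $L^2$ space $L^2(\R^d, d\omega / \mathcal{F}[\psi](\omega))$, which is a Hilbert space by standard arguments, and Plancherel's theorem transports this Hilbert structure to $H$; continuity of elements of $H$ follows from the Cauchy–Schwarz bound described below, which forces $\mathcal{F}[f] \in L^1$ and hence $f$ continuous by the Riemann–Lebesgue theorem applied to the inverse transform.

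Next I would verify $k_x \in H$ and the reproducing property by direct computation. The translation rule gives $\mathcal{F}[k_x](\omega) = e^{-\mathbf{i}\inner{x, \omega}}\mathcal{F}[\psi](\omega)$, so
\[
\RKHSnorm{k_x}^2 = \frac{1}{(2\pi)^{d/2}} \int \frac{|\mathcal{F}[\psi](\omega)|^2}{\mathcal{F}[\psi](\omega)}\, d\omega = \frac{1}{(2\pi)^{d/2}}\int \mathcal{F}[\psi](\omega)\, d\omega = \psi(0) < \infty,
\]
where the last equality is Fourier inversion at $0$. For the reproducing property, for any $f \in H$,
\[
\inner{f, k_x}_{\rkhs} = \frac{1}{(2\pi)^{d/2}} \int \frac{\mathcal{F}[f](\omega)\, e^{\mathbf{i}\inner{x,\omega}}\mathcal{F}[\psi](\omega)}{\mathcal{F}[\psi](\omega)}\, d\omega = \frac{1}{(2\pi)^{d/2}} \int \mathcal{F}[f](\omega)\, e^{\mathbf{i}\inner{x,\omega}}\, d\omega = f(x),
\]
where absolute convergence of this integral is guaranteed by Cauchy--Schwarz, $\int |\mathcal{F}[f]|\, d\omega \leq \RKHSnorm{f}\sqrt{\int \mathcal{F}[\psi]\, d\omega}$, and the last step uses the Fourier inversion formula.

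The main technical obstacle will be the low-level measure-theoretic bookkeeping: justifying that $\mathcal{F}[\psi]$ is strictly positive so that division is well-defined, confirming that $\mathcal{F}[\psi] \in L^1$ so that Fourier inversion applies, and showing that the weighted $L^2$ space really does correspond via $\mathcal{F}^{-1}$ to the continuous $L^2$ functions described on the right-hand side of (\ref{eq:RKHS-shift-invariant}). Once these details are in place, the three RKHS properties above combined with the standard uniqueness theorem for RKHSs (the Moore–Aronszajn theorem) yield the claim.
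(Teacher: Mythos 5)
This statement is a background result that the paper imports from the literature (Kimeldorf--Wahba and Wendland, Theorem~10.12 in the latter); the paper itself gives no proof of it, so there is nothing internal to compare your argument against. Your proposal is the standard and correct proof of this characterization: verify that the candidate space is a Hilbert space of continuous functions, check $k_x\in\mathcal{H}$ via $\RKHSnorm{k_x}^2=\psi(0)$, establish the reproducing property by Fourier inversion (with the Cauchy--Schwarz bound $\int|\mathcal{F}[f]|\,d\omega\le \RKHSnorm{f}\bigl(\int\mathcal{F}[\psi]\,d\omega\bigr)^{1/2}$ doing the work of justifying both the inversion step and the continuity of elements of $\mathcal{H}$), and conclude by Moore--Aronszajn uniqueness. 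Two points worth making explicit if you write this out in full: first, strict positivity of $\mathcal{F}[\psi]$ is genuinely an additional hypothesis beyond what the theorem statement says (integrable positive definite functions can have Fourier transforms with zeros), so you are right to flag it; it holds for the Gaussian kernel, which is the only case the paper uses. Second, in the completeness argument you should note that a Cauchy sequence in the weighted $L^2$ space has a limit $g$ that lies in $L^1\cap L^2$ (using $\mathcal{F}[\psi]\in L^1\cap L^\infty$), so its inverse transform is a continuous $L^2$ function and hence remains in the candidate space --- this is the one step where the identification of $\mathcal{H}$ with the inverse Fourier image of the weighted $L^2$ space could silently fail if left unexamined.
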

\subsection{Proof of Theorem 4}
\textbf{Theorem \ref{thm:kde_equivalence}} immediately follows from Lemma \ref{lemma:mmd_kernel_eqv}. 
For any data clustering algorithm with respect to the Gaussian kernel $\eta > 0$, decompose $\eta$ into any two positive quantities $\beta, \zeta > 0$ satisfying $\eta = 4 \beta^2 + \zeta$. Due to Lemma \ref{lemma:mmd_kernel_eqv}, the kernel clustering algorithm equivalently defines a clustering of the component kde distributions $\myCurls{\compKDEGamma_i}_{i=1}^n.$ 
% \subsection{Inner product between 1-dimensional Gaussians}

% We often need the inner product between two 1-dimensional Gaussian distributions $\mathcal{N}(\mu_1,\beta^2)$ and $\mathcal{N}(\mu_2,\beta^2)$ in the Hilbert space $\mathcal{H}_{g_\zeta}$. It holds that
% \begin{equation*}
% <\mu_\mathbb{P},\mu_\mathbb{Q}>_{\mathcal{H}_{g_\zeta}}
% =\int g_\zeta(x,y) d\mathbb{P}(x)d\mathbb{Q}(y)
% \end{equation*}
% and hence
% \begin{equation}
% \begin{split}
% &<\mu_{\mathcal{N}(\mu_1,\beta^2)},\mu_{\mathcal{N}(\mu_2,\beta^2)}>_{\mathcal{H}_{g_\zeta}}\\[6pt]
% =&\frac{1}{2\pi\beta^2}\int e^{\frac{-(x-y)^2}{\zeta}}e^{\frac{-(x-\mu_1)^2}{2\beta^2}}e^{\frac{-(x-\mu_2)^2}{2\beta^2}}dxdy\\[6pt] =&\sqrt{\frac{\zeta}{4\beta^2+\zeta}}e^\frac{-(\mu_1-\mu_2)^2}{4\beta^2+\zeta}.
% \end{split}
% \end{equation}
% Consequently, the MMD between two 1-dimensional Gaussian distributions with variance $\beta^2$ is given by
% \begin{equation}\label{eq:mmd_for_gaussians}
%     \rho^2\left(\mu_{\mathcal{N}(\mu_1,\beta^2)},\mu_{\mathcal{N}(\mu_2,\beta^2)}\right)\\[6pt]
%     =2\sqrt{\frac{\zeta}{4\beta^2+\zeta}}\left(1-e^\frac{-(\mu_1-\mu_2)^2}{4\beta^2+\zeta}\right).
% \end{equation}

\section{Algorithms}
For completeness, we briefly describe the kernel-based clustering algorithms (\akmn, \actr, \affk, and \alnk) here. In each of the algorithms, we describe the standard kernel data clustering procedure as well as the equivalent kernel density clustering procedures (see Theorem \ref{thm:kde_equivalence}). The component kde distributions are defined in the usual way with respect to the bandwidth parameter $\beta>0$ and $\rho$ is defined with respect to the Gaussian kernel with bandwidth parameter $\zeta > 0.$
\newpage
\subsection{Kernel k-means (\akmn)}\label{sec:akm}
\par\noindent\rule{\textwidth}{0.4pt}
\vspace{-7mm}
\begin{center}
    \textbf{Algorithm - Kernel k-means}
\end{center}
\vspace{-5mm}
\par\noindent\rule{\textwidth}{0.4pt}
\begin{itemize}
    \item Given: A sample $\nSample \subset \R^d$ and for some $\beta, \zeta > 0$ the Gaussian kernel function $g:\R^d \times \R^d \rightarrow \R$ with bandwidth parameter $4 \beta^2 + \zeta.$
    
    \item Find the partition 
\end{itemize}
     
    \begin{equation}
\label{eq:kmeans_clustering}
    \widehat{\sigma} = \argmax \limits_{\sigma:[n] \rightarrow [K] } \sum \limits_{k \in [K]} \sum \limits_{i, j \in c_k} g(x_i, x_j) = \argmin \limits_{\sigma:[n] \rightarrow [K] } \sum \limits_{k \in [K]} \sum \limits_{i \in c_k} \rho(\mu_{\compKDEGamma_i} , \frac{1}{\abs{c_k}} \sum \limits_{j \in c_k} \mu_{\compKDEGamma_j})^2
\end{equation}
\subsection{FFk-means++ (\affk)}
\label{sec:A_FFK}
\par\noindent\rule{\textwidth}{0.4pt}
\vspace{-7mm}
\begin{center}
    \textbf{Algorithm - Farthest first Kernel k-means ++}
\end{center}
\vspace{-5mm}
\par\noindent\rule{\textwidth}{0.4pt}
\vspace{2mm}
\textbf{Phase one: Initializing the centers}
\begin{itemize}
     \item Given: A sample $\nSample \subset \R^d$ and for some $\beta, \zeta > 0$ the Gaussian kernel function $g:\R^d \times \R^d \rightarrow \R$ with bandwidth parameter $4 \beta^2 + \zeta.$
    
    \item Choose an initial center $c_1$ uniformly at random and set $C = \myCurls{c_1}.$
    
    \item While $t < K:$
    
    \begin{itemize}
    \item let $C = \myCurls{c_1, c_2, \cdots c_{t-1}}$ be the current set of centers,
        \item for each $x \in X $, compute $d(x) = \min \limits_{c \in C} k(x, c) = \max \limits_{c \in C} \rho(\compKDEGamma_{x}, \compKDEGamma_{c})$
        \item pick the new center $c_{t} = \argmax \limits_{x \in X} d(x),$ and set $C = C \cup \myCurls{c_t}.$
    \end{itemize}
    \item For each $k \in [K]:$
    \begin{itemize}
        \item set 
        \vspace{-5mm}
        \begin{align*}
            C_k &= \myCurls{x \in \sample: k(x, c_k) \geq k(x, c_{k'}) \; \forall k \neq k' \in [K]} \\
            &= \myCurls{x \in \sample: \rho(\compKDEGamma_x, \compKDEGamma_{c_k}) \leq \rho(\compKDEGamma_x, \compKDEGamma_{c_k'}) \; \forall k \neq k' \in [K]}
        \end{align*}
        % \item set $c_k = \frac{1}{\abs{C_k}} \sum \limits_{x \in C_k} x$
    \end{itemize}
\end{itemize}

\textbf{Phase two: Standard kernel k-means algorithm}

\begin{enumerate}
 %\item Set $c_k = \frac{1}{\abs{C_k}} \sum \limits_{x \in C_k} x.$
 
    \item For each $k \in [K]$, set $C_k = \myCurls{x \in \sample: \textrm{condition (\ref{eq:kmeans_cond_supp}) holds}}$
    \begin{equation}
    \label{eq:kmeans_cond_supp}
          \frac{1}{\abs{C_k}^2}\sum \limits_{y, z \in C_k} k(x, z) - \frac{1}{\abs{C_k}} \sum \limits_{y \in C_k} k(y, x) \leq \frac{1}{\abs{C_{l}}^2}\sum \limits_{y, z \in {l}} k(y, z) - \frac{1}{\abs{C_{l}}} \sum \limits_{y \in C_{l}} k(y, x) \; \forall l \neq k \in [K].
    \end{equation}
    \begin{equation}
    \label{eq:kmeans_cond_supp_density}
  (\ref{eq:kmeans_cond_supp}) \iff \rho(\compKDEGamma_x, \frac{1}{\abs{C_k}}\sum \limits_{x' \in C_k} \compKDEGamma_{x'}) \leq \rho(\compKDEGamma_x, \frac{1}{\abs{C_{l}}} \sum \limits_{x' \in C_l} \compKDEGamma_{x'}) \; \;  \forall l \neq k \in [K].
        %   \frac{1}{\abs{C_k}^2}\sum \limits_{y, z \in C_k} k(x, z) - \frac{1}{\abs{C_k}} \sum \limits_{y \in C_k} k(y, x) \leq \frac{1}{\abs{C_{l}}^2}\sum \limits_{y, z \in {l}} k(y, z) - \frac{1}{\abs{C_{l}}} \sum \limits_{y \in C_{l}} k(y, x) \; \forall l \neq k \in [K].
    \end{equation}
        
\item Repeat step (1) until convergence, that is, the set of centers $C$ do not change anymore.

\end{enumerate}

% \par\noindent\rule{\textwidth}{0.4pt}

\subsection{Kernel K-center(\actr)}
\par\noindent\rule{\textwidth}{0.4pt}
\vspace{-5.5mm}
\begin{center}
    \textbf{Algorithm - Kernel K-center}
\end{center}
\vspace{-3.5mm}
\par\noindent\rule{\textwidth}{0.4pt}
\begin{itemize}
    \item Given: A sample $\nSample \subset \R^d$ and for some $\beta, \zeta > 0$ the Gaussian kernel function $g:\R^d \times \R^d \rightarrow \R$ with bandwidth parameter $4 \beta^2 + \zeta.$
    \item Find the partition 
\end{itemize}
   \begin{align*}
\label{opt_problem:k_center_clustering}
    \widehat{\sigma} & = \argmax \limits_{\sigma:[n] \rightarrow [K] } \inf \limits_{l \in [n]} \frac{-1}{\vert c_k^l \vert^2 } \sum \limits_{i, j \in c_k^l}  k(x_i,x_j) + \frac{1}{\vert c_k^l \vert } \sum \limits_{ i \in c_k^l }k(x_i,x_l) \\
    &  = \argmin \limits_{\sigma:[n] \rightarrow [K] } \max \limits_{i \in [n]} \rho(\compKDEGamma_i, \estCmpMsr_{\sigma(i), \sigma})
\end{align*}
% We analyze the performance of three popular variants of, poly-time, hierarchical kernel clustering algorithms: single linkage, average linkage and complete linkage clustering. Complete linkage algorithm is known to be a constant factor approximation to the k-center problem \todo{Add this reference}. We briefly recap these methods here. 
\subsection{Agglomerative hierarchical clustering (\alnk)}
\label{sec:A_LNK}
 Given a sample $\nSample \subset \R^d$ and a similarity function $S:\R^d \times \R^d \rightarrow \R$, hierarchical clustering algorithms seek to generate a cluster tree (dendrogram) establishing a hierarchy of relationships between the elements of the sample. Aggolomerative methods, in contrast to divisive methods, seek a bottom up approach, starting out with each point as its own cluster and progressively combining them into larger clusters until there is a single cluster that contains all the elements of the sample $\sample.$ The criterion for merging hinges on the underlying similarity function, which in our case is the kernel matrix computed on the sample for a given kernel function $k:\R^d \times \R^d \rightarrow \R.$ We discuss two of the popular hierarchical clustering algorithms that exist in literature: \textbf{single linkage} and \textbf{complete linkage} methods. The distinguishing factor across the two methods is the choice of the criterion $C$ used to merge any two clusters $c, c' \subset X$ ($c \cap c'= \emptyset$), which are given below in \ref{eqn:linkages}.
\begin{equation}
    \label{eqn:linkages}
  C(c, c') =  \underbrace{\vphantom{\sum \limits_{x \in c, y \in c'} \frac{k(x,y)}{\vert c \vert \vert c' \vert}} \max \limits_{x \in c, y \in c'} k(x, y) = \min \limits_{x \in c, y \in c'} \rho(\compKDEGamma_x, \compKDEGamma_y)}_{\textbf{Single linkage}} \textrm{, } \quad  \textrm{and} \quad \underbrace{\vphantom{\sum \limits_{x \in c, y \in c'} \frac{k(x,y)}{\vert c \vert \vert c' \vert}} \min \limits_{x \in c, y \in c'} k(x, y) = \max \limits_{x \in c, y \in c'} \rho(\compKDEGamma_x, \compKDEGamma_y)}_{\textbf{Complete linkage}}.
\end{equation}
% \quad  \underbrace{\sum \limits_{x \in c, y \in c'} \frac{k(x,y)}{\vert c \vert \vert c' \vert}}_{\textbf{Average linkage}} \textrm{,}
 By substituting the different criterion $C(c, c')$ to merge any two clusters $c, c'$  in Algorithm \ref{alg:single_linkage}, we obtain variants of the corresponding algorithms.
\begin{algorithm}
\SetAlgoLined
\textbf{Given:} A sample $\nSample \subset \R^d$ and for some $\beta, \zeta > 0$ the Gaussian kernel function $g:\R^d \times \R^d \rightarrow \R$ with bandwidth parameter $4 \beta^2 + \zeta$ \;
Let $\mathcal{S} = \myCurls{s_1, \dots s_n}$ be a collection of singleton trees with the root node of $s_i = \myCurls{i}.$

\While{$\vert \mathcal{S} \vert > 1$}{
Let $s_{q}, s_{r} \in \mathcal{S}$ be the pair of trees such that $C( root(s_q),root(s_r))$ is maximal \;

Generate $s_{qr}$ s.t, $root(s_{qr}) = root(s_q) \cup root(s_r)$, $left, right(s_{qr}) = s_q, s_r$ \;

Add $s_{qr}$ and remove $s_q$ and $s_r$ from $\mathcal{S}$ \;
}
$\hat{\sigma} \gets$ Partition function obtained by cutting the only element in $\mathcal{S}$, a dendrogram at a level such that the resulting partition contains $K$ clusters \;

%  \For{$i \in [n]$}{
%   $\alpha^*_n(i) \gets \sup \limits_{k \in [K]} \sum \limits_{j: \hat{\sigma}(j) = k} G_{\beta}(x_i, x_j)$ \;}
  \Return{$\hat{\sigma}$ \;}
 \caption{Agglomorative hierarchical kernel-clustering.}
 \label{alg:single_linkage}
\end{algorithm}

\section{Impossibility of recovery by kernel k-means(Proof of Theorem 1)}
\begin{proof}
Fix the kernel bandwidth parameter $\zeta>0$. Consider the following example in $\mathbb{R}$, where $\mathcal{U}([a,b])$ denotes the uniform distribution on the real interval $[a,b]$. Let
\begin{equation}
\gamma_1 =m\left(\frac{1}{2}\mathcal{U}([-\epsilon,\epsilon])+\frac{1}{2}\mathcal{U}([r-\epsilon,r+\epsilon])\right)
\end{equation}
and
\begin{equation}
\gamma_2 =\mathcal{U}([Dr-\epsilon,Dr+\epsilon]).
\end{equation}The mixing measure is given by $\Lambda=\lambda_1\gamma_1+\lambda_2\gamma_2$. The constants $D\gg 2\gg r\gg\epsilon$ and $\lambda_1\gg\lambda_2$ are to be chosen later. The idea is that the interval $[Dr-\epsilon,Dr+\epsilon]$ is separated from the rest of the distribution via a large constant $D$, but the points in $[Dr-\epsilon,Dr+\epsilon]$ will nevertheless be clustered with the points in $[r-\epsilon,r+\epsilon]$ because $\lambda_2$ is so small. We first show that $\Lambda$ satisfies the condition in the theorem, namely that
\begin{equation}
\label{eq:proof_thm3_criterion}
\frac{\rho^2(\trueCmpMsr_1, \trueCmpMsr_{2}) }{\sup \limits_{\substack{x \in \sample_n}} \rho^2(\compKDEGamma_x, \estCmpMsr_{\sigma^*(x), \sigma^*})}>K^2.
\end{equation}Therefore, consider the numerator, which is simply the squared MMD between $\gamma_1$ and $\gamma_2$. We have
\begin{equation*}
\begin{split}
\rho^2(\trueCmpMsr_1, \trueCmpMsr_{2})  &= \mathbb{E}_{X\sim\gamma_1,\tilde X\sim\gamma_1}g(X,\tilde X)+\mathbb{E}_{Y\sim\gamma_2,\tilde Y\sim\gamma_2}g(Y,\tilde Y)-2\mathbb{E}_{X\sim\gamma_1,Y\sim\gamma_2}g(X,Y)\\[6pt]
&\geq \frac{1}{(2\epsilon)^2}\int_{[-\epsilon,\epsilon]^2}e^{-|x-y|^2/\zeta}\,dx\,dy-\frac{2}{(2\epsilon)^2}\int_{[-\epsilon,\epsilon]^2}e^{-|(D-1)r+x-y|^2/\zeta}\,dx\,dy.
\end{split}
\end{equation*}
At this point, assume that $\epsilon$ is sufficiently small compared to the kernel bandwidth parameter $\zeta$, namely that $4\epsilon^2<\eta$. This allows us to lower bound the first integral by $\frac{1}{e}$. Similarly, choosing $D$ large enough in comparison to $r$ allows us to make the second term arbitrarily small, whence we conclude that
\begin{equation*}
\rho^2(\trueCmpMsr_1, \trueCmpMsr_{2}) \geq \frac{1}{e}-\frac{1}{2e}\geq \frac{1}{2e},
\end{equation*}i.e. the numerator is at least $\frac{1}{2e}$. Now consider the denominator, which is the maximum squared MMD between an empirical cluster mean and a sampled point belonging to that cluster. This is at most the squared MMD between any two points belonging to the same cluster
\begin{equation}
\sup \limits_{\substack{x \in \sample_n}} \rho^2(\compKDEGamma_x, \frac{1}{|\sigma^*(x)|}\sum_{y\in \sigma^*(x)}\compKDEGamma_y)\leq \sup \limits_{\substack{x,y \in \sample_n},\sigma^*(x)=\sigma^*(y)}\rho^2(\compKDEGamma_x,\compKDEGamma_y) 
\end{equation}
which can be bound, independently of the sample $X_n$, by 
\begin{equation}
\begin{split}
\rho^2(\compKDEGamma_0,\compKDEGamma_{r+2\epsilon})&=2\sqrt{\frac{\zeta}{4\beta^2+\zeta}}\left(1-e^\frac{-(r+2\epsilon)^2}{4\beta^2+\zeta}\right)\\
&\leq 2\frac{(r+2\epsilon)^2}{\zeta}+o(r^4).
\end{split}
\end{equation}Here $r+2\epsilon$ is the maximum distance of any two points belonging to the same cluster and we used \eqref{eq:mmd_for_gaussians}. Thus, choosing a small $r$ allows us to make the denominator arbitrarily small, and the fraction in \eqref{eq:proof_thm3_criterion} can become larger than any fixed $K^2$.

Now, we show that k-means does w.h.p. not recover the planted partition. The idea is to choose $\lambda_1\gg\lambda_2$. In our sample $X_n$ from $m(\Lambda)$, denote the number of points within $[-\epsilon,\epsilon]$ by $N_1$, the number of points within within $[r-\epsilon,r+\epsilon]$ by $N_2$, and the number of points within $[Dr-\epsilon,Dr+\epsilon]$ by $N_3$. Assume that $n$ is large enough s.t. $N_1,N_2,N_3>0$. We rely on the equivalence between kernel-based data clustering and kernel-based density clustering and directly consider the MMD between component distributions $\psi_{x_i}$ (compare section \ref{sec:akm}). That is we consider k-means w.r.t. the norm $\|\cdot\|^2=<\cdot,\cdot>_{\mathcal{H}_{g_\zeta}}$. The k-means objective of the planted partition is at least 
\begin{equation*}
    N_1\Bigg\|\mu_{\psi_\epsilon}-\frac{N_1\,\mu_{\psi_{-\epsilon}}+N_2\,\mu_{\psi_{r-\epsilon}}}{N_1+N_2}\Bigg\|^2+N_2\Bigg\|\mu_{\psi_{r-\epsilon}}-\frac{N_1\mu_{\psi_{\epsilon}}+N_2\mu_{\psi_{r+\epsilon}}}{N_1+N_2}\Bigg\|^2\geq\frac{N_1 N_2}{N_1+N_2}\Big\|\mu_{\psi_{\epsilon}}-\mu_{\psi_{r-\epsilon}}\Big\|^2+O(\epsilon).
\end{equation*}
Similarly, the k-means objective of the alternative partition where the points in $[r-\epsilon,r+\epsilon]$ and $[Dr-\epsilon,Dr+\epsilon]$ form a cluster is at most
\begin{equation*}
\begin{split}
    & N_1\Big\|\mu_{\psi_{0}}-\mu_{\psi_{2\epsilon}}\Big\|^2+N_2\Bigg\|\mu_{\psi_{r-\epsilon}}-\frac{N_2\mu_{\psi_{r+\epsilon}}+N_3\mu_{\psi_{Dr+\epsilon}}}{N_2+N_3}\Bigg\|^2+N_3\Bigg\|\mu_{\psi_{Dr+\epsilon}}-\frac{N_2\mu_{\psi_{r-\epsilon}}+N_3\mu_{\psi_{Dr-\epsilon}}}{N_2+N_3}\Bigg\|^2\\[6pt]
    \leq &N_1\Big\|\mu_{\psi_{0}}-\mu_{\psi_{2\epsilon}}\Big\|^2+\frac{N_2 N_3}{N_2+N_3}\Big\|\mu_{\psi_{r-\epsilon}}-\mu_{\psi_{Dr+\epsilon}}\Big\|^2+O(\epsilon).
\end{split}
\end{equation*}
Thus, k-means will choose the alternative partition if
\begin{equation}\label{eq:proof_thm_3:k_means_fail}
\begin{split}
    N_1\Big\|\mu_{\psi_{0}}-\mu_{\psi_{2\epsilon}}\Big\|^2+\frac{N_2 N_3}{N_2+N_3}\Big\|\mu_{\psi_{r-\epsilon}}-\mu_{\psi_{Dr+\epsilon}}\Big\|^2+O(\epsilon) &\leq\frac{N_1 N_2}{N_1+N_2}\Big\|\mu_{\psi_{\epsilon}}-\mu_{\psi_{r-\epsilon}}\Big\|^2 \\[6pt]
\iff \frac{\Big\|\mu_{\psi_{r-\epsilon}}-\mu_{\psi_{Dr+\epsilon}}\Big\|^2}{\Big\|\mu_{\psi_{\epsilon}}-\mu_{\psi_{r-\epsilon}} \Big\|^2} +O(\epsilon)&\leq \frac{N_1 }{N_3}\frac{N_2+N_3}{N_1+N_2}-\frac{N_1(N_2+N_3)}{N_2N_3}\frac{\Big\|\mu_{\psi_{0}}-\mu_{\psi_{2\epsilon}}\Big\|^2}{\Big\|\mu_{\psi_{\epsilon}}-\mu_{\psi_{r-\epsilon}} \Big\|^2}\\
\iff \frac{\Big\|\mu_{\psi_{r-\epsilon}}-\mu_{\psi_{Dr+\epsilon}}\Big\|^2}{\Big\|\mu_{\psi_{\epsilon}}-\mu_{\psi_{r-\epsilon}} \Big\|^2}+O(\epsilon)&\leq \frac{N_1}{N_3}\left(\frac{N_2+N_3}{N_1+N_2}-\left(1+\frac{N_3}{N_2}\right)\frac{\Big\|\mu_{\psi_{0}}-\mu_{\psi_{2\epsilon}}\Big\|^2}{\Big\|\mu_{\psi_{\epsilon}}-\mu_{\psi_{r-\epsilon}} \Big\|^2} \right).
\end{split}
\end{equation}
\end{proof}
First note that the norms in equation (\ref{eq:proof_thm_3:k_means_fail}) are deterministic quantities that depend on $\epsilon$, $r$ and $D$. The $N_i$ are Binomial random variables parametrized by $\lambda_1$ and $\lambda_2$, i.e. $N_1\sim\text{Binom}(n,\lambda_1/2)$, $N_2\sim\text{Binom}(n,\lambda_1/2)$ and $N_3\sim\text{Binom}(n,\lambda_2)$. All terms involving $N_i's$ w.h.p. concentrate around their expectation. Thus, choosing $\lambda_1\gg\lambda_2$ allows us to make the fraction $\frac{N_1}{N_3}$ w.h.p. arbitrarily large. Choosing $\epsilon$ small enough (in comparison to $r$) ensures that the $O(\epsilon)$ term on the LHS is small enough, and that the bracketed term on the RHS is at least $\frac{1}{4}$.

\section{Sufficient conditions for Consistency of \actr, \affk, and \alnk. (Proof of Theorem 2)}

\begin{proof}[\textbf{Proof of Theorem \ref{thm:consistency_sufficient}: Consistency of \actr}]

Let $\trueMixMsr$ be any mixing measure for which there exists some $\epsilon > 0$ such that,
\begin{equation}
    \label{eq:suff_conds_ctr}
         \mathbb{P}_{\sample_n}\Big (  \frac{1}{4} \inf \limits_{k \neq k'} \rho(\trueCmpMsr_k, \trueCmpMsr_{k'}) < \sup \limits_{\substack{x \in \sample_n}} \rho(\compKDEGamma_x, \estCmpMsr_{\sigma^*(x), \sigma^*}) + \epsilon \Big ) \stackrel{n \rightarrow \infty}{\longrightarrow} 0.
\end{equation}
Then, with high probability (w.h.p) over the samples $\sample_n$,
\begin{equation}
     \inf \limits_{k \neq k'} \rho(\trueCmpMsr_k, \trueCmpMsr_{k'}) > 4 \sup \limits_{\substack{x \in \sample_n}} \rho(\compKDEGamma_x, \estCmpMsr_{\sigma^*(x), \sigma^*}) + 4 \epsilon.
\end{equation}
If the bandwidth parameter $\beta$ is chosen according to (\ref{eq:bandwidth_supp}),
\begin{equation}
    \label{eq:bandwidth_supp}
    \beta \rightarrow 0, \quad \frac{n \beta^d}{\log n} \rightarrow \infty \; \textrm{as} \; n \rightarrow \infty,
\end{equation}
it is known that the corresponding kernel density estimate $\hat{f}_n$ converges to the true density $f$ in the $l_{\infty}$ norm \citep{gine2002rates, einmahl2005uniform}. Observe that the density functions $\widehat{f}_{k, \sigma^*}$ corresponding to the planted partitions $\estCmpMsr_{k, \sigma^*}$ are the kernel density estimates of the density functions corresponding to the component distributions $\trueCmpMsr_k.$ Furthermore by assumption, we have that the corresponding component weights $\trueCmpwgt_k$ are bounded away from $0.$ Thus, for each $k \in [K]$, we have 
\begin{equation*}
    \sup \limits_{x \in \R^d} \abs{\widehat{f}_{k, \sigma^*} - f_k} \stackrel{\mathbb{P}}{\longrightarrow} 0 \; \; \text{as } n \rightarrow \infty.
\end{equation*}

An application of Scheffe's theorem (or Reiz's theorem) \citep{scheffe1947useful} implies that the corresponding probability measures $\estCmpMsr_{k, \sigma^*}$ also converge weakly to $\trueCmpMsr_k.$ \citet[Theorem 4.2]{simon2020metrizing} provide a characterization of the class of kernels that metrize the weak convergence of probability measures on locally compact domains (e.g., $\R^d$). Following \citet[Corollary 3]{simon2016kernel} and \citet[Proposition 5]{sriperumbudur2010hilbert}, one can verify that the Gaussian kernel belongs to this class of kernel functions. Therefore, weak convergence of probability measures $\estCmpMsr_{k, \sigma^*}$ to $\trueCmpMsr_k$ is equivalent to convergence in MMD with respect to (w.r.t) a Gaussian kernel, that is, for every $\epsilon > 0$,
\begin{equation}
    \label{eq:conv_prob_ctr_supp}
    \mathbb{P}(\rho(\estCmpMsr_{k, \sigma^*}, \trueCmpMsr_k) > \epsilon) \stackrel{n \rightarrow \infty}{\longrightarrow} 0.
\end{equation}
Let $t = 4\epsilon/2$ and $\delta = 1/n$. Then, for every $k \in [K]$, there exists some $N_t \in \N$ such that $\forall$ $n > N_{t,k}$,  
\begin{equation}
     \mathbb{P}(\rho(\estCmpMsr_{k, \sigma^*}, \trueCmpMsr_k) > 4\epsilon / 2) < \frac{1}{n}.
\end{equation}
Let $N_t = \sup_{k \in [K]} N_{t,k}$. For all $n > N_t$, with high probability (w.h.p) over the samples $\sample_n$, 

\begin{equation}
    \inf \limits_{k \neq k'} \rho(\trueCmpMsr_k, \trueCmpMsr_{k'}) > 4 \sup \limits_{\substack{x \in \sample_n}} \rho(\compKDEGamma_x, \estCmpMsr_{\sigma^*(x), \sigma^*}) + 2 \rho(\estCmpMsr_{k, \sigma^*}, \trueCmpMsr_k).
\end{equation}

By assumption, we have that $\trueCmpwgt_k$ is bounded away from $0$ for all $k \in [K].$ Therefore, 

\begin{equation}
    \mathbb{P}(\min \limits_{k \in [K]} \abs{(\sigma^*)^{-1}(k)} > 0 ) = \prod \limits_{k=1}^{K} \mathbb{P}(\abs{(\sigma^*)^{-1}(k)} > 0).
    \end{equation}

For any $k \in [K]$, observe that $\abs{(\sigma^*)^{-1}(k)}$ is a binomial random variable, $Bin(n, \trueCmpwgt_k).$ Using Hoeffding's inequality for binomial random variables, 
\begin{equation}
    \mathbb{P}(\abs{(\sigma^*)^{-1}(k)} \leq t) < \exp{(-2n (\trueCmpwgt_k - \frac{t}{n})^2)}
\end{equation}
Setting $t=0$, for large enough $n$ such that $n / \log n > 1 / \trueCmpwgt_k, $ w.p.a.l $1 - 1/n $ 
\begin{equation*}
    \abs{(\sigma^*)^{-1}(k)} > 0
\end{equation*}
So w.h.p over the samples, 

\begin{equation*}
    \min \limits_{k \in [K]} \abs{(\sigma^*)^{-1}(k)} > 0
\end{equation*}

From Propositions \ref{prop:recovery_kcenter_supp}, \ref{prop:suff_conds_kmeans++}, and \ref{prop:single_linkage_recovery}, we then have that w.h.p over $\sample_n$, the algorithms \actr, \affk, and \alnk can recover the planted partition $\truePart$ (upto a permutation over the labels). 
\end{proof}

\subsection{Sufficient conditions for consistency of kernel k-center clustering \actr}

\begin{proposition}[\textbf{Conditions for recovery of the true partition by kernel k-center algorithm}]
\label{prop:recovery_kcenter_supp}
For any $\trueMixMsr \in \mixingSpace$, let $\trueMsr = m(\trueMixMsr)$. Let $X = \myCurls{x_1, x_2, \cdots x_n} \sim \trueMsr^{n}$. Define $\estMsr = \sum \limits_{i=1}^n \frac{1}{n} \compKDEGamma_i$ as the probability measure associated with the kde in the usual way. For any partition $\sigma:[n] \rightarrow [K]$ such that the following condition holds:
\begin{equation}
\label{eq:suff_conds_kcenter_supp}
    \inf \limits_{ k \neq k'} \rho(\trueCmpMsr_k, \trueCmpMsr_k') > 4 \sup \limits_{i \in [n]} \rho(\psi_i, \estCmpMsr_{\sigma(i), \sigma}) + 2 \sup \limits_{k \in [K]} \rho(\estCmpMsr_{k, \sigma}, \trueCmpMsr_{k, \sigma}), 
\end{equation}
and 
\begin{equation}
\label{eq:non_empty_clusters}
   \inf \limits_{k \in [K]} \abs{\sigma^{-1}(k)} > 0
\end{equation}
$\sigma$ can be recovered by the kernel k-center algorithm on the sample kernel matrix G (defined in section \ref{sec:kde_clustering_equivalence} of the main paper). 
\end{proposition}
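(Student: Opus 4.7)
The plan is to work in the density-clustering formulation provided by Theorem \ref{thm:kde_equivalence} and Lemma \ref{lemma:mmd_kernel_eqv_supp}. In that formulation, the kernel k-center objective reduces to minimizing $\max_{i \in [n]} \rho(\compKDEGamma_i, \estCmpMsr_{\tau(i), \tau})$ over partitions $\tau:[n]\to[K]$, i.e., the maximum MMD distance from each component KDE distribution $\compKDEGamma_i$ to the mean of its own $\tau$-cluster. Let $\tau$ denote the partition produced by \actr. The strategy is to argue by contradiction: assuming $\tau \not\equiv \sigma$ up to label permutation, the k-center objective of $\tau$ is made to strictly exceed that of $\sigma$, violating optimality of $\tau$.

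First, observe that the objective achieved by $\sigma$ is $\max_{i \in [n]} \rho(\compKDEGamma_i, \estCmpMsr_{\sigma(i), \sigma}) \leq \sup_{i \in [n]} \rho(\compKDEGamma_i, \estCmpMsr_{\sigma(i), \sigma})$, which is precisely the quantity on the right-hand side of (\ref{eq:suff_conds_kcenter_supp}). Suppose now that $\tau \not\equiv \sigma$. A pigeonhole argument on the confusion matrix $M_{k\ell} = \abs{c_k(\sigma) \cap c_\ell(\tau)}$ (leveraging (\ref{eq:non_empty_clusters}) so that all $\sigma$-clusters are non-empty) shows that some $\tau$-cluster $c_\ell(\tau)$ must contain points drawn from at least two distinct true clusters: otherwise the assignment of each $\tau$-cluster to its unique enclosing $\sigma$-cluster would yield a bijection of labels making $\tau \equiv \sigma$. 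Pick $x_i, x_j \in c_\ell(\tau)$ with $\sigma(i) = a$ and $\sigma(j) = b$, $a \neq b$.

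Applying the triangle inequality for the MMD with center $\estCmpMsr_{\ell, \tau}$,
\begin{equation*}
\rho(\compKDEGamma_i, \estCmpMsr_{\ell, \tau}) + \rho(\compKDEGamma_j, \estCmpMsr_{\ell, \tau}) \geq \rho(\compKDEGamma_i, \compKDEGamma_j),
\end{equation*}
so the k-center objective of $\tau$ is at least $\tfrac{1}{2}\rho(\compKDEGamma_i, \compKDEGamma_j)$. Two further applications of the triangle inequality, routed through $\trueCmpMsr_a$ and $\trueCmpMsr_b$ respectively, give
\begin{equation*}
\rho(\compKDEGamma_i, \compKDEGamma_j) \geq \rho(\trueCmpMsr_a, \trueCmpMsr_b) - 2\sup_{i' \in [n]} \rho(\compKDEGamma_{i'}, \estCmpMsr_{\sigma(i'),\sigma}) - 2\sup_{k \in [K]} \rho(\estCmpMsr_{k,\sigma}, \trueCmpMsr_k).
\end{equation*}
Since $\rho(\trueCmpMsr_a, \trueCmpMsr_b) \geq \inf_{k \neq k'} \rho(\trueCmpMsr_k, \trueCmpMsr_{k'})$, plugging in the separation hypothesis (\ref{eq:suff_conds_kcenter_supp}) yields $\tfrac{1}{2}\rho(\compKDEGamma_i, \compKDEGamma_j) > \sup_{i'} \rho(\compKDEGamma_{i'}, \estCmpMsr_{\sigma(i'),\sigma}) \geq \max_{i'} \rho(\compKDEGamma_{i'}, \estCmpMsr_{\sigma(i'),\sigma})$, so the objective of $\tau$ strictly exceeds that of $\sigma$, contradicting optimality of $\tau$.

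The main obstacle I anticipate is combinatorial rather than analytical: executing the pigeonhole/refinement step cleanly so that the conclusion is genuinely ``$\tau \equiv \sigma$ up to label permutation,'' and handling uniformly the sub-case in which $\tau$ uses fewer than $K$ non-empty labels (which is ruled out in the same stroke by (\ref{eq:non_empty_clusters}) combined with the bijection-of-labels argument). Once the structural step is in place, the triangle-inequality chain is routine bookkeeping, and it saturates exactly at the constants $4$ and $2$ appearing in (\ref{eq:suff_conds_kcenter_supp}), which explains why the separation condition takes this particular form.
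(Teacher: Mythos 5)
Your proposal is correct, but it follows a genuinely different route from the paper's. The paper argues directly: setting $r=\sup_{i}\rho(\psi_i,\estCmpMsr_{\sigma(i),\sigma})$, it first shows under (\ref{eq:suff_conds_kcenter_supp}) that two points lie in the same $\sigma$-cluster if and only if their component KDE distributions are within $2r$, notes that the optimal k-center partition $\widehat{\sigma}$ has objective at most $r$, and then identifies the optimal centers with the planted ones through a counting argument on the pairwise-disjoint balls $B(\estCmpMsr_{k,\sigma},2r)$ --- each ball contains exactly one of the $K$ new cluster means, which forces agreement of the two partitions in both directions. You instead argue by contradiction: by the pigeonhole/refinement argument (which is where (\ref{eq:non_empty_clusters}) enters), any partition $\tau$ not equivalent to $\sigma$ up to relabeling must place two points from distinct planted clusters into one $\tau$-cluster, and your triangle-inequality chain through $\estCmpMsr_{k,\sigma}$ and $\trueCmpMsr_k$ (consuming the $4r+2s$ separation) then forces the k-center objective of $\tau$ strictly above $r$, while $\sigma$ itself achieves at most $r$ --- contradicting optimality. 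The analytic core is shared, but your combinatorial step is shorter and needs only one direction of the $2r$ dichotomy, whereas the paper's ball-counting yields the full equivalence ``same cluster iff within $2r$,'' which it then reuses in the FFk-means++ argument (Proposition \ref{prop:suff_conds_kmeans++}). If you write yours up, make the refinement step explicit: if every nonempty $\tau$-cluster lies inside a single $\sigma$-cluster, then the induced map from nonempty $\tau$-clusters to $\sigma$-clusters is surjective onto $K$ nonempty targets while having at most $K$ nonempty sources, hence is a bijection with each $\sigma$-cluster equal to a single $\tau$-cluster, so $\tau\equiv\sigma$ up to a permutation of labels; this also disposes of the sub-case where $\tau$ uses fewer than $K$ labels.
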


\begin{proof}[\textbf{Proof of Proposition \ref{prop:recovery_kcenter_supp}}]
For any sample $\nSample$ and a partition $\sigma'$, let 
\begin{equation}
    r = \sup \limits_{i \in [n]} \rho(\psi_i, \estCmpMsr_{\sigma'(i), \sigma'})
\end{equation}
We first show that for any mixing measure satisfying the conditions provided in Equation (\ref{eq:suff_conds_kcenter_supp}) w.r.t a sample $X$ and a partition $\sigma'$, then for any $i \neq j \in [n]$,
\begin{align*}
    \rho(\psi_i, \psi_j) \leq 2r &\iff \sigma'(i) = \sigma'(j) \\
    \rho(\psi_i, \psi_j) > 2r &\iff \sigma'(i) \neq \sigma'(j)
\end{align*}
\textbf{1)} $\sigma'(i) = \sigma'(j) \implies \rho(\psi_i, \psi_j) \leq 2r.$ For any $i \in [n]$, by definition,
\begin{equation}
    \rho(\psi_i, \estCmpMsr_{\sigma'(i), \sigma'}) \leq r
\end{equation}
Therefore, for any $i, j \in [n]$,
\begin{equation}
\label{eq:kcenter_leq_2r}
    \sigma'(i) = \sigma'(j) \implies \rho(\psi_i, \psi_j) \leq \rho(\psi_i, \estCmpMsr_{\sigma'(i), \sigma'}) + \rho( \estCmpMsr_{\sigma'(i), \sigma'}, \psi_j) \leq 2r
\end{equation}
\end{proof}
\textbf{2)} $\sigma'(i) \neq \sigma'(j) \implies \rho(\psi_i, \psi_j) > 2r.$ Let $\sigma'(i) = k \neq k' = \sigma'(j).$ Then, by triangle inequality, 
\begin{align}
\label{eq:kcenter_g_2r}
       \rho(\psi_i, \psi_j) & \geq \rho(\trueCmpMsr_k, \trueCmpMsr_{k'}) - \rho(\trueCmpMsr_k, \estCmpMsr_{k, \sigma'}) - \rho(\estCmpMsr_{k, \sigma'}, \psi_{i}) - \rho(\psi_j, \estCmpMsr_{k', \sigma'}) - \rho(\estCmpMsr_{k', \sigma'}, \trueCmpMsr_{k'}) > 2r
\end{align}
Combining Equations (\ref{eq:kcenter_leq_2r}) and (\ref{eq:kcenter_g_2r}), its easy to verify that
\begin{align*}
    \rho(\psi_i, \psi_j) \leq 2r &\iff \sigma'(i) = \sigma'(j) \\
    \rho(\psi_i, \psi_j) > 2r &\iff \sigma'(i) \neq \sigma'(j)
\end{align*}

For any partition $\sigma$, let
\begin{equation}
    L(\sigma) = \sup \limits_{i \in [n]} \rho(\psi_i, \estCmpMsr_{\sigma(i), \sigma}).
\end{equation}
Then the partition $\widehat{\sigma}$ generated by the kernel k-center clustering algorithm is given by
\begin{equation}
    \widehat{\sigma} = \argmin \limits_{\sigma:[n] \rightarrow [K]} L(\sigma).
\end{equation}
Then, by definition, 
\begin{equation}
\label{eq:less_than_r_supp}
    L(\widehat{\sigma}) \leq L(\sigma') = r
\end{equation}

Therefore, from (\ref{eq:less_than_r_supp}), 
\begin{equation}
\label{eq:dt_betn_diff_centers}
    \rho(\estCmpMsr_{\sigma'(i), \sigma'}, \estCmpMsr_{\widehat{\sigma}(i), \widehat{\sigma}} ) \leq \rho(\estCmpMsr_{\sigma'(i), \sigma'}, \psi_i) + \rho(\estCmpMsr_{\widehat{\sigma}(i), \widehat{\sigma}}) \leq 2r
\end{equation}

To show that the partitions $\sigma'$ and $\widehat{\sigma}$ coincide up to a permutation, we show that, for any $i, j \in [n]$, $\sigma'(i) = \sigma'(j) \implies \widehat{\sigma}(i) = \widehat{\sigma}(j)$ and $\sigma'(i) \neq \sigma'(j) \implies \widehat{\sigma}(i) \neq \widehat{\sigma}(j).$
\vspace{3mm}

Consider $i, j \in [n]$ such that $\sigma'(i) \neq \sigma'(j).$ If $\widehat{\sigma}(i) = \widehat{\sigma}(j),$ then from triangle inequality and (\ref{eq:dt_betn_diff_centers}),
\begin{equation}
    \rho(\estCmpMsr_{\sigma'(i), \sigma'}, \estCmpMsr_{\sigma'(j), \sigma'}) \leq \rho(\estCmpMsr_{\sigma'(i), \sigma'}, \estCmpMsr_{\widehat{\sigma}(i), \widehat{\sigma}}) + \rho(\estCmpMsr_{\sigma'(j), \sigma'}, \estCmpMsr_{\widehat{\sigma}(i), \widehat{\sigma}}) \leq 4r.
\end{equation}
However, from (\ref{eq:suff_conds_kcenter_supp}) we have that 
\begin{equation}
    \rho(\estCmpMsr_{\sigma'(i), \sigma'}, \estCmpMsr_{\sigma'(j), \sigma'}) \geq \rho(\trueCmpMsr_{\sigma'(i)}, \trueCmpMsr_{\sigma'(j)}) - \rho(\estCmpMsr_{\sigma'(i), \sigma'}, \trueCmpMsr_{\sigma'(i)}) - \rho(\estCmpMsr_{\sigma'(j), \sigma'}, \trueCmpMsr_{\sigma'(j)}) > 4r,
\end{equation}
which is a contradiction. Therefore, for any $i,j \in [n]$ such that 
\begin{equation}
\label{eq:same_clusters}
    \sigma'(i) \neq \sigma'(j) \implies \widehat{\sigma}(i) \neq \widehat{\sigma}(j).
\end{equation}

\vspace{3mm}
Consider any $i, j \in [n]$ such that $\sigma'(i) = \sigma'(j)$ but $\widehat{\sigma}(i) \neq \widehat{\sigma}(j).$ From (\ref{eq:dt_betn_diff_centers}) we know that 
\begin{equation}
\label{eq:two_centers_one_ball}
    \estCmpMsr_{\widehat{\sigma}(i), \widehat{\sigma}} \in B(\estCmpMsr_{\sigma'(i), \sigma'}, 2r) \; \; \textrm{and} \; \; \estCmpMsr_{\widehat{\sigma}(j), \widehat{\sigma}} \in B(\estCmpMsr_{\sigma'(i), \sigma'}, 2r)
\end{equation}
where $B(x, r) = \myCurls{y: \rho(x, y) \leq r}$ denotes the ball of radius $r$ centered at $x$. 

\vspace{3mm}
From the condition (\ref{eq:non_empty_clusters}) that the clusters are non-empty, for each $k \in [K]$, there exists $a_k$ such that $\sigma'(a_k) = k.$ Then, for each $k \in [K]$, we know that 
\begin{equation}
\label{eq:new_center_in_old_ball}
     \estCmpMsr_{\widehat{\sigma}(a_k), \widehat{\sigma}} \in B(\estCmpMsr_{\sigma'(a_k), \sigma'}, 2r) = B(\estCmpMsr_{k, \sigma'}, 2r)
\end{equation}

Furthermore, observe that for all $k \neq k' \in [K],$
\begin{equation}
\label{eq:balls_empty_intersection}
    B(\estCmpMsr_{k, \sigma'}, 2r) \cap B(\estCmpMsr_{k', \sigma'}, 2r) = \emptyset,
\end{equation}

since otherwise there exists some $x \in B(\estCmpMsr_{k, \sigma'}, 2r) \cap B(\estCmpMsr_{k', \sigma'}, 2r) $, i.e., 

\begin{align*}
     & \rho(x, \estCmpMsr_{k, \sigma'}) \leq 2r \; \; \textrm{and} \; \; \rho(x, \estCmpMsr_{k', \sigma'})  \leq 2r, \\
     \implies & \rho(\estCmpMsr_{k, \sigma'}, \estCmpMsr_{k', \sigma'}) \leq \rho(x, \estCmpMsr_{k, \sigma'}) + \rho(x, \estCmpMsr_{k', \sigma'}) \leq 4r,
\end{align*}
which is a contradiction. 

\vspace{3mm}

Moreover, by definition, $\sigma'(a_k) \neq \sigma'(a_{k'})$ for all $k, k' \in [K]$, from (\ref{eq:same_clusters}), we have
\begin{equation}
\label{eq:all_unequal_new_centers}
    \widehat{\sigma}(a_1) \neq \widehat{\sigma}(a_{2}) \cdots \neq \widehat{\sigma}(a_K)
\end{equation}

Since there are only $K$ centers, (\ref{eq:new_center_in_old_ball}),  (\ref{eq:balls_empty_intersection}) and (\ref{eq:all_unequal_new_centers}) imply that 
\begin{itemize}
    \item For any $i \in [n]$, there exists some $k \in [K]$ such that $\widehat{\sigma}(i) = \widehat{\sigma}(a_k),$ and
    \item $\estCmpMsr_{\widehat{\sigma}(a_k), \widehat{\sigma}} \in B(\estCmpMsr_{\sigma'(i), \sigma'}, 2r) \implies \estCmpMsr_{\widehat{\sigma}(a_{k'}), \widehat{\sigma}} \notin B(\estCmpMsr_{\sigma'(i), \sigma'}, 2r)$ for all $k' \neq k \in [K].$
\end{itemize}
So, from (\ref{eq:two_centers_one_ball}),
\begin{equation}
   \sigma'(i) = \sigma'(j) \implies \estCmpMsr_{\widehat{\sigma}(i), \widehat{\sigma}} = \estCmpMsr_{\widehat{\sigma}(j), \widehat{\sigma}} \implies \widehat{\sigma}(i) = \widehat{\sigma}(j),
\end{equation} 
since, if $\widehat{\sigma}(i) \neq \widehat{\sigma}(j)$, then $\rho(\estCmpMsr_{\widehat{\sigma}(i), \widehat{\sigma}}, \estCmpMsr_{\widehat{\sigma}(j), \widehat{\sigma}}) > 4r.$

\vspace{3mm}
Therefore, the partitions $\sigma'$ and $\widehat{\sigma}$ coincide up to a permutation over the labels.

\subsection{Sufficient conditions for kernel kmeans++ algorithm - proofs}

% \vspace{-7mm}
% \begin{center}
%     \phantom{\textbf{Algorithm - kernel k-means ++ (the deterministic version)}}
% \end{center}
% \vspace{-5mm}
% \par\noindent\rule{\textwidth}{0.4pt}

\begin{proposition}[\textbf{Sufficient conditions for recovery by kernel k-means ++}]
\label{prop:suff_conds_kmeans++}
For any $\trueMixMsr \in \mixingSpace$, let $\trueMsr = m(\trueMixMsr)$. Let $X = \myCurls{x_1, x_2, \cdots x_n} \sim \trueMsr^{n}$. Define $\estMsr = \sum \limits_{i=1}^n \frac{1}{n} \compKDEGamma_i$ as the probability measure associated with the kde in the usual way. For any partition $\sigma':[n] \rightarrow [K]$ such that the following condition holds:
\begin{equation}
\label{eq:suff_conds_kmeans++_supp}
    \inf \limits_{ k \neq k'} \rho(\trueCmpMsr_k, \trueCmpMsr_k') > 4 \sup \limits_{i \in [n]} \rho(\psi_i, \estCmpMsr_{\sigma'(i), \sigma'}) + 2 \sup \limits_{k \in [K]} \rho(\estCmpMsr_{k, \sigma'}, \trueCmpMsr_{k, \sigma'}), 
\end{equation}
and 
\begin{equation}
\label{eq:non_empty_clusters_supp}
   \inf \limits_{k \in [K]} \abs{(\sigma')^{-1}(k)} > 0
\end{equation}
$\sigma$ can be recovered by a (deterministic) kernel k-means++ algorithm on the sample kernel matrix G.

\end{proposition}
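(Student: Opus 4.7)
The plan is to establish recovery in two stages, matching the two phases of kernel FFk-means++. First I will translate the separation condition (\ref{eq:suff_conds_kmeans++_supp}) into a clean geometric statement in the RKHS. Set
\[
r := \sup_{i \in [n]} \rho(\psi_i, \estCmpMsr_{\sigma'(i), \sigma'}), \qquad s := \sup_{k \in [K]} \rho(\estCmpMsr_{k, \sigma'}, \trueCmpMsr_k).
\]
By triangle inequality, for $i, j \in [n]$ with $\sigma'(i) = \sigma'(j)$ we get $\rho(\psi_i, \psi_j) \le 2r$, whereas for $\sigma'(i) = k \ne k' = \sigma'(j)$ we get
\[
\rho(\psi_i, \psi_j) \ge \rho(\trueCmpMsr_k, \trueCmpMsr_{k'}) - 2r - 2s > 2r,
\]
exactly as in the kernel k-center analysis (Proposition \ref{prop:recovery_kcenter_supp}). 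So the component KDE distributions form $K$ well-separated clumps of diameter $\le 2r$ with pairwise inter-clump distance strictly greater than $2r$. Moreover, by (\ref{eq:non_empty_clusters_supp}), each planted cluster is non-empty, so every clump is occupied.

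Next I will analyze Phase one. The initial center $c_1$ lies in some planted cluster $c_{k_1}(\sigma')$. Inductively, suppose the first $t$ farthest-first centers $c_1, \ldots, c_t$ lie in $t$ distinct planted clusters $c_{k_1}, \ldots, c_{k_t}$. For any point $x$ belonging to one of these $t$ clusters, $\min_{c \in C}\rho(\psi_x, \psi_c) \le 2r$ (take $c$ from the same cluster as $x$), whereas for any point $y$ in the remaining $K - t$ clusters, $\min_{c \in C}\rho(\psi_y, \psi_c) > 2r$. Hence the farthest-first rule $c_{t+1} = \argmax_{x} \min_{c \in C} \rho(\psi_x, \psi_c)$ necessarily picks $c_{t+1}$ from a previously unvisited cluster. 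After $K$ steps we have exactly one center per planted cluster. The subsequent nearest-center assignment step then correctly reproduces $\sigma'$ (up to label permutation), since any $x \in c_k$ satisfies $\rho(\psi_x, \psi_{c_k}) \le 2r$ while $\rho(\psi_x, \psi_{c_{k'}}) > 2r$ for all $k' \ne k$.

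For Phase two, I will verify that the partition produced by Phase one is a fixed point of the standard kernel k-means iteration. With that partition, the cluster centers in the RKHS are precisely the $\estCmpMsr_{k, \sigma'}$. For any $x$ with $\sigma'(x) = k$, we have $\rho(\psi_x, \estCmpMsr_{k, \sigma'}) \le r$ by definition of $r$, whereas for any $k' \ne k$,
\[
\rho(\psi_x, \estCmpMsr_{k', \sigma'}) \ge \rho(\estCmpMsr_{k, \sigma'}, \estCmpMsr_{k', \sigma'}) - r \ge \rho(\trueCmpMsr_k, \trueCmpMsr_{k'}) - 2s - r > 3r + 2s \ge r,
\]
using (\ref{eq:suff_conds_kmeans++_supp}). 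Thus in condition (\ref{eq:kmeans_cond_supp_density}) the correct cluster index strictly minimizes the distance, no reassignment occurs, and the algorithm terminates with exactly the planted partition (up to permutation).

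The routine triangle-inequality bookkeeping is straightforward; the main conceptual step is the inductive argument for Phase one, which is where the factor of $4$ in the separation condition is used (it must accommodate two within-cluster deviations of size $r$ each, plus two measure-to-truth deviations of size $s$, while still clearing the within-cluster ball of diameter $2r$). A minor subtlety I will flag is that the argmax in farthest-first may not be unique, but any maximizer lies in an unvisited cluster, so the inductive step goes through regardless of the deterministic tie-breaking rule used.
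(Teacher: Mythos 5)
Your proposal is correct and follows essentially the same route as the paper's proof: the $2r$ within/between dichotomy inherited from the k-center analysis, the induction showing the farthest-first phase places exactly one center per planted cluster, and the fixed-point check that the k-means reassignment phase leaves the planted partition unchanged. The only blemish is the Phase-two bound, where $\rho(\trueCmpMsr_k,\trueCmpMsr_{k'})-2s-r$ exceeds $3r$ rather than $3r+2s$ under (\ref{eq:suff_conds_kmeans++_supp}); since all you need is that it exceeds $r$, the conclusion is unaffected.
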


\begin{proof}[\textbf{Proof of Proposition \ref{prop:suff_conds_kmeans++}}]
Let,
\begin{equation}
    r = \sup \limits_{i \in [n]} \rho(\psi_i, \estCmpMsr_{\sigma'(i), \sigma'}), \textrm{ and } B_k = B(\estCmpMsr_{k, \sigma'}, r) \; \; \forall k \in [K].
\end{equation}

\begin{claim}
Let $C$ be the set of centers initialized in phase one of the k-means ++ algorithm as described. Then, for each $k \in [K],$ 
\begin{equation}
    c_k \in B_k
\end{equation}
\end{claim}

\begin{claimproof}
For every $i \in [n],$ by definition, 
\begin{equation}
    \rho(\psi_i, \estCmpMsr_{\sigma'(i), \sigma'}) \leq r \implies \psi_i \in B_{\sigma'(i)} .
\end{equation}

Therefore, without loss of generality (W.L.O.G), let $c_1 \in  B_1.$ For any $t < K$, assume that $C_t = \myCurls{c_1, c_2, \cdots c_t} $ and $c_k \in B_k \; \; \forall k \in [t]$ (upto a permutation over the labels). Note that $B_k$ is non-empty for every $k \in [K].$ 

From the proof of Proposition \ref{prop:recovery_kcenter_supp}, for any mixing measure satisfying the conditions provided in (\ref{eq:suff_conds_kmeans++_supp}), 
\begin{align}
\label{eq:dt_<2r_>2r}
    \rho(\psi_i, \psi_j) \leq 2r &\iff \sigma'(i) = \sigma'(j) \\
    \rho(\psi_i, \psi_j) > 2r &\iff \sigma'(i) \neq \sigma'(j)
\end{align}

Therefore, since $c_k \in B_k$ for all $k \in [K]$, $d(\psi_i) =  \rho^2(\psi_i, c_k) \leq 2r \; \textrm{ for all } \sigma'(i) = k.$ Therefore,
\begin{equation}
    d(\psi_i) \textrm{ is } \begin{cases}
    \leq 2r & \forall \psi_i \in B_k \textrm{, and } k \leq t, \\
    > 2r & \textrm{otherwise.}
    \end{cases}
\end{equation}

Since $c_{t+1} = \argmax \limits_{\psi_i} d(\psi_i),$ $c_{t+1} \in B_{s}$ for some $s \notin C_t.$ 

\end{claimproof}

\begin{claim}
Kernel k-means algorithm does not affect the centers obtained in Phase one of the algorithm.
\end{claim}

\begin{claimproof}
From claim 1, in phase one of the algorithm, the centers $C = \myCurls{c_1, c_2, \cdots c_K}$ are obtained such that $c_k \in B_k$ for all $k \in [K].$ For each $k \in [K]$, clusters $\myCurls{C_1, C_2, \cdots C_K}$ are then defined as follows.
\begin{equation}
    C_k = \myCurls{i \in [n]: \rho^2(c_k, \psi_i) \geq \rho^2(c_{k'}, \psi_i) \quad \forall k \neq k' \in [K]}
\end{equation}
From (\ref{eq:dt_<2r_>2r}), we have that 
\begin{align*}
    \rho^2(\psi_i, c_k) \leq 4r^2 \quad & \textrm{ if } \sigma'(i) = k \\
    \rho^2(\psi_i, c_k) > 4r^2 \quad & \textrm{ otherwise .}
\end{align*}
 
 Therefore, the partition obtained in the Phase 1 of the algorithm coincides with $\sigma'$ up to a permutation over the labels, that is,
 \begin{equation}
     C_k = \myCurls{\psi_i \in \sample: \sigma'(i) = k},
 \end{equation}
and
\begin{equation}
    \sum \limits_{i: \sigma'(i) = k} \psi_i = \estCmpMsr_{k, \sigma'} \in B_k.
\end{equation}
Clearly, 
\begin{equation*}
    \rho(\compKDEGamma_i, \estCmpMsr_{\sigma'(i), \sigma'}) \leq 2r \leq \rho(\compKDEGamma_i, \estCmpMsr_{k, \sigma'}) > 2r \; \forall k \neq \sigma'(i).
\end{equation*}
Therefore, the clusters obtained in the phase 1 of the algorithm do not change in the Phase 2 of the algorithm and the partition obtained by \affk coincides with that of $\sigma'$ up to a permutation over the labels.
% In the second phase of the algorithm, the clusters $\myCurls{C_1, C_2, \cdots C_K}$ are given by,

% \begin{equation}
%         C_k = \myCurls{i \in [n]: \rho^2(c_k, \psi_i) \geq \rho^2(c_{k'}, \psi_i) \quad \forall k \neq k' \in [K]}
% \end{equation}

\end{claimproof}
\end{proof}

\subsection{Sufficient conditions for kernel linkage clustering algorithms (Proof of Theorem 2 - Part III)}
\begin{proposition}[\textbf{Recovery by single linkage clustering}]
\label{prop:single_linkage_recovery}
For any $\trueMixMsr \in \mixingSpace$, let $\trueMsr = m(\trueMixMsr)$. Let $X_n = \myCurls{x_1, x_2, \cdots x_n} \sim \trueMsr^{n}$ be a sample. Define $\estMsr = \sum \limits_{i=1}^n \frac{1}{n} \compKDEGamma_i$ as the probability measure associated with the kde in the usual way. For any partition $\sigma_n$ such that the following condition holds:
\begin{equation}
\label{eq:suff_conds_sl}
    \inf \limits_{ k \neq k'} \rho(\trueCmpMsr_k, \trueCmpMsr_k') > 3 \sup \limits_{k} \sup \limits_{l \neq l' \in \sigma_n^{-1}(k)} \rho( \compKDEGamma_l, \compKDEGamma_{l'}) + 2 \sup \limits_{k \in [K]} \rho(\estCmpMsr_{k, \sigma_n}, \trueCmpMsr_{k, \sigma_n}),
\end{equation}
$\sigma_n$ can be recovered by the kernel single (and complete) linkage clustering algorithms with respect to the Gaussian kernel with bandwidth para using the sample kernel matrix G (defined in section \ref{sec:kde_clustering_equivalence} of the main paper). 
\end{proposition}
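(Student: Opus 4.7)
The plan is to reduce recovery to the classical \emph{cluster separation} criterion for linkage methods: I will show that assumption (\ref{eq:suff_conds_sl}) forces every within-cluster MMD distance to be strictly smaller than every between-cluster MMD distance. Once this separation is established, both single and complete linkage must perform all within-cluster merges before any cross-cluster merge, so cutting the resulting dendrogram at the $K$-cluster level returns $\sigma_n$ up to label permutation.

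Concretely, I would set
\[
W \;:=\; \sup_{k \in [K]} \sup_{l \neq l' \in \sigma_n^{-1}(k)} \rho(\compKDEGamma_l,\compKDEGamma_{l'}), \qquad E \;:=\; \sup_{k \in [K]} \rho(\estCmpMsr_{k,\sigma_n}, \trueCmpMsr_k),
\]
so that hypothesis (\ref{eq:suff_conds_sl}) reads $\inf_{k\neq k'}\rho(\trueCmpMsr_k,\trueCmpMsr_{k'}) > 3W + 2E$. The first step is a convexity bound: since $\estCmpMsr_{k,\sigma_n}$ is the average of $\compKDEGamma_{l'}$ over $l' \in \sigma_n^{-1}(k)$ and $\rho(\compKDEGamma_l,\cdot)=\RKHSnorm{\mu_{\compKDEGamma_l}-\mu_{(\cdot)}}$ is convex in its second argument,
\[
\rho(\compKDEGamma_l, \estCmpMsr_{k,\sigma_n}) \;\leq\; \frac{1}{|\sigma_n^{-1}(k)|}\sum_{l' \in \sigma_n^{-1}(k)} \rho(\compKDEGamma_l,\compKDEGamma_{l'}) \;\leq\; W
\]
for every $l$ with $\sigma_n(l)=k$. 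Second, for any $i,j$ with $\sigma_n(i)=k \neq k' = \sigma_n(j)$, four applications of the triangle inequality in MMD through the two empirical centers yield
\[
\rho(\compKDEGamma_i,\compKDEGamma_j) \;\geq\; \rho(\trueCmpMsr_k,\trueCmpMsr_{k'}) - \rho(\trueCmpMsr_k,\estCmpMsr_{k,\sigma_n}) - \rho(\estCmpMsr_{k,\sigma_n},\compKDEGamma_i) - \rho(\compKDEGamma_j,\estCmpMsr_{k',\sigma_n}) - \rho(\estCmpMsr_{k',\sigma_n},\trueCmpMsr_{k'}).
\]
The middle two terms are each $\leq W$ by the convexity bound and the outer two are each $\leq E$, so (\ref{eq:suff_conds_sl}) gives $\rho(\compKDEGamma_i,\compKDEGamma_j) > W$ strictly.

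The final step is the standard linkage recovery argument. As long as every current cluster in Algorithm \ref{alg:single_linkage} is a subset of a single true cluster $\sigma_n^{-1}(k)$, all pairwise MMDs inside each current cluster are $\leq W$, whereas every pair of points taken from different true clusters is at distance $>W$. Thus for single linkage the criterion $\min_{x \in c, y \in c'}\rho(\compKDEGamma_x,\compKDEGamma_y)$ is $\leq W$ for candidate merges within a true cluster and $>W$ across true clusters; and for complete linkage the criterion $\max_{x \in c, y \in c'}\rho(\compKDEGamma_x,\compKDEGamma_y)$ is still $\leq W$ for within-true-cluster merges while every across-true-cluster candidate has some pair of distance $>W$. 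In both cases the algorithm exhausts all within-cluster merges first, after which exactly $K$ clusters remain and they coincide with $\sigma_n^{-1}(1),\dots,\sigma_n^{-1}(K)$; cutting at that level recovers $\sigma_n$.

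The main obstacle is constant-tracking: the factor $3$ in front of $W$ in (\ref{eq:suff_conds_sl}) has to absorb exactly two triangle-inequality detours through empirical centers (contributing $2W$ via the convexity bound) plus one additional $W$ of slack needed to turn the between-cluster lower bound into a strict inequality against the within-cluster upper bound $W$, and the factor $2$ in front of $E$ has to absorb the two endpoint errors $\rho(\estCmpMsr_{k,\sigma_n},\trueCmpMsr_k)$ and $\rho(\estCmpMsr_{k',\sigma_n},\trueCmpMsr_{k'})$. Once those constants are verified, the convexity bound and the standard cluster-separation merging argument are both routine.
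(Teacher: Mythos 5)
Your proposal is correct and follows essentially the same route as the paper's proof: define the within-cluster diameter, bound point-to-empirical-center distances by it (the paper uses this implicitly, you make the convexity/Jensen step explicit), apply the five-term triangle inequality through the empirical centers and true components to get the strict within-vs-across separation, and conclude with the standard linkage merging argument; your constant accounting ($3W+2E-2W-2E=W$) matches the paper's. No gaps.
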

\begin{proof}[\textbf{Proof of proposition \ref{prop:single_linkage_recovery}.}]
For any partition $\sigma$, let 
$$\delta = \sup \limits_{k \in [K]} \sup \limits_{i, j' \in \sigma^{-1}(k)} \rho(\compKDEGamma_i, \compKDEGamma_j).$$
We first show that for any partition $\sigma$ satisfying the conditions stated in Proposition \ref{prop:single_linkage_recovery},
\begin{align*}
    \forall l, l' \in [n] \qquad &\sigma(l) = \sigma(l')  \iff \rho(\compKDEGamma_l, \compKDEGamma_{l'}) \leq \delta,  \\
     &\sigma(l) \neq \sigma(l')  \iff \rho(\compKDEGamma_l, \compKDEGamma_{l'}) > \delta.
\end{align*}

Observe that, by definition, 
\begin{equation}
\label{eq:bydef_sl}
    \forall l \neq l' \in [n], \quad \sigma(l) = \sigma(l') \implies \rho(\compKDEGamma_l, \compKDEGamma_{l'}) \leq \delta.
\end{equation}
By subadditivity of $\rho$, for any $l, l' \in [n]$ such that $\sigma(l) = k$, $\sigma(l') = k'$, and $k \neq k'$,  
\begin{align}
\label{eq:subadd_sl}
    \rho(\trueCmpMsr_k, \trueCmpMsr_{k'}) < \rho(\trueCmpMsr_k, \estCmpMsr_k) + \rho(\estCmpMsr_k, \compKDEGamma_l) + \rho(\compKDEGamma_l, \compKDEGamma_{l'}) + \rho(\compKDEGamma_{l'}, \estCmpMsr_{k'}) + \rho(\estCmpMsr_{k'}, \trueCmpMsr_{k'}).
\end{align}
Substituting (\ref{eq:suff_conds_sl}) in (\ref{eq:subadd_sl}), we obtain
\begin{equation}
\label{eq:bysubadd_sl}
    \sigma(l) \neq \sigma(l')  \implies \rho(\compKDEGamma_l, \compKDEGamma_{l'}) > \delta.
\end{equation}

Using the fact that $\rho(\cdot, \cdot) \geq 0$, from (\ref{eq:bydef_sl}) and (\ref{eq:bysubadd_sl}), we have

\begin{align*}
    \forall l, l' \in [n] \qquad &\sigma(l) = \sigma(l')  \iff \rho^2(\compKDEGamma_l, \compKDEGamma_{l'}) \leq \delta^2, \\
     &\sigma(l) \neq \sigma(l')  \iff \rho^2(\compKDEGamma_l, \compKDEGamma_{l'}) > \delta^2 .
\end{align*}

All three linkage algorithms based on the matrix of squared MMD evaluations between the component distributions $\myCurls{\compKDEGamma_l}_{l=1}^{n}$ or alternatively using the sample kernel matrix $G$ (see Lemma \ref{lemma:mmd_kernel_eqv}) would first group the components within the same cluster according to $\sigma$ before grouping components belonging to different clusters according to $\sigma$. Therefore, thresholding the dendrogram to obtain exactly $K$ clusters would recover the underlying partition $\sigma$ upto a permutation over the labels. With a minor modification of the proof, it is easy to see that the Proposition also holds under separbility conditions provided in (\ref{eq:suff_conds_kmeans++_supp}).

\end{proof}

% \begin{theorem}[\textbf{Statistical identifiability w.r.t $\mathcal{E}_{LNK}$}]
% \label{thm:identifiability_lnk}
% For any $\trueMixMsr \in \prob^2_{K}$, if there exists $\epsilon > 0$ s.t,
% \begin{equation}
%     \mathbb{P}_{\sample_n}\left( \sup \limits_{\substack{x, x' \in \sample_n:\\ \truePart(x) = \truePart(x')}} \rho(\compKDEGamma_x, \compKDEGamma_{x'})   > \frac{1}{3} \inf \limits_{k \neq k'} \rho(\trueCmpMsr_k, \trueCmpMsr_{k'}) - \epsilon \right) \stackrel{n \rightarrow \infty}{\longrightarrow} 0,
% \end{equation}
% then $\trueMixMsr$ is statistically identifiable w.r.t $\mathcal{E}_{LNK}$.
% \end{theorem}

\begin{proof}[\textbf{Proof of Theorem \ref{thm:identifiability_lnk}: Consistent recovery of the planted partition by $\mathcal{A}_{\textrm{LNK}}$}]

Let $\trueMixMsr$ be any mixing measure for which there exists some $\epsilon > 0$ such that,
\begin{equation}
    \label{eq:suff_conds_sl_supp}
        \mathbb{P}_{\sample_n}\left( \sup \limits_{\substack{x, x' \in \sample_n:\\ \truePart(x) = \truePart(x')}} \rho(\compKDEGamma_x, \compKDEGamma_{x'})   > \frac{1}{3} \inf \limits_{k \neq k'} \rho(\trueCmpMsr_k, \trueCmpMsr_{k'}) - \epsilon \right) \stackrel{n \rightarrow \infty}{\longrightarrow} 0,
\end{equation}
Then, with high probability (w.h.p) over the samples $\sample_n$,
\begin{equation}
     \inf \limits_{k \neq k'} \rho(\trueCmpMsr_k, \trueCmpMsr_{k'}) > 3 \sup \limits_{\substack{x, x' \in \sample_n:\\ \truePart(x) = \truePart(x')}} \rho(\compKDEGamma_x, \compKDEGamma_{x'}) + 3 \epsilon.
\end{equation}
Furthermore, we know that for every $\epsilon > 0$,
\begin{equation}
    \label{eq:conv_prob_supp}
    \mathbb{P}(\rho(\estCmpMsr_{k, \sigma^*}, \trueCmpMsr_k) > \epsilon) \stackrel{n \rightarrow \infty}{\longrightarrow} 0.
\end{equation}
Let $t = 3\epsilon/2$ and $\delta = 1/n$. Then, for every $k \in [K]$, there exists some $N_t \in \N$ such that $\forall$ $n > N_{t,k}$,  
\begin{equation}
     \mathbb{P}(\rho(\estCmpMsr_{k, \sigma^*}, \trueCmpMsr_k) > 3\epsilon / 2) < \frac{1}{n}.
\end{equation}
Let $N_t = \sup_{k \in [K]} N_{t,k}$. For all $n > N_t$, with high probability (w.h.p) over the samples $\sample_n$, 

\begin{equation}
    \inf \limits_{k \neq k'} \rho(\trueCmpMsr_k, \trueCmpMsr_{k'}) > 3 \sup \limits_{\substack{x, x' \in \sample_n:\\ \truePart(x) = \truePart(x')}} \rho(\compKDEGamma_x, \compKDEGamma_{x'}) + 2 \rho(\estCmpMsr_{k, \sigma^*}, \trueCmpMsr_k).
\end{equation}

From Proposition \ref{prop:single_linkage_recovery}, we have that w.h.p over $\sample_n$, kernel single linkage clustering algorithm recovers the true partition $\truePart$ (upto a permutation over the labels).

\end{proof}

\section{Necessary conditions for consistency of \affk \textrm{ }and \alnk. (Proof of Theorem 3)}
\subsection{Proof for $\mathcal{A}_{\text{FFK}}$}
Fix the kernel bandwidth parameter $\zeta>0$. Let $r$, $\epsilon$ and $K$ be small constants that satisfy $1>r>2K>16\epsilon$. Consider the following example in $\mathbb{R}$, where  $\mathcal{U}([a,b])$ denotes the uniform distribution on the real interval $[a,b]$. Let
\begin{equation}
\gamma_1 =m\left(\frac{1}{2}\mathcal{U}([-\epsilon,\epsilon])+\frac{1}{2}\mathcal{U}([r-\epsilon,r+\epsilon])\right)
\end{equation}
and
\begin{equation}
\gamma_2 =m\left(\frac{1}{2}\mathcal{U}([2r-K-\epsilon,2r-K+\epsilon])+\frac{1}{2}\mathcal{U}([3r-K-\epsilon,3r-K+\epsilon])\right).
\end{equation}The mixing measure is given by $\Lambda=\frac{1}{2}\gamma_1+\frac{1}{2}\gamma_2$. The idea is that because $K>0$, the two clusters are just not separated enough.  

To see that $\mathcal{A}_{\text{FFK}}$ fails to recover the planted partition with probability approaching $\frac{1}{2}$, consider the case where the first cluster center is initialized with a point $c_1\in [r-\epsilon,r+\epsilon]$. The farthest first heuristic then chooses a second cluster center $c_2\in [3r-K\epsilon,3r-K+\epsilon]$. Since $K>4\epsilon$, the initial clusters will be given by 
\begin{equation*}
C_1=\{x:x\leq 2r-K+\epsilon\}\quad\text{and}\quad C_2=\{x:x\geq 3r-K-\epsilon\}.
\end{equation*}Consequently, in the first iteration of phase two of the algorithm (compare section \ref{sec:A_FFK}), the new cluster centers satisfy
\begin{equation*}
\tilde c_1\geq \frac{rN_2+(2r-K)N_3}{N_1+N_2+N_3}-\epsilon\quad\text{and}\quad\tilde c_2\geq 3r-K-\epsilon,
\end{equation*}where $N_i$ denotes the number of points within the respective intervals. Now the clusters themselves do not change if
\begin{equation*}
\begin{split}
&(2r-K)+\epsilon-\tilde c_1 \leq \tilde c_2 - (2r-K) -\epsilon\\[6pt]
\iff & \frac{2N_1+N_2}{N_1+N_2+N_3}r-\frac{N_1+N_2}{N_1+N_2+N_3}K\leq r-4\epsilon,
\end{split}
\end{equation*}an event that occurs asymptotically almost surely as the $N_i$ concentrate around their expectation. Conditional on this event, the algorithm terminates with clusters $C_1$ and $C_2$, i.e. it does not recover the planted partition. Due to symmetry, the same holds if the first cluster center is initialized with a point in $ [2r-K-\epsilon,2r-K+\epsilon]$. As $n\to\infty$, the probability to initialize the first cluster center with a point in either $[r-\epsilon,r+\epsilon]$ or $ [2r-K-\epsilon,2r-K+\epsilon]$ approaches $\frac{1}{2}$.

We now show that the condition in the theorem is satisfied, namely that as $n\to\infty$, it holds that
\begin{equation}\label{eq:proof_thm2_statement}
    \frac{\rho(\trueCmpMsr_1, \trueCmpMsr_{2})}{\sup \limits_{\substack{x \in \sample_n}} \rho(\compKDEGamma_x, \estCmpMsr_{\sigma^*(x), \sigma^*})}>4-\hat\epsilon.
\end{equation}
A simple way to evaluate the LHS is to express both numerator and denominator as sums of inner products between Gaussians. We have
\begin{equation*}
\rho(\trueCmpMsr_1,\trueCmpMsr_2)\geq \rho(\estCmpMsr_{1, \sigma^*}, \estCmpMsr_{2, \sigma^*})-\rho(\trueCmpMsr_1,\estCmpMsr_{1, \sigma^*})-\rho(\trueCmpMsr_2,\estCmpMsr_{2, \sigma^*}),
\end{equation*}and as $n\to\infty$ and $\beta\to 0$, the latter two terms converge in probability to 0. Hence, for all $\epsilon_1>0$, it holds that
\begin{equation*}
\rho^2(\trueCmpMsr_1,\trueCmpMsr_2)\geq \rho^2(\estCmpMsr_{1, \sigma^*}, \estCmpMsr_{2, \sigma^*})-\epsilon_1.
\end{equation*}Furthermore, since $\rho^2$ is bounded, for all $n$ large enough
\begin{equation*}
\rho^2(\trueCmpMsr_1,\trueCmpMsr_2)\geq \mathbb{E}\left[\rho^2(\estCmpMsr_{1, \sigma^*}, \estCmpMsr_{2, \sigma^*})\right]-2\epsilon_1.
\end{equation*}
A straightforward if somewhat lengthy calculation shows that
\begin{equation}
\mathbb{E}\left[\rho^2(\hat\gamma_{1, \sigma^\star}, \hat\gamma_{2, \sigma^\star})\right]\geq\frac{2}{\zeta}(2r-K)^2+O(\epsilon)+o(r^4).
\end{equation}Similarly, for the denominator,
\begin{equation}
\sup \limits_{\substack{x \in \sample_n}} \rho^2(\compKDEGamma_x, \estCmpMsr_{\sigma^*(x), \sigma^*})\leq \frac{2}{\zeta}\frac{1}{4}r^2+O(\epsilon).
\end{equation}Hence,
\begin{equation*}
\begin{split}
\frac{\rho^2(\hat\gamma_{1, \sigma^\star}, \hat\gamma_{2, \sigma^\star})}{\sup \limits_{\substack{x \in \sample_n}} \rho^2(\compKDEGamma_x, \estCmpMsr_{\sigma^*(x), \sigma^*})}&\geq \frac{(2r-K)^2+O(\epsilon)+o(r^4)-2\epsilon_1}{\frac{1}{4}r^2+O(\epsilon)}\\
&\geq\frac{16-2\frac{K}{r}+O\left(\frac{\epsilon}{r^2}\right)+o(r^2)+\frac{2\epsilon_1}{r^2}}{1+O\left(\frac{\epsilon}{r^2}\right)}.
\end{split}
\end{equation*}Thus, in order to satisfy \eqref{eq:proof_thm2_statement}, we have to choose $r$  small enough, and $K$, $\epsilon$ and  $\epsilon_1$ small enough in comparison to $r$. We now derive the expression for the numerator. First define the sets $I_1=\{x\in X_n:x\in[-\epsilon,\epsilon]\}$, $I_2=\{x\in X_n:x\in[r-\epsilon,r+\epsilon]\}$, $I_3=\{x\in X_n:x\in[2r-K-\epsilon,2r-K+\epsilon]\}$ and $I_4=\{x\in X_n:x\in[3r-K-\epsilon,3r-K+\epsilon]\}$. Denote $N_i=|I_i|$. We have
\begin{equation*}
\begin{split}
    \rho^2(\hat\gamma_{1, \sigma^*_n}, \hat\gamma_{2, \sigma^*_n})=&<\hat\gamma_{1, \sigma^*_n}, \hat\gamma_{1, \sigma^*_n}>+<\hat\gamma_{2, \sigma^*_n}, \hat\gamma_{2, \sigma^*_n}>-2<\hat\gamma_{1, \sigma^*_n}, \hat\gamma_{2, \sigma^*_n}>\\[6pt]
    =&\frac{\sum_{x,y\in  I_1}<\psi_x,\psi_y>+2\sum_{x\in  I_1,y\in I_2}<\psi_x,\psi_y>+\sum_{x,y\in  I_2}<\psi_x,\psi_y>}{(N_1+N_2)^2}\\
    &+\frac{\sum_{x,y\in  I_3}<\psi_x,\psi_y>+2\sum_{x\in  I_3,y\in I_4}<\psi_x,\psi_y>+\sum_{x,y\in  I_4}<\psi_x,\psi_y>}{(N_3+N_4)^2}\\
    &-2\frac{\sum_{x\in  I_1,y\in I_3}<\psi_x,\psi_y>+\sum_{x\in  I_1,y\in I_4}+\sum_{x\in  I_2,y\in I_3}<\psi_x,\psi_y>+\sum_{x\in  I_2,y\in I_4}<\psi_x,\psi_y>}{(N_1+N_2)(N_3+N_4)}\\
    \geq&\sqrt{\frac{\zeta}{\eta}}\Bigg[\frac{N_1^2(1-\frac{4\epsilon^2}{\eta})+2N_1 N_2(1-\frac{(r+2\epsilon)^2}{\eta})+N_2^2(1-\frac{4\epsilon^2}{\eta})}{(N_1+N_2)^2}\\
    &+\frac{N_3^2(1-\frac{4\epsilon^2}{\eta})+2N_3 N_4(1-\frac{(r+2\epsilon)^2}{\eta})+N_4^2(1-\frac{4\epsilon^2}{\eta})}{(N_3+N_4)^2}\\
    &-2\frac{N_1N_3(1-\frac{(2r-K-2\epsilon)^2}{\eta})+N_1N_4(1-\frac{(3r-K-2\epsilon)^2}{\eta})}{(N_1+N_2)(N_3+N_4)}\\
    &-2\frac{N_2N_3(1-\frac{(r-K-2\epsilon)^2}{\eta})+N_2N_4(1-\frac{(2r-K-2\epsilon)^2}{\eta})}{(N_1+N_2)(N_3+N_4)}\Bigg]+o(r^4)
    \end{split}
\end{equation*}
Where we used \eqref{eqn:inner_kme} and the Taylor expansion $e^x=1+x+o(x^2)$. The inequality sign stems from the fact that we have replaced the exact locations of sampled points with interval boundaries. Taking expectations, 
\begin{equation*}
\begin{split}
\mathbb{E}\left[\rho^2(\hat\gamma_{1, \sigma^*_n}, \hat\gamma_{2, \sigma^*_n})\right]\geq&\sqrt{\frac{\zeta}{\eta}}\frac{1}{\eta}\Bigg[\frac{-4\epsilon^2-2(r+2\epsilon)^2-4\epsilon^2}{4}+
    \frac{-4\epsilon^2-2(r+2\epsilon)^2)-\epsilon^2}{4}\\
    &+2\frac{(2r-K-2\epsilon)^2+(3r-K-2\epsilon)^2}{4}
    +2\frac{(r-K-2\epsilon)^2+(2r-K-2\epsilon)^2}{4}\Bigg]+o(r^4)\\
    =&\frac{2}{\eta}\sqrt{\frac{\zeta}{\eta}}(2r-K)^2+O(\epsilon)+o(r^4).
\end{split}
\end{equation*}%which shows that
%\begin{equation*}
%\mathbb{E}\left[\rho(\hat\gamma_{1, \sigma^*_n}, \hat\gamma_{2, \sigma^*_n})\right]\geq\sqrt{\frac{2}{\zeta}}(2r-K)+O(\sqrt{\epsilon})+o(r^2).
%\end{equation*}

We now derive the expression for the denominator. By symmetry, it suffices to consider the case $x\in[-\epsilon,\epsilon]$.
\begin{equation*}
\begin{split}
    &\rho\left(\compKDEGamma_x, \frac{1}{N_1+N_2}\left(\sum_{x'\in[-\epsilon,\epsilon]} \compKDEGamma_{x'} + \sum_{x'\in[r-\epsilon,r+\epsilon]}\compKDEGamma_{x'}
    \right)\right)\\
    =&\frac{1}{N_1+N_2}||\sum_{x'\in[-\epsilon,\epsilon]}(\compKDEGamma_{x'}-\compKDEGamma_{x}) + \sum_{x'\in[r-\epsilon,r+\epsilon]}(\compKDEGamma_{x'}-\compKDEGamma_{x}) ||\\
    \leq&\frac{N_1}{N_1+N_2}\rho(\compKDEGamma_{-\epsilon},\compKDEGamma_{\epsilon})+\frac{N_2}{N_1+N_2}\rho(\compKDEGamma_{-\epsilon},\compKDEGamma_{r+\epsilon})\\
    \leq& \rho(\compKDEGamma_{-\epsilon},\compKDEGamma_{+\epsilon})+\frac{N_2}{N_1+N_2}\rho(\compKDEGamma_{0},\compKDEGamma_{r})\\
    =&\sqrt{2\sqrt{\frac{\zeta}{\eta}}\left(1-e^{-\frac{4\epsilon^2}{\eta}}\right)}+\frac{N_2}{N_1+N_2}\sqrt{2\sqrt{\frac{\zeta}{\eta}}\left(1-e^{-\frac{r^2}{\eta}}\right)}\\
    \leq&\frac{N_2}{N_1+N_2}r\sqrt{\frac{2}{\eta}}\sqrt[4]{\frac{\zeta}{\eta}}+O(\epsilon)
\end{split}
\end{equation*}
where we used \eqref{eq:mmd_for_gaussians} and the inequality $1-e^{-x}\leq x$. It follows that asymptotically almost surely
\begin{equation*}
\sup \limits_{\substack{x \in \sample_n}} \rho^2(\compKDEGamma_x, \estCmpMsr_{\sigma^*(x), \sigma^*})\leq \frac{2}{\eta}\sqrt{\frac{\zeta}{\eta}}\frac{1}{4}r^2+O(\epsilon).
\end{equation*}

\subsection{Proof for $\mathcal{A}_{\text{LNK}}$}
Consider the same example as in the above proof for $\mathcal{A}_{\text{FFK}}$. At first, a hierarchical linkage algorithm (compare section \ref{sec:A_LNK}) will merge all points within $2\epsilon$-intervals. This leaves us with 4 trees. Then, the linkage algorithm does \textit{not} return the planted partition if the trees belonging to the intervals $[r-\epsilon,r+\epsilon]$ and $[2r-K\epsilon,2r-K+\epsilon]$ are merged in the next step. For $r\gg K\gg\epsilon$, it can be easily seen that this is the case.

\section{Statistical identifiability with respect to \ectr, \effk, and \elnk}

\begin{proof} [\textbf{Proof of Theorem 5: Consistency implies statistical identifiability}]
Let $\trueMixMsr$ be 

For appropriate choice of bandwidths, we know that 
\begin{equation}
\label{eq:regularity_conv_supp}
    \lim \limits_{n \rightarrow \infty} \rho(\estCmpMsr_{k , \sigma^*_n}, \trueCmpMsr_k) \stackrel{\mathbb{P}}{=} 0 \qquad \textrm{and} \qquad \lim \limits_{n \rightarrow \infty} \abs{\estCmpwgt_{k, \sigma^*_n} - \trueCmpwgt_k} \stackrel{\mathbb{P}}{=} 0.
\end{equation}

From \citet[Lemma A.3]{aragam2018identifiability}, convergence of component measures and the corresponding component weights implies that the sequence of estimators defined by $\estMixMsr = \sum \limits_{i=1}^{K} \estCmpwgt_{k, \sigma^*_n} \delta_{\estCmpMsr_{k, \sigma^*_n}}$ converges in probability to the true mixing measure $\trueMixMsr$ w.r.t the Wasserstein metric.
\end{proof}

% \begin{theorem}[\textbf{Statistical identifiability w.r.t $\mathcal{E}_{CTR}$}]
% \label{thm:identifiability_ctr}
% For any $\trueMixMsr \in \prob^2_{K}$, if there exists $\epsilon > 0$ s.t,
% \begin{equation}
%     \mathbb{P}_{\sample_n}\Big (  \frac{1}{4} \inf \limits_{k \neq k'} \rho(\trueCmpMsr_k, \trueCmpMsr_{k'}) < \sup \limits_{\substack{x \in \sample_n}} \rho(\compKDEGamma_x, \estCmpMsr_{\sigma^*(x), \sigma^*}) + \epsilon \Big ) \stackrel{n \rightarrow \infty}{\longrightarrow} 0,
% \end{equation}
% then $\trueMixMsr$ is statistically identifiable w.r.t $\mathcal{E}_{CTR}$.
% \end{theorem}
% \begin{proof}
% For appropriate choice of bandwidths, we know that 
% \begin{equation}
% \label{eq:regularity_conv_supp}
%     \lim \limits_{n \rightarrow \infty} \rho(\estCmpMsr_{k , \sigma^*_n}, \trueCmpMsr) \stackrel{\mathbb{P}}{=} 0 \qquad \textrm{and} \qquad \lim \limits_{n \rightarrow \infty} \abs{\estCmpwgt_{k, \sigma^*_n} - \trueCmpwgt} \stackrel{\mathbb{P}}{=} 0.
% \end{equation}

% From \citet[Lemma A.3]{aragam2018identifiability}, this implies that the sequence of estimators defined by $\estMixMsr = \sum \limits_{i=1}^{K} \estCmpwgt_{k, \sigma^*_n} \delta_{\estCmpMsr_{k, \sigma^*_n}}$ converges in probability to the true mixing measure $\trueMixMsr$ w.r.t the Wasserstein metric.
% \end{proof}

\section{Estimating the Bayes partition}
Given a finite sample $\nSample$, let $\widehat{\sigma}$ denote the partition generated by a kernel clustering algorithm $\mathcal{A}.$ We can define an estimator of the Bayes partition function $\widehat{\sigma}_b:\R^d \rightarrow [K]$ in the natural way:

\begin{equation}
\label{eqn:reassignment_step_supp}
    \widehat{\sigma}_b(x) = \argsup \limits_{k \in [K]} \sum \limits_{j: \widehat{\sigma}(j) = k} G_{\beta}(x, x_j) \stackrel{(*)}{=} \argsup \limits_{k \in [K]} \estCmpwgt_{k, \widehat{\sigma}} \widehat{f}_{k, \widehat{\sigma}}(x)
\end{equation}
where $(*)$ follows from Lemma \ref{lemma:mmd_kernel_eqv}. Due to the equivalence between kernel clustering and density-based clustering, we can show that if a kernel-based algorithm $\mathcal{A}$ can consistently recover the planted partition, then by means of a single reassignment step given by (\ref{eqn:reassignment_step_supp}), the algorithm  consistently recovers the Bayes partition. 

\textbf{Exceptional set.} Given $\trueMixMsr = \sum_{k \in [K]} \trueCmpwgt_k \delta_{\trueCmpMsr_k}$, for any $t > 0$, we define the exceptional set
\begin{equation*}
\label{eq:exceptional_set_supp}
    E(t) = \bigcup \limits_{k \neq k'} \myCurls{x \in \R^d : \abs{\trueCmpwgt_k f_k(x) - \trueCmpwgt_{k'} f_{k'}(x)} \leq t}.
\end{equation*}
\begin{theorem}[\textbf{Estimating the Bayes partition}]
\label{thm:Bayes_consistency_supp}
Let $\zeta$, and $\beta$ be bandwidth parameters satisfying the conditions provided in Theorem \ref{thm:consistency_sufficient}. Let $\trueMixMsr \in \prob^2_{K}$ satisfying the conditions provided in (\ref{eq:suff_conds_ctr_supp}). For $\nSample \sim m(\trueMixMsr)^{n}$ and let $\widehat{\sigma}_{b, n}$ be the partition function obtained by \actr, \affk \textrm{ or } \alnk \textrm{ } followed by the reassignment step in (\ref{eqn:reassignment_step_supp}). Then, w.h.p over the samples, there exists a sequence $\myCurls{t_n} \stackrel{n \rightarrow \infty}{\longrightarrow} 0$ such that $\widehat{\sigma}_n(x) = \BayesPart(x)$ for all $x \in \R^d - E_0(t_n).$
\end{theorem}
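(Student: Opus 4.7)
\textbf{Proof plan for Theorem \ref{thm:Bayes_consistency_supp}.} The plan is to combine three ingredients: (i) consistent recovery of the planted partition by $\mathcal{A}\in\{\actr,\affk,\alnk\}$ under the separation hypothesis, (ii) uniform consistency of the per-cluster weighted kernel density estimates, and (iii) a stability argument for the argmax that holds outside the exceptional set $E(t_n)$.

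First I would invoke Theorem \ref{thm:consistency_sufficient} to obtain that, with probability approaching $1$, the partition $\widehat{\sigma}_n$ produced by $\mathcal{A}$ coincides with the planted partition $\sigma^*_n$ up to a permutation of $[K]$. Relabelling if necessary, we may therefore assume $\widehat{\sigma}_n = \sigma^*_n$ on the event of recovery. On this event, the quantity appearing in the reassignment rule (\ref{eqn:reassignment_step_supp}) simplifies, via the identity in $(*)$, to
\begin{equation*}
    \estCmpwgt_{k,\widehat{\sigma}_n}\,\widehat{f}_{k,\widehat{\sigma}_n}(x) \;=\; \frac{1}{n}\sum_{j:\sigma^*_n(j)=k} G_\beta(x,x_j),
\end{equation*}
which is precisely the standard kernel density estimator of $\trueCmpwgt_k f_k$ built from the subsample drawn from component $k$, with $|(\sigma^*_n)^{-1}(k)|/n\to\trueCmpwgt_k$ by the strong law.

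Next I would use the uniform consistency of the KDE in $\ell_\infty$ under the bandwidth conditions (\ref{eq:bandwidth_supp}), as cited by \citet{gine2002rates,einmahl2005uniform}, applied to each component separately. This gives a (random) sequence $\varepsilon_n \stackrel{\mathbb{P}}{\longrightarrow} 0$ such that, w.h.p.,
\begin{equation*}
    \max_{k\in[K]}\sup_{x\in\R^d}\,\Bigl|\,\estCmpwgt_{k,\widehat{\sigma}_n}\widehat{f}_{k,\widehat{\sigma}_n}(x)\,-\,\trueCmpwgt_k f_k(x)\,\Bigr| \;\le\; \varepsilon_n.
\end{equation*}
I would then set $t_n := 3\varepsilon_n$ (or any sequence converging to $0$ slower than $\varepsilon_n$), so that $t_n\to 0$ while $t_n/\varepsilon_n \to\infty$.

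Finally, I would carry out the argmax-stability step. Fix $x\notin E(t_n)$. By definition of the exceptional set, for every pair $k\neq k'$ one has $|\trueCmpwgt_k f_k(x) - \trueCmpwgt_{k'}f_{k'}(x)| > t_n = 3\varepsilon_n$. Combined with the uniform error bound above, the ordering of the $K$ numbers $\{\estCmpwgt_{k,\widehat{\sigma}_n}\widehat{f}_{k,\widehat{\sigma}_n}(x)\}_{k\in[K]}$ must match the ordering of $\{\trueCmpwgt_k f_k(x)\}_{k\in[K]}$; in particular the maximizers agree, so $\widehat{\sigma}_{b,n}(x) = \BayesPart(x)$. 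Since this holds simultaneously for all $x \notin E(t_n)$ on a single high-probability event (the intersection of the planted-recovery event and the uniform KDE event), the conclusion follows.

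The main obstacle is really just bookkeeping: ensuring that the random label permutation from recovery is handled coherently across all $K$ components, and that the KDE uniform-convergence rate $\varepsilon_n$ can genuinely be made $o(1)$ under the stated bandwidth conditions without additional smoothness assumptions on $f_k$. The former is trivial after relabelling; the latter is exactly what the bandwidth regime in (\ref{eq:bandwidth_supp}) buys us via the standard KDE references. No quantitative rate for $t_n$ is promised by the theorem statement, so it suffices to exhibit \emph{some} $t_n\to 0$, which the argument above accomplishes.
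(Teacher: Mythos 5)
Your proposal is correct and follows essentially the same route as the paper's proof: invoke Theorem \ref{thm:consistency_sufficient} to get $\widehat{\sigma}_n=\sigma^*_n$ w.h.p., use uniform ($\ell_\infty$) consistency of the per-cluster weighted KDEs to define $t_n$ as a constant multiple of the uniform error, and conclude by the same argmax-stability comparison outside $E(t_n)$ (the paper takes $t_n$ equal to twice the uniform error rather than three times, an immaterial difference in the constant).
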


% \begin{theorem}[\textbf{Consistency of $\mathcal{A}_{LNK}$}]
% \label{thm:consistency_Alnk}
% Let $\trueMixMsr \in \prob^2_K$ be a mixing measure satisfying the separability conditions provided in Equation (\ref{eq:suff_conds_sl_supp}). Let $\nSample \sim m(\trueMixMsr)^{\N}$ and let the function $b^*:\R^d \rightarrow [K]$ be the Bayes partition function defined as $\sigma_{Bayes}(x) = \argmax \limits_{k \in [K]} \trueCmpwgt_k f_k(x)$. For any $t > 0$, define an exceptional set
% \begin{equation*}
%     E_0(t) = \bigcup \limits_{k \neq k'} \myCurls{x \in \R^d : \abs{\trueCmpwgt_k f_k(x) - \trueCmpwgt_{k'} f_{k'}(x)} \leq t}
% \end{equation*}
% and let $\widehat{\sigma}_n$ be the partition function obtained by $\mathcal{A}_{LNK}(\sample)$. Then, w.h.p over the samples, there exists a sequence $\myCurls{t_n} \stackrel{n \rightarrow \infty}{\longrightarrow} 0$ such that $\widehat{\sigma}_n(x) = \sigma_{Bayes}(x)$ for all $x \in \R^d - E_0(t_n).$
% \end{theorem}
% From Lemma \ref{lemma:mmd_kernel_eqv}, we know that, given a sample $\sample$, the partition $\widehat{\sigma}_n$ generated by $\mathcal{A}_{LNK}$ is the Bayes classifier corresponding to the estimated mixing measure $\estMixMsr = \sum \limits_{k=1}^K \estCmpwgt_{k, \widehat{\sigma}_n} \estCmpMsr_{k , \widehat{\sigma}_n}$.

\begin{proof}[\textbf{Proof of Theorem \ref{thm:Bayes_consistency}}]
The proof of this Proposition is adapted with minor changes from the proof of \citet[Theorem 5.2]{aragam2018identifiability}. For this reason, we borrow some of the notation from \citet{aragam2018identifiability}. Since $\trueMixMsr$ satisfies the separability conditions given in equation (\ref{eq:suff_conds_sl_supp}), from Theorem \ref{thm:consistency_sufficient}, we know that w.h.p over the samples the algorithms \actr, \affk, and \alnk recover the planted partition  up to a permutation over the labels, that is, $\widehat{\sigma} = \sigma^*$. For appropriate choice of bandwidths, we know that w.h.p over the samples,
\begin{equation}
\label{eq:densities_unif_conv}
    \lim \limits_{n \rightarrow \infty} f_{k, \sigma^*} \stackrel{\mathbb{P}}{=} f_k,
\end{equation}
where the convergence is defined pointwise and uniformly over $\R^d$.

% \begin{equation}
% \label{eq:reg_stats}
%     \lim \limits_{n \rightarrow \infty} \rho(\estCmpMsr_{k , \widehat{\sigma}_n}, \trueCmpMsr) = 0 \qquad \textrm{and} \qquad \lim \limits_{n \rightarrow \infty} \abs{\estCmpwgt_{k, \widehat{\sigma}_n} - \trueCmpwgt} = 0
% \end{equation}
% with probability $1$. Since the estimated component distributions converge in MMD (w.r.t a Gaussian kernel) to the true component distributions, from \citet[Theorem 4.2]{simon2020metrizing} and \citet[Proposition 5]{sriperumbudur2012consistency}, one can show that the estimated component distributions weakly converge to the true component distribution. 

% Furthermore, from Lemma \ref{lemma:equicontinuity}, we have the sequence of component density functions is uniformly equicontinuous. Therefore, from \citet[Theorem 2]{sweeting1986converse}, we have that the corresponding density functions, pointwise, uniformly converge over $\R^d$, i.e.,
% \begin{equation}
%     \label{eq:densities_unif_conv}
%     \lim \limits_{n \rightarrow \infty} \widehat{f}_{k, \sigma^*_n} = f_k.
% \end{equation}
Let, 
\begin{equation}
\label{eq:def_tn}
    t_n = 2 \sup \limits_{k \in [K]} \sup \limits_{x \in \R^d} \abs{\estCmpwgt_{k, \sigma^*_n} \widehat{f}_{k, \sigma^*_n}(x) - \trueCmpwgt_k f_k(x)} \geq 0.
\end{equation}
From (\ref{eq:densities_unif_conv}), we know that $t_n \stackrel{\mathbb{P}}{\longrightarrow} 0.$ Moreover, by definition, we have that 
\begin{equation}
\label{eq:diff_fk}
    \abs{\trueCmpwgt_k f_k(x) - \trueCmpwgt_{k'} f_{k'}(x)} > t_n \implies \trueCmpwgt_{\sigma_{Bayes}(x)} f_{\sigma_{Bayes}(x)}(x) > \trueCmpwgt_{k} f_{k}(x) + t_n \; \forall x \not \in E_0(t_n), \; k \neq \sigma_{Bayes}(x).
\end{equation}
Therefore, it follows that for any $x \in \R^D -  E_0(t_n)$ and any $k \neq \sigma_{Bayes}(x)$,
\begin{equation}
  \estCmpwgt_{\sigma_{Bayes}(x), \sigma^*_n}  \widehat{f}_{\sigma_{Bayes}(x), \sigma^*_n}(x)  \stackrel{(1)}{>}  \trueCmpwgt_{\sigma_{Bayes}(x)} f_{\sigma_{Bayes}(x)} - \frac{t_n}{2} \stackrel{(2)}{>} \trueCmpwgt_{k} f_k(x) + \frac{t_n}{2} \stackrel{(3)}{>} \estCmpwgt_{k, \sigma^*_n}  \widehat{f}_{k, \sigma^*_n}(x),
\end{equation}
where, (1) and (3) follow from (\ref{eq:def_tn}) and (2) follows from (\ref{eq:diff_fk}). This implies that $\widehat{\sigma}_b(x) = \argsup \limits_{k \in [K]} \estCmpwgt_{k, \sigma^*} \estCmpdensity_{k, \sigma^*} (x) =  \sigma_{Bayes}(x)$ for all $ x \not \in E_0(t_n).$
\end{proof}

% \textbf{Discussion - Comparing with Byron et al}
% \begin{itemize}
%     \item We have significantly simplified the framework in Byron et al. 
%     \item Sufficient and necessary conditions for a range of algorithms and optimality of the conditions. 
%     \item Improved sufficient conditions. 
%     \item Everything else that is our main contribution - consistency of kernel clustering, relationship between distribution clustering and data clustering.
%     \item Natural view of generalization to clustering -- this is not a contribution just a random thought about the notion of Bayes partition. 
%     \item Write a remark that we can improve the condition that coefficients are bounded away from $0$ by considering the Bayes partition function instead of true partition to define consistency.
%     %
%     \item 
    
%     \item \textbf{Identifiability.} We don't need to explicitly study identifiability of the mixing measures.  
%     % 
%     \item \textbf{Inductive bias.} \todo{Discuss statistical indentifiability and inductive bias in clustering here. }
% \end{itemize}

\end{document}